\colorlet{MyBlue}{DodgerBlue!75!Black}
\colorlet{MyGreen}{DarkGreen!85!Black}
\renewcommand{\paragraph}{\subsection}
\newcommand{\afterhead}{.}							
\def\EMAIL#1{\email{\href{mailto:#1}{\texttt{\upshape #1}}}}
\numberwithin{equation}{section}						
\DeclarePairedDelimiter{\bracks}{[}{]}					
\DeclarePairedDelimiter{\parens}{(}{)}					
\DeclarePairedDelimiter{\norm}{\lVert}{\rVert}					
\DeclarePairedDelimiterXPP{\dnorm}[1]{}{\lVert}{\rVert}{_{\ast}}{#1}	
\DeclarePairedDelimiterX{\braket}[2]{\langle}{\rangle}{#1,#2}		
\DeclarePairedDelimiterX{\product}[2]{\langle}{\rangle}{#1,#2}		
\DeclarePairedDelimiterX{\setdef}[2]{\{}{\}}{#1:#2}					
\DeclarePairedDelimiterXPP{\exclude}[1]{\mathopen{}\setminus}{\{}{\}}{}{#1}
\newcommand{\cf}{cf.\xspace}							
\newcommand{\eg}{e.g.,\xspace}						
\newcommand{\ie}{i.e.,\xspace}						
\newcommand{\textpar}[1]{\textup(#1\textup)}					
\newcommand{\txs}{\textstyle}						
\newcommand{\acc}[1][x]{\hat#1}						
\newcommand{\alt}[1]{#1'}							
\newcommand{\dual}[1]{#1^{\ast}}						
\newcommand{\est}[1]{\hat #1}						
\newcommand{\sol}[1][\state]{#1^{\ast}}					
\DeclareMathOperator{\bigoh}{\mathcal O}						
\newcommand{\R}{\mathbb{R}}						
\newcommand{\vdim}{\debug d}							
\newcommand{\vecspace}{\mathcal{\debug V}}					
\newcommand{\dspace}{\mathcal{\debug Y}}					
\newcommand{\state}{\debug x}							
\newcommand{\dstate}{\debug y}							
\DeclareMathOperator{\diam}{diam}					
\newcommand{\base}{\debug p}							
\newcommand{\cpt}{\mathcal{\debug C}}						
\newcommand{\mat}{\debug M}							
\newcommand{\subd}{\partial}						
\newcommand{\feas}{\mathcal{\debug X}}						
\newcommand{\intfeas}{\feas^{\circ}}					
\newcommand{\sols}{\sol[\feas]}
\newcommand{\Lip}{\debug L}							
\newcommand{\obj}{\debug f}							
\newcommand{\gvec}{\debug g}							
\newcommand{\play}{\debug i}							
\newcommand{\nPures}{\debug A}							
\newcommand{\pures}{\mathcal{\nPures}}					
\newcommand{\payv}{\debug v}							
\newcommand{\eq}{\sol}							
\newcommand{\fingame}{\debug \Gamma}						
\DeclareMathOperator{\Eucl}{\debug \Pi}						
\newcommand{\breg}{\debug D}							
\newcommand{\hreg}{\debug h}							
\newcommand{\mirror}{\debug Q}							
\newcommand{\prox}{\debug P}							
\newcommand{\strong}{\debug K}							
\newcommand{\new}[1]{#1^{+}}						
\newcommand{\act}{\debug X}							
\newcommand{\step}{\debug \gamma}						
\newcommand{\dynfield}{\debug V}							
\DeclareMathOperator{\ex}{\mathbb{E}}					
\DeclareMathOperator{\prob}{\mathbb{P}}				
\DeclareMathOperator{\simplex}{\Delta}					
\newcommand{\as}{\textpar{a.s.}\xspace}				
\newcommand{\dkl}{D_{\mathrm{KL}}}					
\newcommand{\filter}{\mathcal{\debug F}}						
\newcommand{\noise}{\debug U}							
\newcommand{\snoise}{\debug \xi}							
\newcommand{\noisedev}{\debug \sigma}						
\newcommand{\noisevar}{\noisedev^{2}}					
\providecommand\given{}							
\DeclarePairedDelimiterXPP{\exof}[1]{\ex}{[}{]}{}{
\renewcommand\given{\nonscript\,\delimsize\vert\nonscript\,\mathopen{}} #1}
\DeclarePairedDelimiterXPP{\probof}[1]{\prob}{(}{)}{}{
\renewcommand\given{\nonscript\,\delimsize\vert\nonscript\,\mathopen{}} #1}
\DeclareMathOperator*{\argmax}{arg\,max}						
\DeclareMathOperator*{\argmin}{arg\,min}						
\DeclareMathOperator{\dom}{dom}						
\newcommand{\from}{\colon}							
\newcommand{\start}{\debug 1}							
\newcommand{\running}{\debug 1,2,\dotsc}						
\newcommand{\run}{\debug n}							
\newcommand{\runalt}{\debug k}							
\newcommand{\debug}[1]{#1}						
\newcommand{\revise}[1]{#1}						
\theoremstyle{plain}
\newtheorem{theorem}{Theorem}						
\newtheorem*{theorem*}{Theorem}						
\newtheorem{corollary}[theorem]{Corollary}					
\newtheorem*{corollary*}{Corollary}						
\newtheorem{lemma}[theorem]{Lemma}					
\newtheorem{proposition}[theorem]{Proposition}				
\theoremstyle{definition}
\newtheorem{definition}[theorem]{Definition}				
\newtheorem*{definition*}{Definition}					
\newtheorem*{assumption*}{Assumptions}					
\theoremstyle{remark}
\newtheorem*{remark*}{Remark}						
\newtheorem{example}{Example}						
\newtheorem*{example*}{Example}						
\numberwithin{equation}{section}						
\numberwithin{theorem}{section}						
\numberwithin{remark}{section}						
\numberwithin{example}{section}						
\newcommand{\gbound}{\debug G}							
\newcommand{\farbound}{\debug a}							
\begin{document}


\title[Optimistic Mirror Descent in Saddle-Point Problems]
{Optimistic Mirror Descent in Saddle-Point Problems:\\
Going the Extra (Gradient) Mile}

\author
[P.~Mertikopoulos]
{Panayotis Mertikopoulos$^{\ast}$}
\address{$^{\ast}$
Univ. Grenoble Alpes, CNRS, Inria, LIG, 38000 Grenoble, France.}
\EMAIL{panayotis.mertikopoulos@imag.fr}

\author
[B.~Lecouat]
{Bruno Lecouat$^{\ddag}$}
\EMAIL{bruno\textunderscore lecouat@i2r.a-star.edu.sg} 

\author
[H.~Zenati]
{Houssam Zenati$^{\ddag}$}
\address{$^{\ddag}$ Institute for Infocomm Research, A*STAR, 1 Fusionopolis Way, \#21-01 Connexis (South Tower), Singapore.}
\EMAIL{houssam\textunderscore zenati@i2r.a-star.edu.sg} 

\author
[C.-S.~Foo]
{\\Chuan-Sheng Foo$^{\ddag}$}
\EMAIL{foocs@i2r.a-star.edu.sg}

\author
[V.~Chandrasekhar]
{Vijay Chandrasekhar$^{\ddag \P}$}
\address{$^{\P}$ Nanyang Technological University, 50 Nanyang Ave, Singapore.}
\EMAIL{vijay@i2r.a-star.edu.sg} 

\author
[G.~Piliouras]
{Georgios Piliouras$^{\S}$}
\address{$^{\S}$ Singapore University of Technology and Design, 8 Somapah Road, Singapore.}
\EMAIL{georgios@sutd.edu.sg}

\thanks{
%
%
P.~Mertikopoulos was partially supported by
the French National Research Agency (ANR) grant
ORACLESS (ANR\textendash 16\textendash CE33\textendash 0004\textendash 01).
G.~Piliouras would like to acknowledge SUTD grant SRG ESD 2015 097, MOE AcRF Tier 2 Grant  2016-T2-1-170
and NRF fellowship NRF-NRFF2018-07. This work was partly funded by the deep learning 2.0 program at A*STAR.}


\newcommand{\acdef}[1]{\textit{\acl{#1}} \textup{(\acs{#1})}\acused{#1}}		
\newcommand{\acdefp}[1]{\emph{\aclp{#1}} \textup(\acsp{#1}\textup)\acused{#1}}	

\newacro{MWU}{multiplicative weights update}
\newacro{AI}{artificial intelligence}
\newacro{GMM}{Gaussian mixture model}
\newacro{PCA}{principal component analysis}
\newacro{SP}{saddle-point}
\newacro{FTRL}{follow-the-regularized-leader}
\newacro{OMD}{optimistic mirror descent}
\newacro{OGD}{optimistic gradient descent}
\newacro{GD}{gradient descent}
\newacro{MD}{mirror descent}
\newacro{SPMD}{saddle-point mirror descent}
\newacro{DA}{dual averaging}
\newacro{DGF}{distance-generating function}
\newacro{MW}{multiplicative weights}
\newacro{KL}{Kullback\textendash Leibler}
\newacro{MP}{mirror-prox}
\newacro{SMP}{stochastic mirror-prox}
\newacro{MDS}{martingale difference sequence}
\newacro{GAN}{generative adversarial network}
\newacro{LHS}{left-hand side}
\newacro{RHS}{right-hand side}
\newacro{iid}[i.i.d.]{independent and identically distributed}
\newacroplural{iid}[i.i.d.]{independent and identically distributed}
\newacro{NE}{Nash equilibrium}
\newacroplural{NE}[NE]{Nash equilibria}
\newacro{VI}{variational inequality}
\newacroplural{VI}{variational inequalities}

\begin{abstract}
%
%
Owing to their connection with \acp{GAN}, \acl{SP} problems have recently attracted considerable interest in machine learning and beyond.
By necessity, most theoretical guarantees revolve around convex-concave \revise{(or even linear)} problems;
however, making theoretical inroads towards efficient \ac{GAN} training depends crucially on moving beyond this classic framework.
To make piecemeal progress along these lines, we analyze the behavior of \ac{MD} in a class of non-monotone problems whose solutions coincide with those of a naturally associated \acl{VI} \textendash\ a property which we call \emph{coherence}.
\revise{We first show that ordinary, ``vanilla'' \ac{MD} converges under a strict version of this condition, but not otherwise;
in particular, it may fail to converge even in bilinear models with a unique solution.
We then show that this deficiency is mitigated by optimism:
by taking an ``extra-gradient'' step, \ac{OMD} converges in \emph{all} coherent problems.
Our analysis generalizes and extends the results of \cite{DISZ18} for \ac{OGD} in bilinear problems,
and makes concrete headway for provable convergence beyond convex-concave games.
We also provide stochastic analogues of these results, and we validate our analysis by numerical experiments in a wide array of \ac{GAN} models (including \aclp{GMM}, and the CelebA and CIFAR-10 datasets).}
\end{abstract}

\maketitle
\allowdisplaybreaks							
\acresetall								


\vspace{-2em}
\begin{figure}[h]
\centering
\includegraphics[height=.3\textwidth]{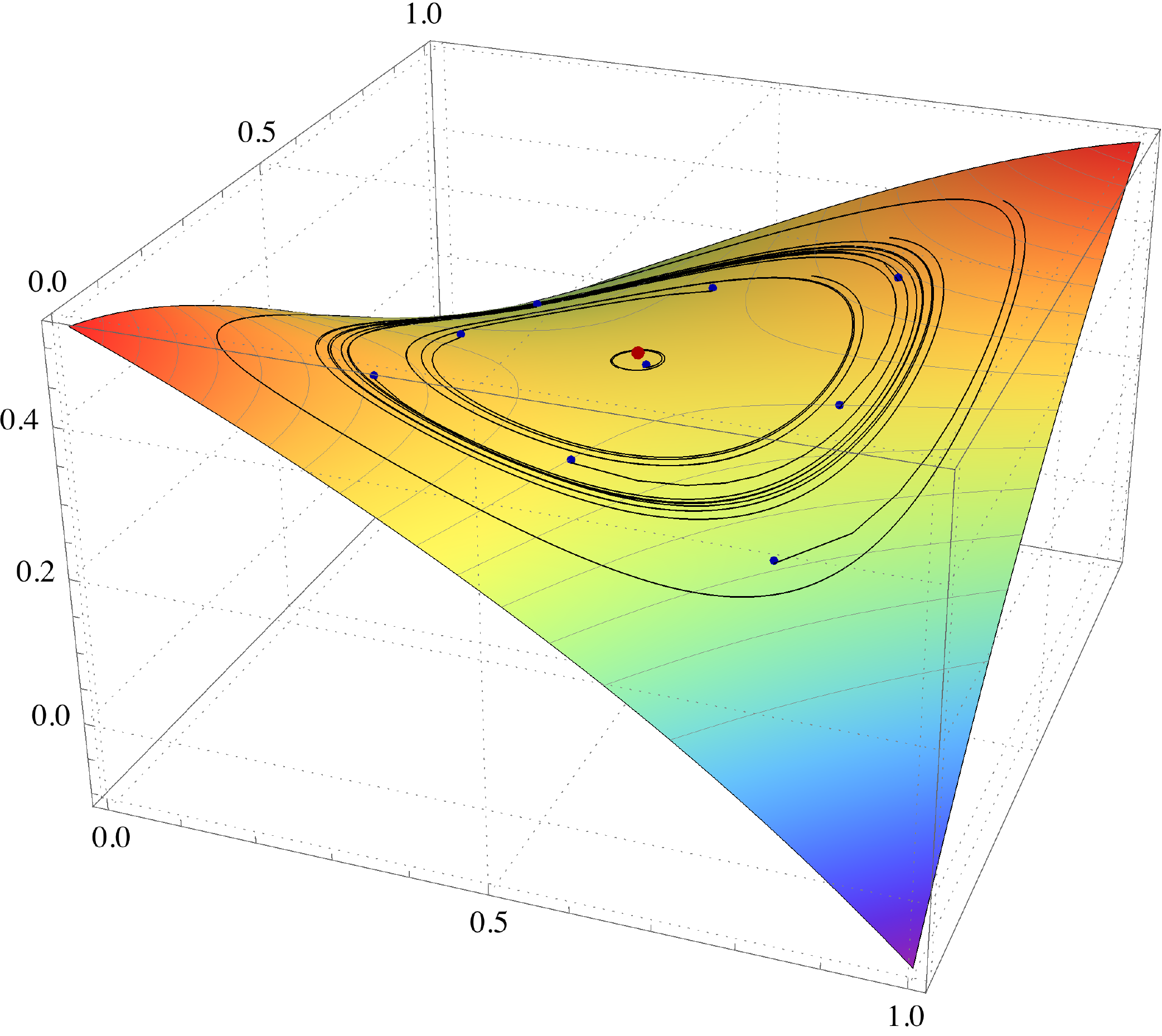}
\qquad
\includegraphics[height=.3\textwidth]{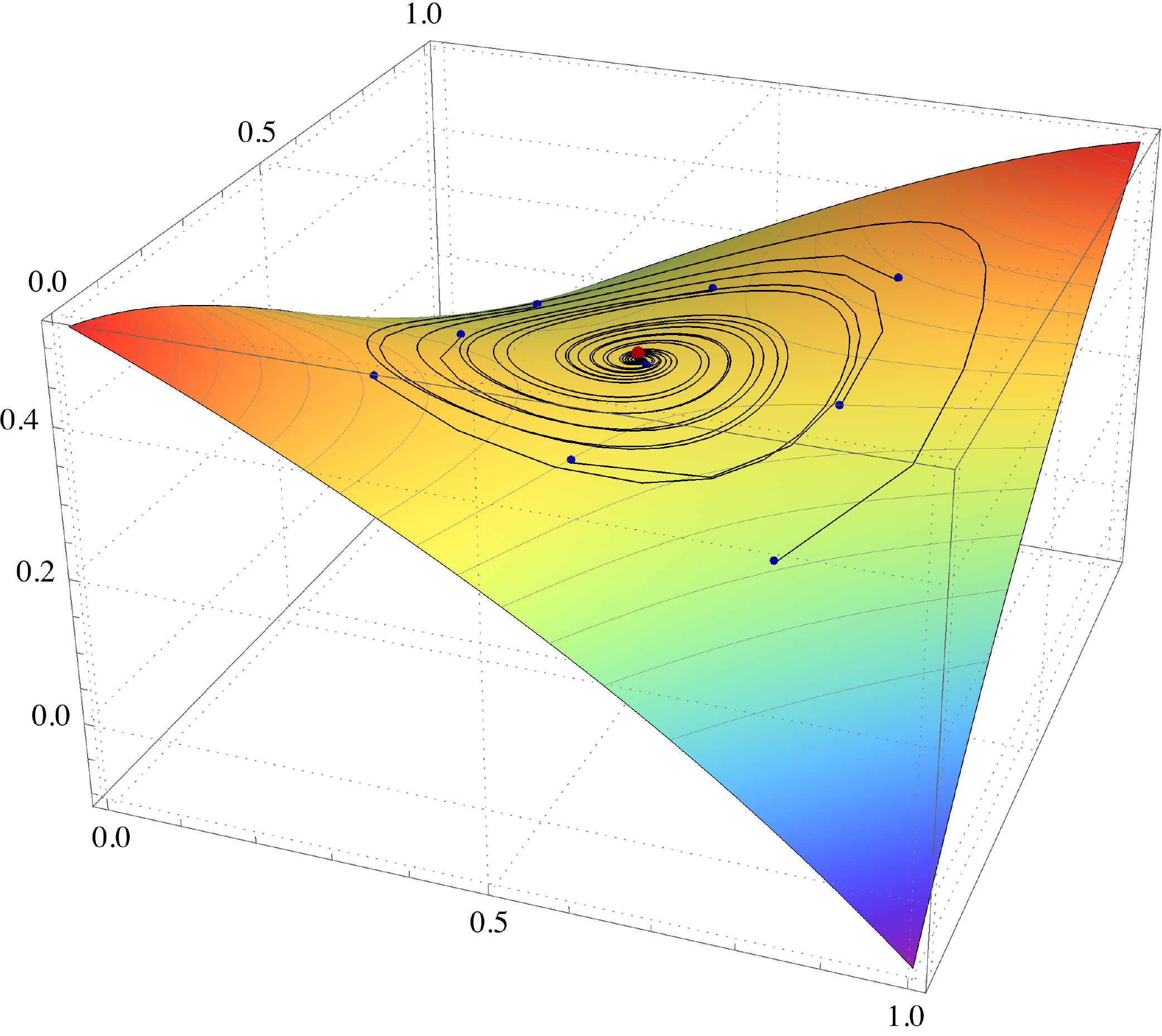}
\caption{%
Mirror descent (\acs{MD}\acused{MD}) in the non-monotone saddle-point problem $\obj(\state_{1},\state_{2}) = (\state_{1} - 1/2) (\state_{2} - 1/2) + \frac{1}{3} \exp(-(\state_{1}-1/4)^{2} - (\state_{2}-3/4)^{2})$.
Left: vanilla \ac{MD} spirals outwards;
right: optimistic \ac{MD} converges.
}%
\label{fig:portraits}
\end{figure}


\section{Introduction}
\label{sec:introduction}

The surge of recent breakthroughs in \ac{AI} has sparked significant interest in solving optimization problems that are universally considered hard.
Accordingly, the need for an effective theory has two different sides:
first, a deeper theoretical understanding would help demystify the reasons behind the success and/or failures of different training algorithms; 
second, theoretical advances can inspire effective algorithmic tweaks leading to concrete performance gains.

Deep learning has been an area of AI where theory has provided a significant boost.
As a functional class, deep learning involves non-convex loss functions for which finding even local optima is NP-hard;
nevertheless, elementary techniques such as gradient descent (and other first-order methods) seem to work fairly well in practice. 
For this class of problems, recent theoretical results have indeed provided useful insights:
using tools from the theory of dynamical systems, \cite{pmlr-v49-lee16,lee2017first} and \cite{panageas2017gradient} showed that a wide variety of first-order methods (including \acl{GD} and \acl{MD}) almost always avoid saddle points.
More generally, the optimization and machine learning communities alike have dedicated significant effort in understanding the geometry of non-convex landscapes by searching for properties which could be leveraged for efficient training.
For example, the well-known ``strict saddle'' property was shown to hold in a wide range of salient objective functions
ranging from
low-rank matrix factorization \citep{ge2016matrix,ge2017no,bhojanapalli2016global}
and
dictionary learning  \citep{sun2015complete2,sun2015complete1},
to
\acl{PCA} \citep{ge2015escaping},
phase retrieval \citep{sun2016phase},
and many other models.

On the other hand, \emph{adversarial} deep learning is nowhere near as well understood,
especially in the case of \acp{GAN} \citep{goodfellow2014generative}.
Despite an immense amount of recent scrutiny, our theoretical understanding cannot boast similar breakthroughs as in the case of ``single-agent'' deep learning.
To make matters worse, \acp{GAN} are notoriously hard to train and standard optimization methods often fail to converge to a reasonable solution.
Because of this, a considerable corpus of work has been devoted to exploring and enhancing the stability of \acp{GAN},
including techniques as diverse as
the use of Wasserstein metrics \citep{wgan}, 
critic gradient penalties \citep{wgan_gp},
different activation functions in different layers,
feature matching,
minibatch discrimination,
etc. \citep{dcgan,improved-gans}.

A key observation in this context is that first-order methods may fail to converge even in toy, bilinear zero-sum games like Rock-Paper-Scissors and Matching Pennies \citep{piliouras2014optimization,CRS16,DISZ18,MGN18,MPP18,bailey2018multiplicative}.
\revise{This is a critical failure of descent methods, but one which \cite{DISZ18} showed can be overcome through ``\emph{optimism}'', interpreted in this context as a momentum adjustment that pushes the training process one step further along the incumbent gradient.
In particular, \cite{DISZ18} showed that \ac{OGD} succeeds in cases where vanilla \ac{GD} fails (specifically, unconstrained bilinear \acl{SP} problems), and leveraged this theoretical result to improve the training of \acp{GAN}.}

A common theme in the above is that, to obtain a principled methodology for training \acp{GAN}, it is beneficial to first establish improvements in a more restricted setting, and then test whether these gains carry over to more demanding learning environments.
Following these theoretical breadcrumbs,
\revise{we focus on a class of non-monotone problems whose solutions coincide with those of a naturally associated \acl{VI}, a property which we call \emph{coherence}.
Then, motivated by the success of \acf{MD} methods in online/stochastic convex programming,
and hoping to overcome the shortcomings of ordinary \acl{GD} by exploiting the problem's geometry,
we examine the convergence of \ac{MD} in coherent problems.
On the positive side, we show that if a problem is \emph{strictly} coherent (a condition that is satisfied by all strictly monotone
problems), \ac{MD} converges almost surely, even in stochastic problems (\cref{thm:MD}).
However, under \emph{null coherence} (the ``saturated'' opposite to strict coherence), \ac{MD} spirals outwards from the problem's solutions and may cycle in perpetuity, even with perfect gradient feedback.
The null coherence property covers all bilinear models, so this result generalizes and extends the recent analysis of \cite{DISZ18} and \cite{bailey2018multiplicative} for \acl{GD} and \ac{FTRL} respectively (for a schematic illustration, see \cref{fig:portraits,fig:portraits-bilinear}).
Thus, in and by themselves, gradient/\acl{MD} methods \emph{do not} suffice for training convoluted, adversarial deep learning models.}

\revise{To mitigate this deficiency, we introduce an extra-gradient step which allows the algorithm to look ahead and take an ``optimistic'' mirror step along a ``future'' gradient.
Following \cite{RS13}, this method is known as \acdef{OMD}, and was first studied under the name ``\acl{MP}'' by \cite{Nem04}.
In convex-concave problems, \cite{Nem04} showed that the so-called ``ergodic average'' of the algorithm's iterates enjoys an $\bigoh(1/\run)$ convergence rate.
In the context of \ac{GAN} training,
\cite{GBVLJ18} further introduced a ``gradient reuse'' mechanism to minimize the computational overhead of back-propagation and proved convergence in stochastic convex-concave problems.
However, beyond the monotone regime, averaging offers no tangible benefits because Jensen's inequality no longer applies;
as a result, moving closer to \acp{GAN} requires changing both the algorithm's output structure as well as the accompanying analysis.}

\revise{Our first result in this direction is that the \emph{last iterate} of \ac{OMD} converges in all coherent problems, \emph{including null-coherent ones}.
As a special case, this generalizes and extends the results of \cite{DISZ18} for \ac{OGD} in bilinear problems, and also settles in the affirmative an issue left open by the authors concerning the convergence of the algorithm in nonlinear problems.
In addition, under the \ac{OMD} algorithm, the (Bregman) distance to a solution decreases \emph{monotonically}, so each iterate is better than the previous one (\cref{thm:OMD}).
Finally, under strict coherence, we also show that \ac{OMD} converges with probability $1$ in stochastic \acl{SP} problems (\cref{thm:OMD-strict}).
These results suggest that a straightforward, extra-gradient add-on can lead to significant performance gains when applied to existing state-of-the-art first-order methods (such as Adam).
This theoretical prediction is validated experimentally in a wide array of \ac{GAN} models (including \aclp{GMM}, and the CelebA and CIFAR-10 datasets) in \cref{sec:experiments}.}

\section{Problem setup and preliminaries}
\label{sec:setup}

\paragraph{Saddle-point problems\afterhead}

Consider a \acl{SP} problem of the general form
\begin{equation}
\label{eq:SP}
\tag{SP}
\min_{\state_{1}\in\feas_{1}} \max_{\state_{2}\in\feas_{2}} \obj(\state_{1},\state_{2}),
\end{equation}
where
each feasible region $\feas_{\play}$, $\play=1,2$, is a compact convex subset of a finite-dimensional normed space $\vecspace_{\play} \equiv \R^{\vdim_{\play}}$,
and
$\obj\from\feas\equiv\feas_{1}\times\feas_{2}\to\R$ denotes the problem's value function.%
\footnote{Compactness is assumed chiefly to streamline our presentation.
The convex-\emph{closed} framework can be dealt with via a coercivity assumption;
however, this would take us too far afield, so we do not pursue this direction.}
From a game-theoretic standpoint, \eqref{eq:SP} can be seen as a \emph{zero-sum game} between two optimizing agents (or \emph{players}):
Player $1$ (the \emph{minimizer}) seeks to incur the least possible loss,
while
Player $2$ (the \emph{maximizer}) seeks to obtain the highest possible reward \textendash\ both given by $\obj(\state_{1},\state_{2})$.

To obtain a solution of \eqref{eq:SP}, we will focus on incremental processes that exploit the individual loss/reward gradients of $\obj$ (assumed throughout to be at least $C^{1}$-smooth).
Since the individual gradients of $\obj$ will play a key role in our analysis, we will encode them in a single vector as
\begin{equation}
\label{eq:paygrad}
\gvec(\state)
	= (\gvec_{1}(\state),\gvec_{2}(\state))
	= (\nabla_{\state_{1}}\obj(\state_{1},\state_{2}),-\nabla_{\state_{2}}\obj(\state_{1},\state_{2})),
\end{equation}
and, following standard conventions, we will treat $\gvec(\state)$ as an element of $\dspace \equiv \dual\vecspace$, the dual of the ambient space $\vecspace \equiv \vecspace_{1}\times\vecspace_{2}$, assumed to be endowed with the product norm $\norm{\state}^{2} = \norm{\state_{1}}^{2} + \norm{\state_{2}}^{2}$.

\paragraph{Variational inequalities and coherence\afterhead}

Most of the literature on \acl{SP} problems has focused on the \emph{monotone} case, \ie when $\obj$ is \emph{convex-concave}.
In such problems, it is well known that solutions of \eqref{eq:SP} can be characterized equivalently as solutions of the associated (Minty) \acl{VI}:
\begin{equation}
\label{eq:VI}
\tag{VI}
\braket{\gvec(\state)}{\state - \sol}
	\geq 0
	\quad
	\text{for all $\state\in\feas$}.
\end{equation}

Importantly, this equivalence extends well beyond the realm of monotone problems:
it trivially includes all bilinear problems ($\obj(\state_{1},\state_{2}) = \state_{1}^{\top} \mat \state_{2}$), quasi-convex-concave objectives (where Sion's minmax theorem applies), etc.
\revise{For a concrete non-monotone example, consider the problem
\begin{equation}
\label{eq:SP-example}
\min_{\state_{1}\in[-1,1]}
	\max_{\state_{2}\in[-1,1]}
		(\state_{1}^{4}\state_{2}^{2} + \state_{1}^{2} +1)
		(\state_{1}^{2}\state_{2}^{4} - \state_{2}^{2} +1).
\end{equation}
The only \acl{SP} of $\obj$ is $\sol = (0,0)$:
it is easy to check that $\sol$ is also the unique solution of the corresponding problem \eqref{eq:VI}, despite the fact that $\obj$ is not even (quasi-)monotone.}%
\footnote{To see this, simply note that $\obj(\state_{1},\state_{2})$ is \emph{multi-modal} in $\state_{2}$ for certain values of $\state_{1}$.}
This shows that the equivalence between \eqref{eq:SP} and \eqref{eq:VI} encompasses a wide range of phenomena that are innately incompatible with convexity/monotonicity, even in the lowest possible dimension;
for an in-depth discussion of the links between \eqref{eq:SP} and \eqref{eq:VI}, we refer the reader to \cite{FP03}.%


Motivated by this equivalence, we introduce below the notion of \emph{coherence:}

\begin{definition}
\label{def:coherence}
We say that \eqref{eq:SP} is \emph{coherent} if every \acl{SP} of $\obj$ is a solution of the associated \acl{VI} problem \eqref{eq:VI} and vice versa.
If \eqref{eq:VI} holds as a strict inequality whenever $\state$ is not a \acl{SP} of $\obj$, \eqref{eq:SP} will be called \emph{strictly coherent};
\revise{by contrast, if \eqref{eq:VI} holds as an equality for all $\state\in\feas$, we will say that \eqref{eq:SP} is \emph{null-coherent}.}
\end{definition}

The notion of coherence will play a central part in our considerations, so a few remarks are in order.
\revise{First, to the best of our knowledge, its first antecedent is a gradient condition examined by \cite{Bot98} in the context of nonlinear programming;
we borrow the term ``coherence'' from the more recent paper of \cite{ZMBB+17-NIPS} (who actually used the term to describe \emph{strict} coherence).
We should also note that it is possible to relax the equivalence between \eqref{eq:SP} and \eqref{eq:VI} by positing that only \emph{some} of the solutions of \eqref{eq:SP} can be harvested from \eqref{eq:VI}.}
Our analysis still goes through in this case but, to keep things simple, we do not pursue this relaxation here.

Finally, regarding the distinction between coherence and \emph{strict} coherence, we show in \cref{app:coherence} that \eqref{eq:SP} is strictly coherent when $\obj$ is strictly convex-concave.
\revise{At the other end of the spectrum, typical examples of problems that are null-coherent are bilinear objectives with an interior solution:
for instance, $\obj(\state_{1},\state_{2}) = \state_{1}\state_{2}$ with $\state_{1},\state_{2}\in[-1,1]$ has $\braket{\gvec(\state)}{\state} = \state_{1} \state_{2} - \state_{2} \state_{1} = 0$ for all $\state_{1},\state_{2}\in[-1,1]$, so it is null-coherent.
Finally, neither strict, nor null coherence imply a unique solution to \eqref{eq:SP}, a property which is particularly relevant for \acp{GAN}.}

\section{Mirror descent}
\label{sec:MD}

\paragraph{The method\afterhead}

\revise{Motivated by its prolific success in convex programming, our starting point will be the well-known \acdef{MD} method of \cite{NY83}, suitably adapted to our \acl{SP} context;}
for a survey, see \cite{Haz12} and \cite{Bub15}.

The basic idea of \acl{MD} is to generate a new state variable $\new\state$ from some starting state $\state$ by taking a ``mirror step'' along a gradient-like vector $\dstate$.
To do this, let $\hreg\from\feas\to\R$ be a continuous and $\strong$-strongly convex \acdef{DGF} on $\feas$, \ie
\begin{equation}
\label{eq:strong}
\hreg(t\state + (1-t) \alt\state)
	\leq t \hreg(\state) + (1-t) \hreg(\alt\state) - \frac{1}{2} \strong t(1-t) \norm{\alt\state - \state}^{2},
\end{equation}
for all $\state,\alt\state\in\feas$ and all $t\in[0,1]$.
In terms of smoothness (and in a slight abuse of notation), we also assume that the subdifferential of $\hreg$ admits a \emph{continuous selection}, \ie a continuous function $\nabla\hreg\from\dom\subd\hreg\to\dspace$ such that $\nabla\hreg(\state)\in\subd\hreg(\state)$ for all $\state\in\dom\subd\hreg$.%
\footnote{Recall here that the \emph{subdifferential} of $\hreg$ at $\state\in\feas$ is defined as
\(
\subd\hreg(\state)
	\equiv \setdef{\dstate\in\dspace}{\hreg(\alt\state) \geq \hreg(\state) + \braket{\dstate}{\alt\state - \state} \;\text{for all}\; \state\in\vecspace},
\)
with the standard convention $\hreg(\state) = \infty$ for all $\state\in\vecspace\setminus\feas$.}
Then, following \cite{Bre67}, $\hreg$ generates a pseudo-distance on $\feas$ via the relation
\begin{equation}
\label{eq:Bregman}
\breg(\base,\state)
	= \hreg(\base) - \hreg(\state) - \braket{\nabla\hreg(\state)}{\base - \state}
	\quad
	\text{for all $\base\in\feas$, $\state\in\dom\subd\hreg$}.
\end{equation}

This pseudo-distance is known as the \emph{Bregman divergence}.
As we show in \cref{app:Bregman}, we have $\breg(\base,\state) \geq \frac{1}{2} \strong \norm{\state - \base}^{2}$,
so the convergence of a sequence $\act_{\run}$ to some target point $\base$ can be verified by showing that $\breg(\base,\act_{\run})\to0$.
On the other hand, $\breg(\base,\state)$ typically fais to be symmetric and/or satisfy the triangle inequality, so it is not a true distance function per se.
Moreover, the level sets of $\breg(\base,\state)$ may fail to form a neighborhood basis of $\base$, so the convergence of $\act_{\run}$ to $\base$ does not necessarily imply that $\breg(\base,\act_{\run})\to0$;
we provide an example of this behavior in \cref{app:Bregman}.
For technical reasons, it will be convenient to assume that such phenomena do not occur, \ie that $\breg(\base,\act_{\run})\to0$ whenever $\act_{\run}\to\base$.
This mild regularity condition is known in the literature as ``Bregman reciprocity'' \citep{CT93,Kiw97b}, and it will be our standing assumption in what follows (note also that it holds trivially for both \cref{ex:Eucl,ex:entropic} below).

Now, as with standard Euclidean distances, the Bregman divergence generates an associated \emph{prox-mapping} defined as
\begin{equation}
\label{eq:prox}
\prox_{\state}(\dstate)
	= \argmin_{\alt\state\in\feas} \{\braket{\dstate}{\state - \alt\state} + \breg(\alt\state, \state)\}
	\quad
	\text{for all $\state\in\dom\subd\hreg$, $\dstate\in\dspace$}.
\end{equation}
In analogy with the Euclidean case (discussed below), the prox-mapping \eqref{eq:prox} produces a feasible point $\new\state = \prox_{\state}(\dstate)$ by starting from $\state\in\dom\subd\hreg$ and taking a step along a dual (gradient-like) vector $\dstate\in\dspace$.
In this way, we obtain the \acdef{MD} algorithm
\begin{equation}
\label{eq:MD}
\tag{MD}
\act_{\run+1}
	= \prox_{\act_{\run}}(-\step_{\run} \est\gvec_{\run}),
\end{equation}
where $\step_{\run}$ is a variable step-size sequence and $\hat\gvec_{\run}$ is the calculated value of the gradient vector $\gvec(\act_{\run})$ at the $\run$-th stage of the algorithm (for a pseudocode implementation, see \cref{alg:MD}).


\begin{algorithm}[tbp]
\label{alg:MD}

\small
\tt
\begin{algorithmic}[1]
\Require
	$\strong$-strongly convex regularizer $\hreg\from\feas\to\R$,
	step-size sequence $\step_{\run}>0$
\State
	choose $\act\in\dom\subd\hreg$
	\Comment{initialization}%
\For{$\run=\running$}
	\State
		oracle query at $\act$ returns $\gvec$
		\Comment{gradient feedback}%
	\State
		set $\act \leftarrow \prox_{\act}(-\step_{\run}\gvec)$
		\Comment{new state}%
\EndFor
\State
	\Return $\act$
\end{algorithmic}
\caption{\acf{MD} for \acl{SP} problems}
\end{algorithm}


For concreteness, two widely used examples of prox-mappings are as follows:
\medskip

\begin{example}[Euclidean projections]
\label{ex:Eucl}
When $\feas$ is endowed with the $L^{2}$ norm $\norm{\cdot}_{2}$, the archetypal prox-function is the (square of the) norm itself, \ie $\hreg(\state) = \frac{1}{2} \norm{\state}_{2}^{2}$.
In that case, $\breg(\base,\state) = \frac{1}{2} \norm{\state - \base}^{2}$ and the induced prox-mapping is
\begin{equation}
\label{eq:prox-Eucl}
\prox_{\state}(\dstate)
	= \Eucl(\state + \dstate),
\end{equation}
with $\Eucl(\state) = \argmin_{\alt\state\in\feas} \norm{\alt\state - \state}^{2}$ denoting the ordinary Euclidean projection onto $\feas$.
\end{example}
\medskip

\begin{example}[Entropic regularization]
\label{ex:entropic}
When $\feas$ is a $\vdim$-dimensional simplex, a widely used \ac{DGF} is the (negative) Gibbs\textendash Shannon entropy $h(\state) = \sum_{j=1}^{\vdim} \state_{j} \log \state_{j}$.
This function is $1$-strongly convex with respect to the $L^{1}$ norm \citep{SS11} and the associated pseudo-distance is the \acl{KL} divergence
\(
\dkl(\base,\state)
	= \sum_{j=1}^{\vdim} \base_{j} \log\parens{\base_{j} / \state_{j}};
\)
in turn, this yields the prox-mapping
\begin{equation}
\label{eq:prox-entropic}
\prox_{\state}(\dstate)
	= \frac{(\state_{j} \exp(\dstate_{j}))_{j=1}^{\vdim}}{\sum_{j=1}^{\vdim} \state_{j} \exp(\dstate_{j})}
	\quad
	\text{for all $\state\in\intfeas$, $\dstate\in\dspace$.}
\end{equation}
The update rule $\state \leftarrow \prox_{\state}(\dstate)$ is known in the literature as the \acdef{MW} algorithm \citep{AHK12}, and is one of the centerpieces for learning in zero-sum games \citep{FS99,MPP18,DISZ18}, adversarial bandits \citep{ACBFS95}, etc.
\end{example}
\medskip

Regarding the gradient input sequence $\est\gvec_{\run}$ of \eqref{eq:MD}, we assume that it is obtained by querying a \emph{first-order oracle}
which outputs an estimate of $\gvec(\act_{\run})$ when called at $\act_{\run}$.
This oracle could be either \emph{perfect}, returning $\est\gvec_{\run} = \gvec(\act_{\run})$ for all $\run$, or \emph{imperfect}, providing noisy gradient estimations.%
\footnote{The reason for this is that, depending on the application at hand, gradients might be difficult to compute directly \eg because they require huge amounts of data, the calculation of an unknown expectation, etc.}
By that token, we will make the following blanket assumptions for the gradient feedback sequence $\est\gvec_{\run}$:
\begin{equation}
\label{eq:oracle}
\begin{aligned}
&a)\;
	\textit{Unbiasedness:}
	&
	&\exof{\est\gvec_{\run} \given \filter_{\run}}
		= \gvec(\act_{\run}).
	\\[.5ex]
&b)\;
	\textit{Finite mean square:}
	&
	&\exof{\dnorm{\est\gvec_{\run}}^{2} \given \filter_{\run}}
		\leq \gbound^{2}
		\;\;
		\text{for some finite $\gbound\geq0$}.
		\hspace{8em}
\end{aligned}
\end{equation}

In the above, $\dnorm{\dstate} \equiv \sup\setdef{\braket{\dstate}{\state}}{\state\in\vecspace, \norm{\state} \leq 1}$ denotes the dual norm on $\dspace$
while $\filter_{\run}$ represents the history (natural filtration) of
the generating sequence $\act_{\run}$ up to stage $\run$ (inclusive).
Since $\est\gvec_{\run}$ is generated randomly from $\act_{\run}$ at stage $\run$, it is obviously not $\filter_{\run}$-measurable, \ie
\(
\est\gvec_{\run}
	= \gvec(\act_{\run}) + \noise_{\run+1},
\)
where $\noise_{\run}$ is
an adapted \acl{MDS} with $\exof{\dnorm{\noise_{\run+1}}^{2} \given \filter_{\run}} \leq \noisevar$ for some finite $\noisedev \geq 0$.
Clearly, when $\noisedev=0$,
we recover the exact gradient feedback framework $\hat\gvec_{\run} = \gvec(\act_{\run})$.

\paragraph{Convergence analysis\afterhead}

When \eqref{eq:SP} is convex-concave, it is customary to take as the output of \eqref{eq:MD} the so-called \emph{ergodic average}
\begin{equation}
\label{eq:ergodic}
\bar\act_{\run}
	= \frac{\sum_{\runalt=\start}^{\run} \step_{\runalt} \act_{\runalt}}{\sum_{\runalt=\start}^{\run} \step_{\runalt}},
\end{equation}
or some other average of the sequence $\act_{\run}$ where the objective is sampled.
The reason for this is that convexity guarantees \textendash\ via Jensen's inequality and gradient monotonicity \textendash\ that a regret-based analysis of \eqref{eq:MD} can lead to explicit rates for the convergence of $\bar\act_{\run}$ to the solution set of \eqref{eq:SP} \citep{Nem04,Nes07}.
Beyond convex-concave problems however, this is no longer the case:
averaging provides no tangible benefits in a non-monotone setting, so we need to examine the convergence properties of the generating sequence $\act_{\run}$ of \eqref{eq:MD} directly.
With all this in mind, our main result for \eqref{eq:MD} may be stated is as follows:

\revise{%
\begin{theorem}
\label{thm:MD}
Suppose that \eqref{eq:MD} is run with a gradient oracle satisfying \eqref{eq:oracle} and a variable step-size sequence $\step_{\run}$ such that $\sum_{\run=\start}^{\infty} \step_{\run} = \infty$.
Then:
\begin{enumerate}
[\itshape a\upshape)]
\item
\label{itm:strict}
If $\obj$ is strictly coherent and $\sum_{\run=\start}^{\infty} \step_{\run}^{2} < \infty$, $\act_{\run}$ converges \as to a solution of \eqref{eq:SP}.
\item
\label{itm:null}
If $\obj$ is null-coherent, the sequence $\exof{\breg(\sol,\act_{\run})}$ is non-decreasing for every solution $\sol$ of \eqref{eq:SP}.
\end{enumerate}
\end{theorem}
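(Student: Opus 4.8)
The plan is to extract a single ``energy inequality'' for the iterates of \eqref{eq:MD} and then feed it, in two different ways, into a martingale-convergence argument. The backbone is the variational characterization of the prox-mapping \eqref{eq:prox}: writing $\act_{\run+1} = \prox_{\act_\run}(-\step_\run\est\gvec_\run)$ and invoking the first-order optimality conditions for the minimization defining \eqref{eq:prox}, I would combine them with the three-point identity
\[
\braket{\nabla\hreg(\act_{\run+1}) - \nabla\hreg(\act_\run)}{\sol - \act_{\run+1}}
= \breg(\sol,\act_\run) - \breg(\sol,\act_{\run+1}) - \breg(\act_{\run+1},\act_\run)
\]
to obtain, for every solution $\sol$, after splitting $\sol-\act_{\run+1}=(\sol-\act_\run)+(\act_\run-\act_{\run+1})$ and absorbing the cross term $\step_\run\braket{\est\gvec_\run}{\act_\run-\act_{\run+1}}$ into $-\breg(\act_{\run+1},\act_\run)\le-\tfrac12\strong\norm{\act_{\run+1}-\act_\run}^2$ via Young's inequality and $\strong$-strong convexity,
\[
\breg(\sol,\act_{\run+1})
\leq \breg(\sol,\act_\run)
+ \step_\run\braket{\est\gvec_\run}{\sol - \act_\run}
+ \frac{\step_\run^{2}}{2\strong}\dnorm{\est\gvec_\run}^{2}.
\]

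For part \eqref{itm:strict}, I fix a solution $\sol$ and condition on $\filter_\run$. Unbiasedness of \eqref{eq:oracle} gives $\exof{\braket{\est\gvec_\run}{\sol - \act_\run} \given \filter_\run} = -\braket{\gvec(\act_\run)}{\act_\run - \sol}\le 0$ by coherence, while the finite mean-square bound controls the last term by $\step_\run^{2}\gbound^{2}/(2\strong)$. Since $\sum_\run\step_\run^{2}<\infty$, the recursion is an almost-supermartingale, so the Robbins--Siegmund theorem yields \emph{simultaneously} that $\breg(\sol,\act_\run)$ converges \as and that $\sum_\run\step_\run\braket{\gvec(\act_\run)}{\act_\run-\sol}<\infty$ \as As $\sum_\run\step_\run=\infty$, the latter forces $\liminf_\run\braket{\gvec(\act_\run)}{\act_\run-\sol}=0$; extracting a convergent subsequence $\act_{\run_m}\to\hat{\state}$ (compactness of $\feas$) and passing to the limit by continuity of $\gvec$ gives $\braket{\gvec(\hat{\state})}{\hat{\state}-\sol}=0$, whence \emph{strict} coherence forces $\hat{\state}$ to be a solution.

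The final, most delicate step is to upgrade this subsequential convergence to convergence of the \emph{entire} sequence, and I expect this to be the main obstacle. The difficulty is that the limit point $\hat{\state}$ is random and hence not $\filter_\run$-measurable, so one cannot simply re-run the supermartingale argument with $\sol$ replaced by $\hat{\state}$. I would argue pathwise instead: since the solution set is a compact, hence separable, subset of $\feas$, the convergence of $\breg(\sol,\act_\run)$ holds on a single probability-one event simultaneously for all $\sol$ in a fixed countable dense subset. Along the subsequence, Bregman reciprocity gives $\breg(\hat{\state},\act_{\run_m})\to0$; combining this with continuity of $\breg$ in both arguments and the already-established convergence over the dense family lets me control $\breg(\hat{\state},\act_\run)$ and conclude $\breg(\hat{\state},\act_\run)\to0$, so that $\act_\run\to\hat{\state}$ via $\breg(\hat{\state},\cdot)\ge\tfrac12\strong\norm{\cdot-\hat{\state}}^{2}$. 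This density/reciprocity argument is the technical heart of the proof.

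For part \eqref{itm:null} the inequality above must be sharpened to an \emph{identity}. When the iterates remain in the (relative) interior the prox step satisfies $\nabla\hreg(\act_{\run+1})-\nabla\hreg(\act_\run)=-\step_\run\est\gvec_\run$ exactly, and the same three-point computation---now discarding nothing and using the symmetrized identity $\breg(\act_{\run+1},\act_\run)+\breg(\act_\run,\act_{\run+1})=\braket{\nabla\hreg(\act_{\run+1})-\nabla\hreg(\act_\run)}{\act_{\run+1}-\act_\run}$---yields
\[
\breg(\sol,\act_{\run+1})
= \breg(\sol,\act_\run)
+ \breg(\act_\run,\act_{\run+1})
- \step_\run\braket{\est\gvec_\run}{\act_\run - \sol}.
\]
Taking conditional expectations and invoking \emph{null} coherence to annihilate the first-order term, $\exof{\braket{\est\gvec_\run}{\act_\run-\sol}\given\filter_\run}=\braket{\gvec(\act_\run)}{\act_\run-\sol}=0$, leaves $\exof{\breg(\sol,\act_{\run+1})\given\filter_\run}=\breg(\sol,\act_\run)+\exof{\breg(\act_\run,\act_{\run+1})\given\filter_\run}\ge\breg(\sol,\act_\run)$; passing to total expectations gives the asserted monotonicity. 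The point to watch is that this \emph{lower} bound genuinely requires the exact mirror identity---under active constraints the optimality condition supplies only the reverse inequality---so the argument is cleanest in the interior regime that the null-coherent (\eg bilinear) examples naturally inhabit.
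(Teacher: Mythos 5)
Your proposal is correct and shares the paper's overall skeleton \textendash\ the one-step energy bound \eqref{eq:Bregman-old2new-alt}, almost-sure convergence of $\breg(\sol,\act_{\run})$ to \emph{some} limit for each fixed solution $\sol$, existence of a subsequential limit in the solution set, and Bregman reciprocity to identify the limit as $0$ \textendash\ but the middle step travels a genuinely different route. The paper splits the argument into two lemmas: \cref{prop:dichotomy} uses coherence only to discard the drift term and invokes Doob's supermartingale theorem, and \cref{prop:subsequence} then argues by contradiction, extracting a uniform lower bound $\braket{\gvec(\state)}{\state-\sol}\geq\farbound>0$ on a compact set away from $\sols$ and combining the law of large numbers for martingale difference sequences with $\sum_{\run}\step_{\run}=\infty$ to drive $\breg(\sol,\act_{\run})\to-\infty$. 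You instead apply Robbins\textendash Siegmund once, which delivers both the a.s.\ convergence of $\breg(\sol,\act_{\run})$ \emph{and} $\sum_{\run}\step_{\run}\braket{\gvec(\act_{\run})}{\act_{\run}-\sol}<\infty$ in a single stroke, whence $\liminf_{\run}\braket{\gvec(\act_{\run})}{\act_{\run}-\sol}=0$ and a subsequential limit $\acc\state$ with $\braket{\gvec(\acc\state)}{\acc\state-\sol}=0$, forced into $\sols$ by strict coherence. This is cleaner and avoids the compactness/contradiction machinery, at the price of needing the Robbins\textendash Siegmund lemma rather than bare Doob. You are also more scrupulous than the paper about the fact that the limit point is \emph{random}: the paper simply applies \cref{prop:dichotomy} to the random $\sol$ without comment, whereas you propose a countable-dense-family argument. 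Be aware that your sketch of that step leans on joint continuity of $\breg$, which can fail for non-Euclidean \acsp{DGF} near the boundary (this is precisely why reciprocity is assumed), so it needs a little care \textendash\ but it is no less rigorous than what the paper itself does. For part \eqref{itm:null}, your derivation matches the paper's (your sign on the drift term is in fact the correct one; the paper's display has a typo that is immaterial since the term vanishes in conditional expectation under null coherence), and your caveat that the exact three-point \emph{identity} requires the prox step to be unconstrained/interior is a legitimate observation that the paper glosses over; it is harmless for the bilinear interior-solution examples the result is aimed at.
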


This result establishes an important dichotomy between strict and null coherence:
\emph{in strictly coherent problems, $\act_{\run}$ is attracted to the solution set of \eqref{eq:SP};
in null-coherent problems, $\act_{\run}$ drifts away and cycles without converging.}
In particular, this dichotomy leads to the following immediate corollaries:

\begin{corollary}
\label{cor:MD-strict}
Suppose that $\obj$ is strictly convex-concave.
Then, with assumptions as above, $\act_{\run}$ converges \as to the \textpar{necessarily unique} solution of \eqref{eq:SP}.
\end{corollary}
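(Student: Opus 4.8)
The plan is to reduce everything to the strictly coherent case already handled in \cref{thm:MD}. Concretely, it suffices to establish two facts: that \eqref{eq:SP} possesses a \emph{unique} saddle point $\sol$, and that \eqref{eq:SP} is \emph{strictly coherent}. Granting these, the unbiasedness/bounded-variance hypotheses \eqref{eq:oracle} and the step-size conditions $\sum_{\run}\step_{\run} = \infty$, $\sum_{\run}\step_{\run}^{2} < \infty$ are all in force, so the first part of \cref{thm:MD} gives $\act_{\run}\to\sol$ almost surely, and uniqueness identifies $\sol$ as \emph{the} solution of \eqref{eq:SP}. Thus the entire content of the corollary is carried by the two structural claims below.

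First I would settle existence and uniqueness of the saddle point. Existence is the classical minimax theorem: $\feas_{1},\feas_{2}$ are compact and convex and $\obj$ is continuous, convex in $\state_{1}$ and concave in $\state_{2}$. For uniqueness I would invoke the standard product structure of the saddle-point set of a convex-concave function, namely that it has the form $\feas_{1}^{\ast}\times\feas_{2}^{\ast}$ with $\obj$ constant on it; strict convexity in $\state_{1}$ forces $\feas_{1}^{\ast}$ to be a singleton and strict concavity in $\state_{2}$ forces $\feas_{2}^{\ast}$ to be a singleton, so $\sol=(\sol_{1},\sol_{2})$ is unique. Coherence itself (the identification of the solution sets of \eqref{eq:SP} and \eqref{eq:VI}) then follows because $\gvec$ is monotone --- this is immediate from summing the first-order convexity/concavity inequalities between any two points --- so Minty's lemma equates the solutions of the Minty problem \eqref{eq:VI} with the Stampacchia solutions, and the latter are exactly the saddle points of $\obj$ by first-order optimality.

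The crux is the \emph{strict} inequality in \eqref{eq:VI}. Writing $\state=(\state_{1},\state_{2})$ and using the gradient inequalities for the strictly convex map $\obj(\cdot,\state_{2})$ and the strictly concave map $\obj(\state_{1},\cdot)$, I obtain, for every $\state\in\feas$,
\begin{align*}
\braket{\nabla_{\state_{1}}\obj(\state)}{\state_{1}-\sol_{1}}
	&\geq \obj(\state_{1},\state_{2}) - \obj(\sol_{1},\state_{2}),
	\\
\braket{-\nabla_{\state_{2}}\obj(\state)}{\state_{2}-\sol_{2}}
	&\geq \obj(\state_{1},\sol_{2}) - \obj(\state_{1},\state_{2}),
\end{align*}
with at least one of the two inequalities \emph{strict} as soon as $\state\neq\sol$ (the strict one corresponding to the coordinate in which $\state$ differs from $\sol$). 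Summing the two lines collapses the mixed terms and yields $\braket{\gvec(\state)}{\state-\sol} > \obj(\state_{1},\sol_{2}) - \obj(\sol_{1},\state_{2})$ whenever $\state\neq\sol$; finally, the saddle inequalities $\obj(\sol_{1},\state_{2})\leq\obj(\sol)\leq\obj(\state_{1},\sol_{2})$ make the right-hand side nonnegative, so
\[
\braket{\gvec(\state)}{\state - \sol} > 0
	\quad\text{for all } \state\neq\sol,
\]
which is precisely strict coherence.

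The step I expect to be the main obstacle is exactly this last one, because strict convex-concavity does \emph{not} upgrade to strict monotonicity of $\gvec$ (for instance $\obj(\state_{1},\state_{2}) = \state_{1}^{4} - \state_{2}^{4}$ is strictly convex-concave yet has a degenerate gradient map at the origin), so one cannot simply quote a strictly-monotone-operator argument. The resolution is to anchor the estimate at the saddle point $\sol$ and let the \emph{saddle inequalities} supply the nonnegative baseline while the \emph{strict} first-order bounds supply the strict gap; the only delicate bookkeeping is to track which of the two coordinate inequalities remains strict when $\state$ and $\sol$ agree in one coordinate but not the other.
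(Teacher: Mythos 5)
Your proof is correct and shares the paper's overall architecture \textendash\ reduce the corollary to \cref{thm:MD}(a) by showing that strict convex-concavity implies strict coherence (plus uniqueness of the saddle point) \textendash\ but your proof of the key lemma is genuinely different from the paper's. The paper (\cref{prop:convex}) first derives the Stampacchia inequality at $\sol$ from first-order optimality, upgrades it to the Minty inequality \eqref{eq:VI} via monotonicity of $\gvec$, and then obtains strictness by a contradiction argument: if $\braket{\gvec(\state)}{\state-\sol}=0$ at a non-saddle $\state$, the restriction $\phi(t)=\obj(\sol_{1}+t(\state_{1}-\sol_{1}),\sol_{2})$ would be constant, contradicting strict convex-concavity. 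You instead anchor the two strict first-order convexity/concavity bounds at $\sol$, sum them so the mixed terms cancel, and close with the saddle inequalities, obtaining the quantitative bound $\braket{\gvec(\state)}{\state-\sol} > \obj(\state_{1},\sol_{2})-\obj(\sol_{1},\state_{2}) \geq 0$ for all $\state\neq\sol$. This is arguably cleaner: it yields coherence and strict coherence in one stroke, avoids the somewhat delicate ``$\phi$ must be constant'' step, and makes explicit which coordinate supplies the strict gap. The paper's route, by contrast, proves the weaker non-strict statement for \emph{all} convex-concave problems first, which is needed elsewhere (for \cref{cor:MD-null}); your argument only covers the strictly convex-concave case, which is all the corollary requires.

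One small inaccuracy in your closing aside: $\obj(\state_{1},\state_{2})=\state_{1}^{4}-\state_{2}^{4}$ is \emph{not} a counterexample to strict monotonicity of $\gvec$, since $(a^{3}-b^{3})(a-b)=(a-b)^{2}(a^{2}+ab+b^{2})>0$ for $a\neq b$; a vanishing Hessian at the origin does not preclude strict monotonicity of the operator. This does not affect your proof, which never invokes strict monotonicity, but the motivating remark as stated is wrong.
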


\begin{corollary}
\label{cor:MD-null}
Suppose that $\obj$ is bilinear
and admits an interior \acl{SP} $\sol\in\intfeas$.
If $\act_{\start}\neq\sol$ and \eqref{eq:MD} is run with exact gradient input \textpar{$\noisedev=0$}, we have $\lim_{\run\to\infty} \breg(\sol,\act_{\run}) > 0$.
\end{corollary}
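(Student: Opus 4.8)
The plan is to read off \cref{cor:MD-null} as a direct specialization of the null-coherence statement of \cref{thm:MD}, since a bilinear objective with an interior saddle point is null-coherent in the sense of \cref{def:coherence}. The only substantive point to check is this null-coherence claim; everything else is a one-line consequence of the monotonicity the theorem already provides.

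First I would verify null-coherence. For $\obj(\state_{1},\state_{2}) = \state_{1}^{\top}\mat\state_{2}$, the gradient field \eqref{eq:paygrad} is $\gvec(\state) = (\mat\state_{2},\,-\mat^{\top}\state_{1})$, and the interior saddle point $\sol\in\intfeas$ satisfies the first-order conditions $\mat\sol_{2} = 0$ and $\mat^{\top}\sol_{1} = 0$, i.e. $\gvec(\sol) = 0$. Expanding $\braket{\gvec(\state)}{\state - \sol} = \braket{\mat\state_{2}}{\state_{1} - \sol_{1}} - \braket{\mat^{\top}\state_{1}}{\state_{2} - \sol_{2}}$, the two cross terms coincide up to transpose (both equal $\state_{1}^{\top}\mat\state_{2}$) and cancel, while the terms involving $\sol$ vanish because $\mat^{\top}\sol_{1} = 0$ and $\mat\sol_{2} = 0$. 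Hence $\braket{\gvec(\state)}{\state - \sol} = 0$ for every $\state\in\feas$, so \eqref{eq:VI} holds with equality throughout and the problem is null-coherent (this is the general version of the worked example $\obj = \state_{1}\state_{2}$ recorded after \cref{def:coherence}).

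With null-coherence in hand, I would invoke the null-coherence case of \cref{thm:MD}, which guarantees that $\exof{\breg(\sol,\act_{\run})}$ is non-decreasing in $\run$ for every solution $\sol$. Under exact gradient feedback ($\noisedev = 0$) the trajectory $\act_{\run}$ is deterministic, so the expectation is superfluous and the plain sequence $\breg(\sol,\act_{\run})$ is itself non-decreasing. Finally, since $\act_{\start}\neq\sol$, the strong-convexity bound $\breg(\sol,\act_{\start}) \geq \tfrac{1}{2}\strong\norm{\act_{\start} - \sol}^{2} > 0$ recorded in \cref{sec:MD} shows that the first term is strictly positive. A non-decreasing sequence valued in $[0,+\infty]$ has a limit that dominates its first term, so $\lim_{\run\to\infty}\breg(\sol,\act_{\run}) \geq \breg(\sol,\act_{\start}) > 0$; no compactness argument is needed, since the conclusion holds verbatim even in the degenerate case where the limit is $+\infty$.

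I do not expect a genuine analytic obstacle here: the entire force of the statement lives in \cref{thm:MD}, and the role of \cref{cor:MD-null} is merely to translate the abstract ``Bregman distance is non-decreasing'' property into the concrete assertion that \eqref{eq:MD} cannot approach an interior bilinear solution. The only points demanding any care are the sign bookkeeping in the null-coherence computation and the observation that exact feedback collapses the expectation to a deterministic quantity — both entirely routine.
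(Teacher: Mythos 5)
Your argument is correct and follows essentially the same route as the paper: the paper likewise verifies null coherence of the bilinear problem with an interior solution (see \cref{prop:finite}, which carries out the computation for zero-sum games on simplices) and then combines \cref{thm:MD}\textpar{\ref{itm:null}} with the observation that $\breg(\sol,\act_{\start})\geq\tfrac{1}{2}\strong\norm{\act_{\start}-\sol}^{2}>0$ and monotonicity of the (deterministic, since $\noisedev=0$) sequence $\breg(\sol,\act_{\run})$. The one small caveat is that when $\sol$ lies only in the \emph{relative} interior (as in the simplex setting of \cref{prop:finite}) the first-order conditions give $\braket{\gvec(\sol)}{\state-\sol}=0$ for all $\state\in\feas$ rather than $\gvec(\sol)=0$, but your cancellation $\braket{\gvec(\state)}{\state-\sol}=\braket{\gvec(\sol)}{\state-\sol}$ goes through verbatim with this weaker fact, so the proof is unaffected.
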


Since bilinear models include all finite two-player, zero-sum games, \cref{cor:MD-null} encapsulates both the non-convergence results of \cite{DISZ18} and \cite{bailey2018multiplicative} for \acl{GD} and \ac{FTRL} respectively (for a more comprehensive formulation, see \cref{prop:finite} in \cref{app:MD}).%
} 
This failure of \eqref{eq:MD} is due to the fact that, witout a mitigating mechanism in place, a ``blind'' first-order step could overshoot and lead to an outwards spiral, even with a \emph{vanishing} step-size.
This phenomenon becomes even more pronounced in \acp{GAN} where it can lead to mode collapse and/or cycles between different modes.
The next two sections address precisely these issues.

\section{Optimistic mirror descent}
\label{sec:OMD}

\paragraph{The method\afterhead}

In convex-concave problems, taking an average of the algorithm's generated samples as in \eqref{eq:ergodic} may resolve cycling phenomena by inducing an auxiliary sequence that gravitates towards the ``center of mass'' of the driving sequence $\act_{\run}$ (which orbits interior solutions).
However, this technique cannot be employed in non-monotone problems because Jensen's inequality does not hold there.
In view of this, we replace averaging with an optimistic ``extra-gradient'' step which uses the obtained information to ``amortize'' the next prox step (possibly outside the convex hull of generated states).
The seed of this ``extra-gradient'' idea dates back to \cite{Kor76} and \cite{Nem04}, and has since found wide applications in optimization theory and beyond \textendash\ for a survey, see \cite{Bub15} and references therein.

In a nutshell, given a state $\state$, the extra-gradient method first generates an intermediate, ``waiting'' state $\est\state = \prox_{\state}(-\step\gvec(\state))$ by taking a prox step as usual.
However, instead of continuing from $\est\state$, the method samples $\gvec(\est\state)$ and goes back to the \emph{original} state $\state$ in order to generate a new state $\new\state = \prox_{\state}(-\step\gvec(\est\state))$. 
Based on this heuristic, we obtain the \acdef{OMD} algorithm
\begin{equation}
\label{eq:OMD}
\tag{OMD}
\begin{aligned}
\act_{\run+1/2}
	&= \prox_{\act_{\run}}(-\step_{\run} \est\gvec_{\run})
	\\
\act_{\run+1}
	&= \prox_{\act_{\run}}(-\step_{\run} \est\gvec_{\run+1/2})
\end{aligned}
\end{equation}
where, in obvious notation, $\est\gvec_{\run}$ and $\est\gvec_{\run+1/2}$ represent gradient oracle queries at the incumbent and intermediate states $\act_{\run}$ and $\act_{\run+1/2}$ respectively (for a pseudocode implementation, see \cref{alg:OMD}).


\begin{algorithm}[tbp]
\caption{\acf{OMD} for \acl{SP} problems}
\label{alg:OMD}

\small
\tt
\begin{algorithmic}[1]
\Require
	$\strong$-strongly convex regularizer $\hreg\from\feas\to\R$,
	step-size sequence $\step_{\run}>0$
\State
	choose $\act\in\dom\subd\hreg$
	\Comment{initialization}%
\For{$\run=\running$}
	\State
		oracle query at $\act$ returns $\gvec$
		\Comment{gradient feedback}%
	\State
		set $\new\act \leftarrow \prox_{\act}(-\step_{\run}\gvec)$
		\Comment{waiting state}%
	\State
		oracle query at $\new\act$ returns $\new\gvec$
		\Comment{gradient feedback}%
	\State
		set $\act \leftarrow \prox_{\act}(-\step_{\run}\new\gvec)$
		\Comment{new state}%
\EndFor
\State
	\Return $\act$
\end{algorithmic}
\end{algorithm}


\paragraph{Convergence analysis\afterhead}

In his original analysis, \cite{Nem04} considered the ergodic average \eqref{eq:ergodic} of the algorithm's iterates and established an $\bigoh(1/\run)$ convergence rate in monotone problems.
However, as we explained above, even though this kind of averaging is helpful in convex-concave problems, it does not provide any tangible benefits beyond this class:
in more general problems, $\act_{\run}$ appears to be the most natural solution candidate.
Our first result below justifies this choice in the class of coherent problems:

\revise{
\begin{theorem}
\label{thm:OMD}
Suppose that \eqref{eq:SP} is coherent and $\gvec$ is $\Lip$-Lipschitz continuous.
If \eqref{eq:OMD} is run with exact gradient input \textpar{$\noisedev=0$} and $\step_{\run}$ such that
\(
0
	< \inf_{\run} \step_{\run}
	\leq \sup_{\run} \step_{\run}
	< \strong/\Lip,
\)
the sequence $\act_{\run}$ converges monotonically to a solution $\sol$ of \eqref{eq:SP}, \ie $\breg(\sol,\act_{\run})$ decreases monotonically to $0$.
\end{theorem}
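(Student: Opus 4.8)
The plan is to reduce everything to a single energy estimate for the Bregman divergence $\breg(\sol,\act_{\run})$ to a solution, and then run a Fej\'er-monotonicity argument. The workhorse is the variational estimate for the prox-mapping \eqref{eq:prox}: for any $\state\in\dom\subd\hreg$, any $\dstate\in\dspace$, and $\new\state=\prox_{\state}(\dstate)$, the first-order optimality condition for the minimization in \eqref{eq:prox} together with the three-point identity for \eqref{eq:Bregman} yields
\[
\braket{\dstate}{\base-\new\state} \leq \breg(\base,\state) - \breg(\base,\new\state) - \breg(\new\state,\state)
\qquad\text{for all }\base\in\feas.
\]
I would establish this lemma first, because both half-steps of \eqref{eq:OMD} are instances of it (with $\dstate=-\step_{\run}\gvec(\act_{\run})$ and $\dstate=-\step_{\run}\gvec(\act_{\run+1/2})$ respectively, since $\noisedev=0$).

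Next I would derive the energy inequality. Applying the lemma to the update $\act_{\run+1}=\prox_{\act_{\run}}(-\step_{\run}\gvec(\act_{\run+1/2}))$ with $\base=\sol$, and to the waiting step $\act_{\run+1/2}=\prox_{\act_{\run}}(-\step_{\run}\gvec(\act_{\run}))$ with $\base=\act_{\run+1}$, then combining the two, the $\breg(\act_{\run+1},\act_{\run})$ terms cancel and a cross term $\step_{\run}\braket{\gvec(\act_{\run+1/2})-\gvec(\act_{\run})}{\act_{\run+1/2}-\act_{\run+1}}$ appears. I would control it using $\Lip$-Lipschitz continuity of $\gvec$, the strong-convexity bound $\breg(\base,\state)\geq\tfrac12\strong\norm{\state-\base}^{2}$, and Young's inequality; coherence enters through the Minty inequality $\braket{\gvec(\act_{\run+1/2})}{\act_{\run+1/2}-\sol}\geq0$, valid because every $\sol$ is a solution of \eqref{eq:VI} (\cref{def:coherence}). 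The outcome is
\[
\breg(\sol,\act_{\run+1}) \leq \breg(\sol,\act_{\run}) - \bigl(1-\step_{\run}\Lip/\strong\bigr)\bigl[\breg(\act_{\run+1/2},\act_{\run})+\breg(\act_{\run+1},\act_{\run+1/2})\bigr],
\]
where $\sup_{\run}\step_{\run}<\strong/\Lip$ makes the coefficient uniformly positive. This already gives the \emph{monotone} decrease of $\breg(\sol,\act_{\run})$; telescoping, together with $\inf_{\run}\step_{\run}>0$ and the strong-convexity bound, yields $\sum_{\run}\bigl[\breg(\act_{\run+1/2},\act_{\run})+\breg(\act_{\run+1},\act_{\run+1/2})\bigr]<\infty$, and hence $\norm{\act_{\run+1/2}-\act_{\run}}\to0$ and $\norm{\act_{\run+1}-\act_{\run}}\to0$.

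To finish, I would pass to the limit. Since $\feas$ is compact, some subsequence $\act_{\run_{\runalt}}\to\hat\state$; then $\act_{\run_{\runalt}+1/2}\to\hat\state$ as well, and along a further subsequence $\step_{\run_{\runalt}}\to\step_{\infty}\in[\inf_{\run}\step_{\run},\sup_{\run}\step_{\run}]$. Continuity of $\gvec$ and of the prox-mapping (single-valued by strong convexity and continuous by continuity of the selection $\nabla\hreg$) passes the waiting step to the fixed-point relation $\hat\state=\prox_{\hat\state}(-\step_{\infty}\gvec(\hat\state))$; feeding this back into the prox lemma gives $\braket{\gvec(\hat\state)}{\state-\hat\state}\geq0$ for all $\state\in\feas$, i.e.\ $\hat\state$ is a stationary point of \eqref{eq:SP}. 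Identifying this limit with a genuine solution of \eqref{eq:SP} is where coherence is indispensable, and is the step I expect to be the main obstacle: the fixed-point condition is a Stampacchia-type inequality, which must be reconciled with the Minty formulation \eqref{eq:VI} that defines solutions in \cref{def:coherence}. Once $\hat\state$ is known to be a solution, the argument closes cleanly: applying the monotonicity with $\sol=\hat\state$ shows $\breg(\hat\state,\act_{\run})$ is non-increasing, Bregman reciprocity gives $\breg(\hat\state,\act_{\run_{\runalt}})\to0$ along the subsequence, and a non-increasing sequence with a null subsequence is itself null; hence $\breg(\hat\state,\act_{\run})\to0$ and $\act_{\run}\to\hat\state$ monotonically, as claimed.
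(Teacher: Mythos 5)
Your proposal is essentially the paper's own proof. The energy inequality you derive by applying the prox variational lemma to both half-steps and cancelling the $\breg(\act_{\run+1},\act_{\run})$ terms is the Tseng/Nemirovski-style variant of the paper's \cref{lem:descent}, which the paper obtains instead from the packaged two-point bound of \cref{prop:extra}; the two differ only in whether the residual is written as $\breg(\act_{\run+1/2},\act_{\run})+\breg(\act_{\run+1},\act_{\run+1/2})$ or as $\tfrac12(\strong-\step_{\run}^{2}\Lip^{2}/\strong)\norm{\act_{\run+1/2}-\act_{\run}}^{2}$, and both yield summability of the increments under $\sup_{\run}\step_{\run}<\strong/\Lip$. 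The telescoping, compactness, subsequential limit, prox fixed-point relation, and the Fej\'er-plus-reciprocity closing argument all match the paper step for step. The one point you flag as the main obstacle \textendash\ passing from the Stampacchia-type condition $\acc=\prox_{\acc}(-\step\gvec(\acc))$ to membership in the solution set of \eqref{eq:SP} \textendash\ is handled no more carefully in the paper: it simply asserts that the fixed-point condition makes $\acc$ a solution of \eqref{eq:VI} and hence of \eqref{eq:SP}. Your instinct is correct that, under the literal \cref{def:coherence} (which equates saddle points with \emph{Minty} solutions), this implication is not automatic for non-monotone $\gvec$, so you have identified a real soft spot \textendash\ but it is a soft spot of the paper's argument as well, not a defect of your reconstruction relative to it.
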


\begin{corollary}
\label{cor:OMD}
Suppose that $\obj$ is bilinear.
If \eqref{eq:OMD} is run with assumptions as above, the sequence $\act_{\run}$ converges monotonically to a solution of \eqref{eq:SP}.
\end{corollary}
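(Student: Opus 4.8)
The plan is to deduce the statement directly from \cref{thm:OMD}: that theorem already furnishes monotone last-iterate convergence of \eqref{eq:OMD} in \emph{every} coherent problem with a Lipschitz gradient field, so it suffices to verify that a bilinear objective satisfies these two hypotheses and then invoke the theorem with no change to the argument. The proof therefore splits into checking (i) Lipschitz continuity of $\gvec$, and (ii) coherence of the bilinear game, after which the conclusion — including the monotone decay $\breg(\sol,\act_{\run})\downarrow 0$ — is inherited verbatim.

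First I would record the gradient field. Writing $\obj(\state_{1},\state_{2}) = \state_{1}^{\top}\mat\state_{2}$, \eqref{eq:paygrad} gives $\gvec(\state) = (\mat\state_{2}, -\mat^{\top}\state_{1})$, which is a \emph{linear} function of $\state=(\state_{1},\state_{2})$. Hence $\dnorm{\gvec(\state)-\gvec(\alt\state)} \leq \Lip\,\norm{\state-\alt\state}$ with $\Lip = \norm{\mat}$ the operator norm induced by the ambient norms, so $\gvec$ is globally $\Lip$-Lipschitz. Provided $\mat\neq 0$ (otherwise $\obj\equiv0$ and the iterates are trivially stationary), the window $(0,\strong/\Lip)$ is nonempty, and any schedule with $0<\inf_{\run}\step_{\run}\leq\sup_{\run}\step_{\run}<\strong/\Lip$ — for instance a sufficiently small constant step — meets the theorem's requirement.

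Next I would verify coherence. Since $\obj$ is affine in $\state_{1}$ for each fixed $\state_{2}$ and affine in $\state_{2}$ for each fixed $\state_{1}$, it is convex-concave, and a one-line computation using $\braket{\mat^{\top}a}{b}=\braket{a}{\mat b}$ shows $\braket{\gvec(\state)-\gvec(\alt\state)}{\state-\alt\state}=0$; that is, $\gvec$ is monotone (its associated operator is skew-symmetric). For a monotone field the Minty condition \eqref{eq:VI} and the Stampacchia (first-order) condition share the same solution set, and for a convex-concave objective the latter coincides with the set of saddle points; moreover a saddle point exists on the compact convex domain $\feas$ by the minimax theorem. Hence the bilinear game is coherent with nonempty solution set, the hypotheses of \cref{thm:OMD} hold, and the theorem yields exactly the asserted monotone convergence of $\act_{\run}$ to a solution of \eqref{eq:SP}.

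The one point that warrants care — and the only place bilinearity is genuinely used — is the coherence verification, \ie ensuring there are no ``spurious'' stationary points of the game that solve the variational inequality without being saddle points. This is immediate here from convex-concavity (equivalently, from the skew-symmetry of $\gvec$), which collapses the stationarity, Minty, and saddle-point conditions onto a single solution set. Everything quantitative — the Bregman energy recursion, its monotone decrease, and the identification of the limit point — is supplied wholesale by \cref{thm:OMD}, so no independent estimate is needed for the bilinear case.
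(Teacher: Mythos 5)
Your proposal is correct and follows the same route as the paper: the corollary is an immediate application of \cref{thm:OMD} once one notes that a bilinear objective is convex-concave (hence coherent, by \cref{prop:convex}) and that its gradient field $\gvec(\state)=(\mat\state_{2},-\mat^{\top}\state_{1})$ is linear and therefore globally Lipschitz. Your explicit verification of coherence via the skew-symmetry of $\gvec$ and your handling of the degenerate case $\mat=0$ are both consistent with, and slightly more detailed than, what the paper leaves implicit.
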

} 

\revise{\cref{thm:OMD} includes as a special case the analysis of \citet[Theorem 12.1.11]{FP03} for \acl{OGD} and, in turn, the corresponding asymptotic result of \cite{DISZ18} for bilinear \acl{SP} problems.
As in the case of \cite{DISZ18}, \cref{thm:OMD} shows that optimism (\ie the extra-gradient add-on) plays a crucial role in stabilizing \eqref{eq:MD}:%
} 
not only does \eqref{eq:OMD} converge in problems where \eqref{eq:MD} provably fails (\eg in zero-sum finite games), but this convergence is, in fact, monotonic.
In other words, at each iteration, \eqref{eq:OMD} comes closer to a solution of \eqref{eq:SP}, whereas \eqref{eq:MD} may spiral outwards, towards higher and higher values of the Bregman divergence, ultimately converging to a limit cycle.
This phenomenon can be seen very clearly in \cref{fig:portraits}, and also in the detailed analysis we provide in \cref{app:MD}.

Of course, except for very special cases, the monotonic convergence of $\act_{\run}$ cannot hold when the gradient input to \eqref{eq:OMD} is imperfect:
a single ``bad'' sample of $\est\gvec_{\run}$ would suffice to throw $\act_{\run}$ off-track.
In this case, we have:

\begin{theorem}
\label{thm:OMD-strict}
Suppose that \eqref{eq:SP} is strictly coherent and \eqref{eq:OMD} is run with a gradient oracle satisfying \eqref{eq:oracle} and a variable step-size sequence $\step_{\run}$ such that $\sum_{\run=\start}^{\infty} \step_{\run} = \infty$ and $\sum_{\run=\start}^{\infty} \step_{\run}^{2} < \infty$.
Then, with probability $1$, $\act_{\run}$ converges to a solution of \eqref{eq:SP}.
\end{theorem}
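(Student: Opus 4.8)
The plan is to view \eqref{eq:OMD} as a stochastic approximation scheme and to run an energy argument on the Bregman divergence, converting the monotone descent behind \cref{thm:OMD} into an almost-supermartingale that can be handled by the Robbins\textendash Siegmund theorem. Fix a solution $\sol$ of \eqref{eq:SP} and set $V_{\run}=\breg(\sol,\act_{\run})$. The starting point is a \emph{template inequality}: since both $\act_{\run+1/2}$ and $\act_{\run+1}$ are prox-steps from the common anchor $\act_{\run}$, the variational characterization of the prox-mapping in \eqref{eq:prox} (equivalently, the three-point identity for $\breg$) applied to each of the two steps gives, exactly as in the proof of \cref{thm:OMD},
\begin{equation*}
\breg(\sol,\act_{\run+1})
  \leq \breg(\sol,\act_{\run})
    - \step_{\run}\braket{\est\gvec_{\run+1/2}}{\act_{\run+1/2}-\sol}
    + \frac{\step_{\run}^{2}}{2\strong}\dnorm{\est\gvec_{\run+1/2}-\est\gvec_{\run}}^{2}.
\end{equation*}
Here the two negative Bregman terms produced by the steps are discarded; crucially, in contrast with the deterministic case we do \emph{not} absorb the squared-gradient term against them, so no Lipschitz-based bound on $\step_{\run}$ is needed \textendash\ only $\sum_{\run}\step_{\run}^{2}<\infty$.

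Next I would split the oracle feedback at the look-ahead point as $\est\gvec_{\run+1/2}=\gvec(\act_{\run+1/2})+\noise_{\run+1/2}$, with $\noise_{\run+1/2}$ a martingale difference adapted to the natural filtration refined to include $\act_{\run+1/2}$, and invoke \emph{coherence}: by \eqref{eq:VI}, $\braket{\gvec(\act_{\run+1/2})}{\act_{\run+1/2}-\sol}\geq 0$ pointwise. Taking conditional expectations, the cross term $\braket{\noise_{\run+1/2}}{\act_{\run+1/2}-\sol}$ vanishes by unbiasedness of the oracle, while the squared term is bounded by $4\gbound^{2}$ using the finite second-moment bound in \eqref{eq:oracle}; the martingale part of the recursion is square-summable because $\feas$ is compact. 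Robbins\textendash Siegmund then yields, with probability $1$, that $\breg(\sol,\act_{\run})$ converges to a finite limit and that $\sum_{\run}\step_{\run}\braket{\gvec(\act_{\run+1/2})}{\act_{\run+1/2}-\sol}<\infty$.

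With these two conclusions in hand, the divergence $\sum_{\run}\step_{\run}=\infty$ forces $\liminf_{\run}\braket{\gvec(\act_{\run+1/2})}{\act_{\run+1/2}-\sol}=0$. By compactness of $\feas$, a subsequence $\act_{\run_{\runalt}+1/2}$ converges to some $\hat{\state}$, and continuity of $\gvec$ gives $\braket{\gvec(\hat{\state})}{\hat{\state}-\sol}=0$; \emph{strict} coherence then forces $\hat{\state}$ to be a solution of \eqref{eq:SP}. Moreover a one-line prox estimate together with strong convexity gives $\norm{\act_{\run+1/2}-\act_{\run}}\leq(\step_{\run}/\strong)\dnorm{\est\gvec_{\run}}\to 0$ almost surely (again using $\sum_{\run}\step_{\run}^{2}<\infty$ and the second-moment bound), so the matching subsequence of $\act_{\run}$ converges to the same solution $\hat{\state}$.

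The last step is to upgrade subsequential convergence to convergence of the entire sequence. Applying the first conclusion of the Robbins\textendash Siegmund step to a countable dense subset of the solution set, on an event of probability $1$ the limit $\lim_{\run}\breg(\sol,\act_{\run})$ exists simultaneously for all $\sol$ in that subset; combined with Bregman reciprocity (which gives $\breg(\hat{\state},\act_{\run_{\runalt}})\to 0$ along the subsequence) and the continuity of $\breg$ in its first argument, this pins down $\lim_{\run}\breg(\hat{\state},\act_{\run})=0$, and hence $\act_{\run}\to\hat{\state}$ via $\breg(\hat{\state},\act_{\run})\geq\frac{1}{2}\strong\norm{\act_{\run}-\hat{\state}}^{2}$. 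I expect this final upgrade to be the main obstacle: because strict coherence does not guarantee a unique solution, one cannot simply apply the energy bound at a single predetermined point, and the passage from convergence along a dense subset of solutions to convergence for the \emph{random} limit $\hat{\state}$ must be carried out carefully, controlling the continuity of the Bregman divergence near $\hat{\state}$ (in particular when $\hat{\state}$ lies on the boundary of $\feas$, where $\nabla\hreg$ may be unbounded).
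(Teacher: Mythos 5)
Your proposal is correct and follows essentially the same route as the paper: the same two-point template inequality from \cref{prop:extra}, coherence to kill the drift term in conditional expectation, supermartingale-type convergence of $\breg(\sol,\act_{\run})$, extraction of a subsequential limit lying in $\sols$ via strict coherence together with $\sum_{\run}\step_{\run}=\infty$, and the combination of the two facts via Bregman reciprocity. The only differences are cosmetic or in your favor: Robbins\textendash Siegmund packages in one step what the paper does with a hand-rolled supermartingale (as in \cref{prop:dichotomy}) plus a separate contradiction argument (as in \cref{prop:subsequence}), and your closing observation that the limit point $\hat{\state}$ is \emph{random} \textendash\ so that the energy estimate must be run over a countable dense subset of $\sols$ before it can be evaluated at $\hat{\state}$ \textendash\ addresses a measurability subtlety that the paper's own proof silently glosses over when it applies the fixed-point convergence of \cref{prop:dichotomy} to the random solution produced by \cref{prop:subsequence}.
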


It is worth noting here that the step-size policy in \cref{thm:OMD-strict} is different than that of \cref{thm:OMD}.
This is due to
\begin{inparaenum}
[\itshape a\upshape)]
\item
the lack of randomness (which obviates the summability requirement $\sum_{\run=\start}^{\infty} \step_{\run}^{2} < \infty$ in \cref{thm:OMD});
and
\item
the lack of Lipschitz continuity assumption (which, in the case of \cref{thm:OMD} guarantees monotonic decrease at each step, provided the step-size is not too big).
\end{inparaenum}
Importantly, the maximum allowable step-size is also controlled by the strong convexity modulus of $\hreg$, suggesting that the choice of \acl{DGF} can be fine-tuned further to allow for more aggressive step-size policies \textendash\ a key benefit of \acl{MD} methods.

\section{Experimental results}
\label{sec:experiments}

\paragraph{\aclp{GMM}\afterhead}


\begin{figure}[tbp]
\centering
\begin{subfigure}{\textwidth}
\centering
\includegraphics[width=.75\textwidth]{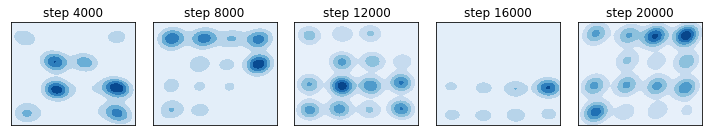}
\\
\includegraphics[width=.75\linewidth]{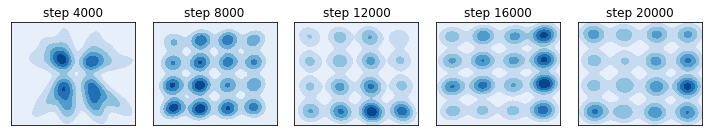}
\caption{Vanilla versus optimistic RMS (top and bottom respectively; $\step = 3\times10^{-4}$ in both cases).}
\label{fig:GMM-RMS}
\vspace{1ex}
\end{subfigure}
\begin{subfigure}{\textwidth}
\centering
\includegraphics[width=.75\textwidth]{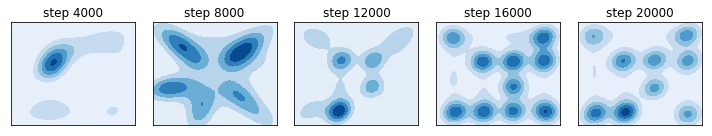}
\\
\includegraphics[width=.75\linewidth]{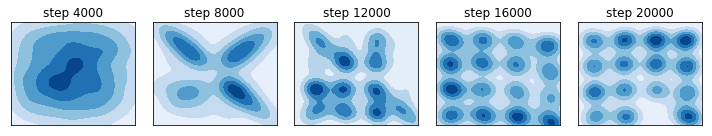}
\caption{Vanilla versus optimistic Adam (top and bottom respectively; $\step = 4\times10^{-5}$ in both cases).}
\label{fig:test-trajectories}
\vspace{-1ex}
\label{fig:GMM-Adam}
\end{subfigure}
\caption{Different algorithmic benchmarks (RMSprop and Adam):
adding an extra-gradient step allows the training method to accurately learn the target data distribution and eliminates cycling and oscillatory instabilities.
}
\label{fig:GMM}
\end{figure}


\revise{
For the experimental validation of our theoretical results, we began by evaluating the extra-gradient add-on in a highly multi-modal mixture of $16$ Gaussians arranged in a $4\times4$ grid as in \cite{unrolled2017}.
The generator and discriminator have $6$ fully connected layers with $384$ neurons and Relu activations (plus an additional layer for data space projection),
and the generator generates $2$-dimensional vectors.
The output after \{4000,  8000, 12000, 16000, 20000\} iterations is shown in \cref{fig:GMM}.
The networks were trained with RMSprop \citep{rmsprop2012} and Adam \citep{Adam2014},
and the results are compared to the corresponding extra-gradient variant (for an explicit pseudocode representation in the case of Adam, see \cite{DISZ18} and \cref{app:experiments}).
Learning rates and hyperparameters were chosen by an inspection of grid search results so as to enable a fair comparison between each method and its look-ahead version.
Overall, the different optimization strategies without look-ahead exhibit mode collapse or oscillations throughout the training period (we ran all models for at least $20000$ iterations in order to evaluate the hopping behavior of the generator).
In all cases, the extra-gradient add-on performs consistently better in learning the multi-modal distribution and greatly reduces occurrences of oscillatory behavior.
}

\paragraph{Experiments with standard datasets\afterhead}

\revise{
In our experiments with \acp{GMM}, the most promising training method was Adam with an extra-gradient step (a concrete pseudocode implementation is provided in \cref{app:experiments}).
Motivated by this, we trained a Wasserstein-\ac{GAN} on the CelebA and CIFAR-10 datasets using Adam, both with and without an extra-gradient step.
The architecture employed was a standard DCGAN;
hyperparameters and network architecture details may be found in \cref{app:experiments}.
Subsequently, to quantify the gains of the extra-gradient step, we employed the widely used inception score and Fréchet distance metrics, for which we report the results in \cref{fig:scores}.
Under both metrics, the extra-gradient add-on provides consistently higher scores after an initial warm-up period (and is considerably more stable).
For visualization purposes, we also present in \cref{fig:OptAdam} an ensemble of samples generated at the end of the training period.
Overall, the generated samples provide accurate feature representation and low distortion (especially in CelebA).
}


\begin{figure}[tbp]
\centering
\includegraphics[height=.3\textwidth]{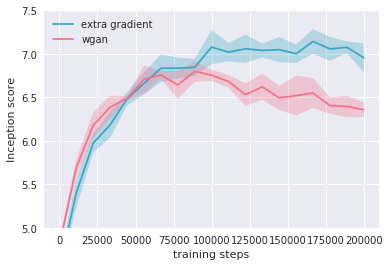}
\qquad
\includegraphics[height=.3\textwidth]{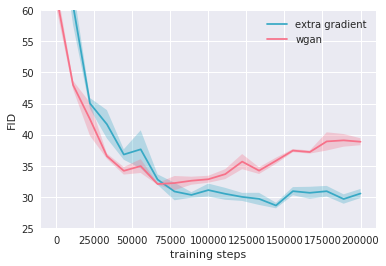}%
\vspace{-2ex}
\caption{%
Left: Inception score (left) and Fréchet distance (right) on CIFAR-10 when training with Adam (with and without an extra-gradient step).
Results are averaged over $8$ sample runs with different random seeds.
}%
\label{fig:scores}
\end{figure}



\begin{figure}[tbp]
\centering
\includegraphics[height=.3\textwidth]{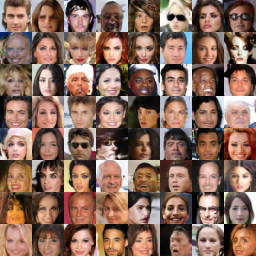}
\qquad
\includegraphics[height=.3\textwidth]{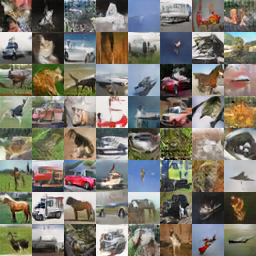}%
\vspace{-1ex}
\caption{%
Samples generated by Adam with an extra-gradient step on CelebA (left) and CIFAR-10 (right).
}%
\label{fig:OptAdam}
\vspace{-3ex}
\end{figure}


\section{Conclusions}
\label{sec:conclusion}

\revise{
Our results suggest that the implementation of an optimistic, extra-gradient step is a flexible add-on that can be easily attached to a wide variety of \ac{GAN} training methods (RMSProp, Adam, SGA, etc.), and provides noticeable gains in performance and stability.
From a theoretical standpoint, the dichotomy between strict and null coherence provides a justification of why this is so:
optimism eliminates cycles and, in so doing, stabilizes the method.
We find this property particularly appealing because it paves the way to a local analysis with provable convergence guarantees in multi-modal settings;
we intend to examine this question in future work.
}

\appendix

\section{Coherent \acl{SP} problems}
\label{app:coherence}
We begin our discussion with some basic results on coherence:

\begin{proposition}
\label{prop:convex}
If $\obj$ is convex-concave, \eqref{eq:SP} is coherent.
In addition, if $\obj$ is strictly convex-concave, \eqref{eq:SP} is strictly coherent.
\end{proposition}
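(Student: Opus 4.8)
The plan is to reduce the statement to the (strict) monotonicity of the saddle-point operator $\gvec$ and then invoke standard variational-inequality theory. The main step is to show that $\gvec$ is monotone whenever $\obj$ is convex-concave, i.e.\ $\braket{\gvec(\state)-\gvec(\alt\state)}{\state-\alt\state}\geq0$ for all $\state,\alt\state\in\feas$. To do this I would apply the first-order subgradient inequality four times: twice to the convex maps $\obj(\cdot,\state_{2})$ and $\obj(\cdot,\alt\state_{2})$, and twice to the concave maps $\obj(\state_{1},\cdot)$ and $\obj(\alt\state_{1},\cdot)$, always evaluated at the two cross-points $\state=(\state_{1},\state_{2})$ and $\alt\state=(\alt\state_{1},\alt\state_{2})$. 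Summing the two convexity inequalities bounds the mixed value-difference $\obj(\alt\state_{1},\state_{2})+\obj(\state_{1},\alt\state_{2})-\obj(\state_{1},\state_{2})-\obj(\alt\state_{1},\alt\state_{2})$ from below by $\braket{\nabla_{\state_{1}}\obj(\state)-\nabla_{\state_{1}}\obj(\alt\state)}{\alt\state_{1}-\state_{1}}$, while summing the two concavity inequalities bounds the same quantity from above by $\braket{\nabla_{\state_{2}}\obj(\state)-\nabla_{\state_{2}}\obj(\alt\state)}{\alt\state_{2}-\state_{2}}$. Chaining the two bounds and rearranging yields precisely $\braket{\gvec(\state)-\gvec(\alt\state)}{\state-\alt\state}\geq0$. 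If $\obj$ is \emph{strictly} convex-concave, each subgradient inequality is strict as soon as the corresponding block of variables differs, so for $\state\neq\alt\state$ at least one of the four is strict and the same chain delivers the strict bound $\braket{\gvec(\state)-\gvec(\alt\state)}{\state-\alt\state}>0$.

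The second step characterizes saddle points through the \emph{Stampacchia} inequality $\braket{\gvec(\sol)}{\state-\sol}\geq0$ for all $\state\in\feas$. Because $\obj(\cdot,\sol_{2})$ is convex and $\feas_{1}$ is convex, $\sol_{1}$ minimizes $\obj(\cdot,\sol_{2})$ over $\feas_{1}$ if and only if $\braket{\nabla_{\state_{1}}\obj(\sol)}{\state_{1}-\sol_{1}}\geq0$ for all $\state_{1}\in\feas_{1}$; dually, $\sol_{2}$ maximizes $\obj(\sol_{1},\cdot)$ over $\feas_{2}$ if and only if $\braket{-\nabla_{\state_{2}}\obj(\sol)}{\state_{2}-\sol_{2}}\geq0$ for all $\state_{2}\in\feas_{2}$. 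Adding the two block conditions shows that $\sol$ is a saddle point of $\obj$ if and only if it solves the Stampacchia inequality; convexity-concavity is exactly what makes these first-order conditions sufficient, and not merely necessary.

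The third step connects this to the Minty inequality \eqref{eq:VI} appearing in \cref{def:coherence}. By monotonicity, any Stampacchia solution $\sol$ also solves \eqref{eq:VI}, since $\braket{\gvec(\state)}{\state-\sol}=\braket{\gvec(\state)-\gvec(\sol)}{\state-\sol}+\braket{\gvec(\sol)}{\state-\sol}\geq0$. For the converse, given a Minty solution $\sol$ I would set $\state=\sol+t(\alt\state-\sol)$ for an arbitrary $\alt\state\in\feas$ and $t\in(0,1]$, divide by $t$, and let $t\to0^{+}$; continuity of $\gvec$ (ensured by the $C^{1}$-smoothness of $\obj$) recovers $\braket{\gvec(\sol)}{\alt\state-\sol}\geq0$, i.e.\ the Stampacchia inequality. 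Combined with the second step, the solution sets of \eqref{eq:SP} and \eqref{eq:VI} coincide, which is coherence.

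Finally, for strict coherence I would invoke strict monotonicity. It forces uniqueness of the Stampacchia solution—two distinct solutions, tested against one another and added, would contradict the strict bound—so the saddle point $\sol$ is unique and any $\state$ that is not a saddle point satisfies $\state\neq\sol$. For such $\state$, strict monotonicity gives $\braket{\gvec(\state)-\gvec(\sol)}{\state-\sol}>0$, and adding the nonnegative Stampacchia term $\braket{\gvec(\sol)}{\state-\sol}\geq0$ yields $\braket{\gvec(\state)}{\state-\sol}>0$, exactly the strict form of \eqref{eq:VI}. I expect the monotonicity computation of the first step, and in particular the bookkeeping of when each of the four inequalities becomes strict in the strict case, to be the only delicate point; the remaining arguments are routine appeals to first-order optimality and Minty's lemma.
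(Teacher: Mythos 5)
Your proof is correct, and its forward half (saddle point $\Rightarrow$ solution of \eqref{eq:VI}, obtained from first\-/order optimality followed by monotonicity of $\gvec$) coincides with the paper's; the differences lie in the converse and in the strict part. For the converse, you pass from the Minty inequality to the Stampacchia inequality $\braket{\gvec(\sol)}{\state-\sol}\geq 0$ by the classical limiting argument (test at $\sol+t(\alt\state-\sol)$, divide by $t$, let $t\to0^{+}$ using continuity of $\gvec$), and then upgrade to the saddle\-/point property via the block subgradient inequalities. The paper instead keeps $t\in(0,1]$ and \emph{integrates}: it shows that $\phi(t)=\obj(\sol_{1}+t(\state_{1}-\sol_{1}),\sol_{2})$ has nonnegative derivative because the Minty inequality applies at every point of the segment, so $\phi(1)\geq\phi(0)$ directly, without ever isolating the Stampacchia form. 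Both routes rest on the same two ingredients (continuity of $\gvec$ and convexity of the blocks) and are equally valid. For the strict part, your argument \textendash\ strict monotonicity (which you also derive from scratch via the four subgradient inequalities, whereas the paper cites monotonicity from the literature) implies uniqueness of the solution, and then $\braket{\gvec(\state)}{\state-\sol}=\braket{\gvec(\state)-\gvec(\sol)}{\state-\sol}+\braket{\gvec(\sol)}{\state-\sol}>0$ for $\state\neq\sol$ \textendash\ is cleaner and arguably more airtight than the paper's, which argues by contradiction that equality at a non\-/saddle point would force $\phi$ to be constant on the segment. Your bookkeeping of strictness is also sound: if $\state\neq\alt\state$ then at least one block differs, the corresponding pair of strict convexity or concavity inequalities is strict, and the chained estimate remains strict.
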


\begin{proof}
Let $\sol$ be a solution point of \eqref{eq:SP}.
Since $\obj$ is convex-concave, first-order optimality gives
\begin{subequations}
\label{eq:Nash-var}
\begin{flalign}
\braket{\gvec_{1}(\sol_{1},\sol_{2})}{\state_{1} - \sol_{1}}
	&= \braket{\nabla_{\state_{1}} \obj(\sol_{1},\sol_{2})}{\state_{1} - \sol_{1}}
	\geq 0,
\intertext{and}
\braket{\gvec_{2}(\sol_{1},\sol_{2})}{\state_{2} - \sol_{2}}
	&= \braket{-\nabla_{\state_{2}} \obj(\sol_{1},\sol_{2})}{\state_{2} - \sol_{2}}
	\geq 0.
\end{flalign}
\end{subequations}
Combining the two, we readily obtain the (Stampacchia) variational inequality
\begin{equation}
\label{eq:SVI}
\braket{\gvec(\sol)}{\state - \sol}
	\geq 0
	\quad
	\text{for all $\state\in\feas$}.
\end{equation}
In addition to the above, the fact that $\obj$ is convex-concave also implies that $\gvec(\state)$ is \emph{monotone} in the sense that
\begin{equation}
\label{eq:monotone}
\braket{\gvec(\alt\state) - \gvec(\state)}{\alt\state - \state}
	\geq 0
\end{equation}
for all $\state,\alt\state\in\feas$ \cite{BC17}.
Thus, setting $\alt\state\leftarrow\sol$ in \eqref{eq:monotone} and invoking \eqref{eq:SVI}, we get
\begin{equation}
\braket{\gvec(\state)}{\state - \sol}
	\geq \braket{\gvec(\sol)}{\state -\sol}
	\geq 0,
\end{equation}
\ie \eqref{eq:VI} is satisfied.

To establish the converse implication, focus for concreteness on the minimizer, and note that \eqref{eq:VI} implies that
\begin{equation}
\label{eq:VI-1}
\braket{\gvec_{1}(\state)}{\state_{1} - \sol_{1}}
	\geq 0
	\quad
	\text{for all $\state_{1}\in\feas_{1}$}.
\end{equation}
Now, if we fix some $\state_{1}\in\feas_{1}$ and consider the function $\phi(t) = \obj(\sol_{1} + t(\state_{1} - \sol_{1}),\sol_{2})$, the inequality \eqref{eq:VI-1} yields
\begin{flalign}
\phi'(t)
	&= \braket{\gvec(\sol_{1} + t(\state_{1} - \sol_{1}),\sol_{2})}{\state_{1} - \sol_{1}}
	\notag\\
	&= \frac{1}{t} \braket{\gvec(\sol_{1} + t(\state_{1} - \sol_{1}),\sol_{2})}{\sol_{1} + t(\state_{1} - \sol_{1} - \sol_{1}}
	\geq 0,
\end{flalign}
for all $t\in[0,1]$.
This implies that $\phi$ is nondecreasing, so $\obj(\state_{1},\sol_{2}) = \phi(1) \geq \phi(0) = \obj(\sol_{1},\sol_{2})$.
The maximizing component follows similarly, showing that $\sol$ is a solution of \eqref{eq:SP} and, in turn, establishing that \eqref{eq:SP} is coherent.

For the strict part of the claim, the same line of reasoning shows that if $\braket{\gvec(\state)}{\state - \sol} = 0$ for some $\state$ that is not a \acl{SP} of $\obj$, the function $\phi(t)$ defined above must be constant on $[0,1]$, indicating in turn that $\obj$ cannot be strictly convex-concave, a contradiction.
\end{proof}

We proceed to show that the solution set of a coherent \acl{SP} problem is closed (we will need this regularity result in the convergence analysis of \cref{app:MD}):

\begin{lemma}
\label{lem:closed}
Let $\sol[\feas]$ denote the solution set of \eqref{eq:SP}.
If \eqref{eq:SP} is coherent, $\sols$ is closed.
\end{lemma}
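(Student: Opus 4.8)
The plan is to use coherence to replace the defining condition for a saddle point by the Minty variational inequality \eqref{eq:VI}, whose solution set is manifestly closed. By \cref{def:coherence}, coherence means exactly that $\sols$ coincides with the set of solutions of \eqref{eq:VI}; hence it suffices to prove that
\[
\sols
	= \setdef{\sol\in\feas}{\braket{\gvec(\state)}{\state - \sol} \geq 0 \;\text{for all}\; \state\in\feas}
\]
is closed. This is the one place where coherence does real work: the raw saddle-point definition does not, on its face, present $\sols$ as an intersection of elementary closed sets, whereas the Minty formulation does.

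The key observation is that the Minty condition is \emph{affine} in the unknown $\sol$ once the test point $\state$ is fixed. Indeed, for each fixed $\state\in\feas$ the vector $\gvec(\state)$ is constant, so
\[
H_{\state}
	= \setdef{\sol\in\feas}{\braket{\gvec(\state)}{\sol} \leq \braket{\gvec(\state)}{\state}}
\]
is the intersection of the (compact, hence closed) feasible region $\feas$ with a closed half-space, and is therefore closed. I would record this first, and then write $\sols = \bigcap_{\state\in\feas} H_{\state}$, which is closed as an arbitrary intersection of closed sets. Since $\feas$ is compact, one gets compactness of $\sols$ for free, though only closedness is needed downstream.

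I expect the only real decision \textendash\ rather than a genuine obstacle \textendash\ to be the choice of representation. Working from the Stampacchia inequality \eqref{eq:SVI} instead, whose constraint $\braket{\gvec(\sol)}{\state - \sol}\geq0$ depends on $\sol$ through $\gvec(\sol)$, would not present the solution set as an intersection of half-spaces; one would then have to run a sequential limiting argument and invoke the continuity of $\gvec$ (available here since $\obj$ is $C^{1}$) to pass to the limit in a convergent sequence $\sol_{\run}\to\sol$. The Minty route sidesteps this entirely and reduces the lemma to the triviality that intersections of closed half-spaces are closed.
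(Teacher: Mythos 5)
Your proposal is correct and uses essentially the same idea as the paper: both arguments rest on the fact that, for each fixed test point $\state$, the Minty condition $\braket{\gvec(\state)}{\state - \sol} \geq 0$ is a closed (affine) constraint on $\sol$, with coherence supplying the identification of $\sols$ with the \eqref{eq:VI} solution set. The paper merely phrases this as a sequential limit ($\sol_{\run}\to\sol$ preserves the inequality for each $\state$) rather than as an intersection of closed half-spaces, which is the same argument in different packaging.
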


\begin{proof}
Let $\sol_{\run}$, $\run=\running$, be a sequence of solutions of \eqref{eq:SP} converging to some limit point $\sol\in\feas$.
To show that $\sols$ is closed, it suffices to show that $\sol\in\feas$.

Indeed, given that \eqref{eq:SP} is coherent, every solution thereof satisfies \eqref{eq:VI}, so we have $\braket{\gvec(\state)}{\state - \sol_{\run}} \geq 0$ for all $\state\in\feas$.
With $\sol_{\run}\to\sol$ as $\run\to\infty$, it follows that
\begin{equation}
\braket{\gvec(\state)}{\state-\sol}
	= \lim_{\run\to\infty} \braket{\gvec(\state)}{\state - \sol_{\run}}
	\geq 0
	\quad
	\text{for all $\state\in\feas$},
\end{equation}
\ie $\sol$ satisfies \eqref{eq:VI}.
By coherence, this implies that $\sol$ is a solution of \eqref{eq:SP}, as claimed.
\end{proof}

\section{Properties of the Bregman divergence}
\label{app:Bregman}

In this appendix, we provide some auxiliary results and estimates that are used throughout the convergence analysis of \cref{app:MD}.
Some of the results we present here (or close variants thereof) are not new \citep[see \eg][]{NJLS09,JNT11}.
However, the hypotheses used to obtain them vary wildly in the literature, so we provide all the necessary details for completeness.

To begin, recall that the Bregman divergence associated to a $\strong$-strongly convex \acl{DGF} $\hreg\from\feas\to\R$ is defined as
\begin{equation}
\label{eq:app-Bregman}
\breg(\base,\state)
	= \hreg(\base) - \hreg(\state) - \braket{\nabla\hreg(\state)}{\base - \state}
\end{equation}
with $\nabla\hreg(\state)$ denoting a continuous selection of $\subd\hreg(\state)$.
The induced prox-mapping is then given by
\begin{flalign}
\label{eq:app-prox}
\prox_{\state}(\dstate)
	&= \argmin_{\alt\state\in\feas} \{ \braket{\dstate}{\state - \alt\state} + \breg(\alt\state,\state) \}
	\notag\\
	&= \argmax_{\alt\state\in\feas} \{ \braket{\dstate + \nabla\hreg(\state)}{\alt\state} - \hreg(\alt\state) \}
\end{flalign}
and is defined for all $\state\in\dom\subd\hreg$, $\dstate\in\dspace$ (recall here that $\dspace \equiv \dual\vecspace$ denotes the dual of the ambient vector space $\vecspace$).
In what follows, we will also make frequent use of the convex conjugate $\hreg^{\ast}\from\dspace\to\R$ of $\hreg$, defined as
\begin{equation}
\label{eq:app-conj}
\hreg^{\ast}(\dstate)
	= \max_{\state\in\feas} \{ \braket{\dstate}{\state} - \hreg(\state) \}.
\end{equation}
By standard results in convex analysis \cite[Chap.~26]{Roc70}, $\hreg^{\ast}$ is differentiable on $\dspace$ and its gradient satisfies the identity
\begin{equation}
\label{eq:app-dconj}
\nabla\hreg^{\ast}(\dstate)
	= \argmax_{\state\in\feas} \{ \braket{\dstate}{\state} - \hreg(\state) \}.
\end{equation}
For notational convenience, we will also write
\begin{equation}
\label{eq:app-mirror}
\mirror(\dstate)
	= \nabla\hreg^{\ast}(\dstate)
\end{equation}
and we will refer to $\mirror\from\dspace\to\feas$ as the \emph{mirror map} generated by $\hreg$.
All these notions are related as follows:
\begin{lemma}
\label{lem:mirror}
Let $\hreg$ be a \acl{DGF} on $\feas$.
Then, for all $\state\in\dom\subd\hreg$, $\dstate\in\dspace$, we have:
\begin{subequations}
\label{eq:links}
\begin{alignat}{4}
\label{eq:links-mirror}
&a)&
	&\;\;
	\state = \mirror(\dstate)
	&\;\iff\;
	&\dstate \in \subd\hreg(\state).
	\\
\label{eq:links-prox}
&b)&
	&\;\;
	\new\state = \prox_{\state}(\dstate)
	&\;\iff\;
	&\nabla\hreg(\state) + \dstate \in \subd\hreg(\new\state)
	&\;\iff\;
	&\new\state = \mirror(\nabla\hreg(\state) + \dstate).
	\hspace{4em}
\end{alignat}
\end{subequations}
Finally, if $\state = \mirror(\dstate)$ and $\base\in\feas$, we have
\begin{equation}
\label{eq:selection}
\braket{\nabla\hreg(\state)}{\state - \base}
	\leq \braket{\dstate}{\state - \base}.
\end{equation}
\end{lemma}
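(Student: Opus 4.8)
The plan is to treat the equivalence \eqref{eq:links-mirror} of part~(a) as the engine of the whole lemma — it is nothing but the Fenchel--Young equality condition for the conjugate pair $(\hreg,\hreg^{\ast})$ — and then to read off \eqref{eq:links-prox} and \eqref{eq:selection} as consequences. First I would record that, by \eqref{eq:app-conj} and \eqref{eq:app-dconj}, the point $\state=\mirror(\dstate)$ is by definition the unique maximizer over $\feas$ of $\alt\state\mapsto\braket{\dstate}{\alt\state}-\hreg(\alt\state)$, with uniqueness (hence single-valuedness of $\mirror$) coming from the $\strong$-strong convexity of $\hreg$. This says exactly that $\braket{\dstate}{\state}-\hreg(\state)=\hreg^{\ast}(\dstate)$, \ie that the Fenchel--Young inequality $\hreg(\state)+\hreg^{\ast}(\dstate)\geq\braket{\dstate}{\state}$ holds with equality; and equality in Fenchel--Young is equivalent to $\dstate\in\subd\hreg(\state)$. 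This is precisely \eqref{eq:links-mirror}.

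Part~(b) is then immediate. Comparing the second line of \eqref{eq:app-prox} with the definition \eqref{eq:app-dconj}--\eqref{eq:app-mirror} of the mirror map, the prox objective $\alt\state\mapsto\braket{\dstate+\nabla\hreg(\state)}{\alt\state}-\hreg(\alt\state)$ is literally the mirror objective evaluated at the shifted dual vector $\nabla\hreg(\state)+\dstate$; hence $\new\state=\prox_{\state}(\dstate)$ if and only if $\new\state=\mirror(\nabla\hreg(\state)+\dstate)$, which is the second equivalence in \eqref{eq:links-prox}. Feeding $\nabla\hreg(\state)+\dstate$ in place of $\dstate$ into part~(a) converts this into the membership $\nabla\hreg(\state)+\dstate\in\subd\hreg(\new\state)$, giving the remaining equivalence.

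For \eqref{eq:selection} I would appeal to first-order optimality for the same maximization. Since $\state=\mirror(\dstate)$ maximizes $G(\alt\state)=\braket{\dstate}{\alt\state}-\hreg(\alt\state)$ over the convex set $\feas$, every $\base\in\feas$ yields a feasible segment $\state+t(\base-\state)\in\feas$ for $t\in[0,1]$, along which $G$ cannot increase; dividing $G(\state)\geq G(\state+t(\base-\state))$ by $t$ and letting $t\to0^{+}$ produces the stationarity inequality $\braket{\dstate-\nabla\hreg(\state)}{\base-\state}\leq0$, where the continuous selection $\nabla\hreg(\state)$ supplies the gradient of $\hreg$ at $\state$ along the feasible directions. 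Rearranging this inequality gives exactly \eqref{eq:selection}.

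The one delicate point — and the main obstacle, such as it is — lies in the bookkeeping of the subdifferential in parts~(a) and~(c). Because $\hreg$ is extended by $+\infty$ off $\feas$, the set $\subd\hreg(\state)$ at a boundary maximizer is generally strictly larger than the singleton $\{\nabla\hreg(\state)\}$ (it differs from it by the normal cone of $\feas$ at $\state$), so the correct conclusion of \eqref{eq:links-mirror} is the \emph{membership} $\dstate\in\subd\hreg(\state)$ and not the identity $\dstate=\nabla\hreg(\state)$. Likewise, in \eqref{eq:selection} one must use the \emph{specific} continuous selection $\nabla\hreg(\state)$ rather than an arbitrary subgradient, which is exactly why the stationarity argument for part~(c) is phrased through the directional derivative along feasible directions. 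Once this distinction is respected, every remaining step is routine convex analysis.
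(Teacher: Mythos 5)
Parts (a) and (b) of your argument are fine and essentially coincide with the paper's proof: identifying $\state=\mirror(\dstate)$ with equality in Fenchel\textendash Young is the same as the paper's first-order optimality condition $0\in\dstate-\subd\hreg(\state)$ for the conjugate maximization, and the identification of the prox objective with the mirror objective at the shifted dual point $\nabla\hreg(\state)+\dstate$ is exactly what the paper does.

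The proof of \eqref{eq:selection}, however, has a genuine gap. Dividing $G(\state)\geq G(\state+t(\base-\state))$ by $t$ and letting $t\to0^{+}$ yields
$\braket{\dstate}{\base-\state}\leq \hreg'(\state;\base-\state)$, where $\hreg'(\state;\cdot)$ is the one-sided directional derivative; for a convex $\hreg$ this equals $\sup\setdef{\braket{w}{\base-\state}}{w\in\subd\hreg(\state)}$, \emph{not} $\braket{\nabla\hreg(\state)}{\base-\state}$ for the particular continuous selection. At a boundary point of $\feas$ \textendash\ and $\mirror(\dstate)$ is very often a boundary point \textendash\ the set $\subd\hreg(\state)$ contains the whole normal cone direction, so the supremum can strictly exceed the pairing with $\nabla\hreg(\state)$. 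Your limit therefore only produces $\braket{\dstate}{\base-\state}\leq\sup_{w\in\subd\hreg(\state)}\braket{w}{\base-\state}$, which is trivially true (since $\dstate\in\subd\hreg(\state)$) and does not imply \eqref{eq:selection}; the sentence ``the continuous selection supplies the gradient along feasible directions'' is precisely the unproved step. The paper closes this gap differently: after reducing to interior $\base\in\intfeas$ (so the segment stays in $\dom\subd\hreg$), it sets $\phi(t)=\hreg(\state+t(\base-\state))-\hreg(\state)-t\braket{\dstate}{\base-\state}\geq0$ and observes that $\psi(t)=\braket{\nabla\hreg(\state+t(\base-\state))-\dstate}{\base-\state}$ is a \emph{continuous} selection of subgradients of the convex function $\phi$ on $[0,1]$; a convex function of one variable admitting a continuous subgradient selection is continuously differentiable with $\phi'=\psi$, so $\phi'(0^{+})$ is pinned to the value $\psi(0)=\braket{\nabla\hreg(\state)-\dstate}{\base-\state}$ rather than to an arbitrary element of $\subd\phi(0)$, and $\phi\geq0=\phi(0)$ then forces $\psi(0)\geq0$. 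The continuity of the selection along the whole segment is doing real work here and cannot be dispensed with.
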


\begin{remark*}
By \eqref{eq:links-prox}, we have $\subd\hreg(\new\state) \neq \varnothing$, \ie $\new\state \in \dom\subd\hreg$.
As a result, the update rule $\state \leftarrow \prox_{\state}(\dstate)$ is \emph{well-posed}, \ie it can be iterated in perpetuity.
\end{remark*}

\begin{proof}[Proof of \cref{lem:mirror}]
For \eqref{eq:links-mirror}, note that $\state$ solves \eqref{eq:app-conj} if and only if $\dstate - \subd\hreg(\state) \ni 0$, \ie if and only if $\dstate\in\subd\hreg(\state)$.
Similarly, comparing \eqref{eq:app-prox} with \eqref{eq:app-conj}, it follows that $\new\state$ solves \eqref{eq:app-prox} if and only if $\nabla\hreg(\state) + \dstate \in \subd\hreg(\new\state)$, \ie if and only if $\new\state = \mirror(\nabla\hreg(\state) + \dstate)$.

For \eqref{eq:selection}, by a simple continuity argument, it suffices to show that the inequality holds for interior $\base\in\intfeas$.
To establish this, let
\begin{equation}
\phi(t)
	= \hreg(\state + t(\base-\state))
	- \bracks{\hreg(\state) +  \braket{\dstate}{\state + t(\base-\state)}}.
\end{equation}
Since $\hreg$ is strongly convex and $\dstate\in\subd\hreg(\state)$ by \eqref{eq:links-mirror}, it follows that $\phi(t)\geq0$ with equality if and only if $t=0$.
Since $\psi(t) = \braket{\nabla\hreg(\state + t(\base-\state)) - \dstate}{\base - \state}$ is a continuous selection of subgradients of $\phi$ and both $\phi$ and $\psi$ are continuous on $[0,1]$, it follows that $\phi$ is continuously differentiable with $\phi' = \psi$ on $[0,1]$.
Hence, with $\phi$ convex and $\phi(t) \geq 0 = \phi(0)$ for all $t\in[0,1]$, we conclude that $\phi'(0) = \braket{\nabla\hreg(\state) - \dstate}{\base - \state} \geq 0$, which proves our assertion.
\end{proof}

We continue with some basic bounds on the Bregman divergence before and after a prox step.
The basic ingredient for these bounds is a generalization of the (Euclidean) law of cosines which is known in the literature as the ``three-point identity'' \citep{CT93}:

\begin{lemma}
\label{lem:3points}
Let $\hreg$ be a \acl{DGF} on $\feas$.
Then, for all $\base\in\feas$ and all $\state,\alt\state\in\dom\subd\hreg$, we have
\begin{equation}
\label{eq:3points}
\breg(\base,\alt\state)
	= \breg(\base,\state)
	+ \breg(\state,\alt\state)
	+ \braket{\nabla\hreg(\alt\state) - \nabla\hreg(\state)}{\state - \base}.
\end{equation}
\end{lemma}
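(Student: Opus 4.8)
The identity \eqref{eq:3points} is a purely algebraic consequence of the definition \eqref{eq:app-Bregman} of the Bregman divergence, so the plan is a direct expansion-and-cancellation argument; no convexity or regularity beyond the existence of the fixed continuous selection $\nabla\hreg$ is required.

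First I would write out all three divergences appearing in the statement directly from \eqref{eq:app-Bregman}. For instance,
\[
\breg(\base,\alt\state)
	= \hreg(\base) - \hreg(\alt\state) - \braket{\nabla\hreg(\alt\state)}{\base - \alt\state},
\]
and analogously for $\breg(\base,\state)$ and $\breg(\state,\alt\state)$. I would then substitute these expressions into the right-hand side of \eqref{eq:3points} and sort the resulting terms into function values and inner products.

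The only step that requires care is the bookkeeping of the linear (inner-product) terms. After inserting the definitions, the function values combine as $(\hreg(\base)-\hreg(\state)) + (\hreg(\state)-\hreg(\alt\state)) = \hreg(\base) - \hreg(\alt\state)$, matching the scalar part of $\breg(\base,\alt\state)$. It then remains to handle the inner products. Grouping the two terms carrying $\nabla\hreg(\state)$ — namely $-\braket{\nabla\hreg(\state)}{\base - \state}$ coming from $\breg(\base,\state)$ and $-\braket{\nabla\hreg(\state)}{\state - \base}$ coming from the cross term — and using $\state - \base = -(\base - \state)$, one sees these cancel completely. Grouping the two terms carrying $\nabla\hreg(\alt\state)$, namely $-\braket{\nabla\hreg(\alt\state)}{\state - \alt\state}$ from $\breg(\state,\alt\state)$ and $+\braket{\nabla\hreg(\alt\state)}{\state - \base}$ from the cross term, bilinearity collapses them to $\braket{\nabla\hreg(\alt\state)}{\alt\state - \base} = -\braket{\nabla\hreg(\alt\state)}{\base - \alt\state}$, which is precisely the linear part of $\breg(\base,\alt\state)$. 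Reassembling, the right-hand side equals $\hreg(\base) - \hreg(\alt\state) - \braket{\nabla\hreg(\alt\state)}{\base - \alt\state} = \breg(\base,\alt\state)$, the left-hand side.

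There is essentially no obstacle here; the one point worth flagging is that $\nabla\hreg$ denotes a fixed continuous selection of the subdifferential and that the same selection is used throughout, so the identity holds verbatim under that convention without assuming $\hreg$ differentiable. I would close by remarking that \eqref{eq:3points} generalizes the Euclidean law of cosines, recovered when $\hreg = \tfrac{1}{2}\norm{\cdot}^{2}$ so that $\breg(\base,\state) = \tfrac{1}{2}\norm{\state - \base}^{2}$.
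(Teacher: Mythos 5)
Your proposal is correct and follows exactly the paper's argument: write out the three Bregman divergences from the definition \eqref{eq:app-Bregman} and observe that the identity reduces to adding/cancelling the function-value and inner-product terms (the paper phrases it as ``add the last two lines and subtract the first''). The algebra checks out, and your remarks about using a fixed continuous selection $\nabla\hreg$ and the Euclidean specialization are consistent with the paper's conventions.
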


\begin{proof}
By definition, we have:
\begin{equation}
\begin{aligned}
\breg(\base,\alt\state)
	&= \hreg(\base) - \hreg(\alt\state) - \braket{\nabla\hreg(\alt\state)}{\base - \alt\state}
	\\
\breg(\base,\state)\hphantom{'}
	&= \hreg(\base) - \hreg(\state) - \braket{\nabla\hreg(\state)}{\base - \state}
	\\
\breg(\state,\alt\state)
	&= \hreg(\state) - \hreg(\alt\state) - \braket{\nabla\hreg(\alt\state)}{\state - \alt\state}.
\end{aligned}
\end{equation}
Our claim then follows by adding the last two lines and subtracting the first.
\end{proof}

With this identity at hand, we have the following series of upper and lower bounds:

\begin{proposition}
\label{prop:Bregman}
Let $\hreg$ be a $\strong$-strongly convex \acl{DGF} on $\feas$,
fix some $\base\in\feas$,
and let $\new\state = \prox_{\state}(\dstate)$ for $\state\in\dom\subd\hreg$, $\dstate\in\dspace$.
We then have:
\begin{subequations}
\begin{flalign}
\label{eq:Bregman-lower}
\breg(\base,\state)\hphantom{^{+}}
	&\geq \frac{\strong}{2} \norm{\state - \base}^{2}.
	\\
\label{eq:Bregman-old2new}
\breg(\base,\new\state)
	&\leq \breg(\base,\state)
	- \breg(\new\state,\state)
	+ \braket{\dstate}{\new\state - \base}
	\\
\label{eq:Bregman-old2new-alt}
	&\leq \breg(\base,\state)
	+ \braket{\dstate}{\state - \base}
	+ \frac{1}{2\strong} \dnorm{\dstate}^{2}
\end{flalign}
\end{subequations}
\end{proposition}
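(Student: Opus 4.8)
The plan is to establish the three estimates in sequence, each relying on the convex-analytic machinery already assembled in \cref{lem:mirror,lem:3points}. The lower bound \eqref{eq:Bregman-lower} is the most routine: I would invoke strong convexity of $\hreg$ in subgradient form. Concretely, applying the defining inequality \eqref{eq:strong} along the segment $\state_{t} = \state + t(\base - \state)$, rearranging, dividing by $t$, and letting $t\to0^{+}$ gives $\braket{\nabla\hreg(\state)}{\base - \state} \leq \hreg(\base) - \hreg(\state) - \frac{\strong}{2}\norm{\base - \state}^{2}$. Substituting this into the definition \eqref{eq:app-Bregman} of the Bregman divergence yields \eqref{eq:Bregman-lower} immediately.

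The crux is \eqref{eq:Bregman-old2new}. Here I would start from the three-point identity of \cref{lem:3points}, applied with $\new\state$ in the role of its middle argument and $\state$ in the role of its last argument; after rearranging this produces the exact identity
\[
\breg(\base,\new\state) = \breg(\base,\state) - \breg(\new\state,\state) + \braket{\nabla\hreg(\new\state) - \nabla\hreg(\state)}{\new\state - \base}.
\]
It then remains to bound the trailing inner product by $\braket{\dstate}{\new\state - \base}$. Since $\new\state = \prox_{\state}(\dstate)$, the prox-characterization \eqref{eq:links-prox} gives $\new\state = \mirror(\nabla\hreg(\state) + \dstate)$, so I can feed $\new\state$ together with the dual vector $\nabla\hreg(\state) + \dstate$ into the selection inequality \eqref{eq:selection}. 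This yields $\braket{\nabla\hreg(\new\state)}{\new\state - \base} \leq \braket{\nabla\hreg(\state) + \dstate}{\new\state - \base}$; subtracting $\braket{\nabla\hreg(\state)}{\new\state - \base}$ from both sides gives precisely the needed bound, and plugging it back into the displayed identity delivers \eqref{eq:Bregman-old2new}.

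Finally, \eqref{eq:Bregman-old2new-alt} follows from \eqref{eq:Bregman-old2new} by splitting $\braket{\dstate}{\new\state - \base} = \braket{\dstate}{\state - \base} + \braket{\dstate}{\new\state - \state}$ and absorbing the residual. I would bound $\braket{\dstate}{\new\state - \state} - \breg(\new\state,\state)$ from above: the lower bound \eqref{eq:Bregman-lower} gives $\breg(\new\state,\state) \geq \frac{\strong}{2}\norm{\new\state - \state}^{2}$, while the dual-norm pairing and Young's inequality give $\braket{\dstate}{\new\state - \state} \leq \frac{\strong}{2}\norm{\new\state - \state}^{2} + \frac{1}{2\strong}\dnorm{\dstate}^{2}$. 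Combining the two, the norm terms cancel and only $\frac{1}{2\strong}\dnorm{\dstate}^{2}$ survives, which is exactly the claimed slack.

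I expect the only genuinely delicate point to be the orientation of the three-point identity in the second step: because $\breg$ is asymmetric, the choice of which argument plays the role of the middle point determines whether the surviving divergence term is $-\breg(\new\state,\state)$ (which is what the Young's-inequality step downstream consumes) or the useless $+\breg(\state,\new\state)$. Everything else is bookkeeping with the selection inequality \eqref{eq:selection} and a standard Fenchel--Young estimate.
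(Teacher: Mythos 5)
Your proposal is correct and follows essentially the same route as the paper: strong convexity in subgradient form for \eqref{eq:Bregman-lower}, the three-point identity of \cref{lem:3points} combined with the selection inequality \eqref{eq:selection} applied to $\new\state = \mirror(\nabla\hreg(\state)+\dstate)$ for \eqref{eq:Bregman-old2new}, and the split of $\braket{\dstate}{\new\state-\base}$ plus Young's inequality and the cancellation against $\breg(\new\state,\state)\geq\frac{\strong}{2}\norm{\new\state-\state}^{2}$ for \eqref{eq:Bregman-old2new-alt}. Your remark about the orientation of the three-point identity is exactly the point the paper's proof also hinges on, so there is nothing to add.
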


\begin{proof}[Proof of \eqref{eq:Bregman-lower}]
By the strong convexity of $\hreg$, we get
\begin{equation}
\hreg(\base)
	\geq \hreg(\state)
	+ \braket{\nabla\hreg(\state)}{\base - \state}
	+ \frac{\strong}{2} \norm{\base - \state}^{2}
\end{equation}
so \eqref{eq:Bregman-lower} follows by gathering all terms involving $\hreg$ and recalling the definition of $\breg(\base,\state)$.
\end{proof}

\begin{proof}[Proof of \labelcref{eq:Bregman-old2new,eq:Bregman-old2new-alt}]
By the three-point identity \eqref{eq:3points}, we readily obtain
\begin{equation}
\breg(\base,\state)
	= \breg(\base,\new\state)
	+ \breg(\new\state,\state)
	+ \braket{\nabla\hreg(\state) - \nabla\hreg(\new\state)}{\new\state - \base}.
\end{equation}
In turn, this gives
\begin{flalign}
\label{eq:upper-new2old}
\breg(\base,\new\state)
	&= \breg(\base,\state)
	- \breg(\new\state,\state)
	+ \braket{\nabla\hreg(\new\state) - \nabla\hreg(\state)}{\new\state - \base}
	\notag\\
	&\leq \breg(\base,\state)
	- \breg(\new\state,\state)
	+ \braket{\dstate}{\new\state - \base},
\end{flalign}
where, in the last step, we used \eqref{eq:selection} and the fact that $\new\state = \prox_{\state}(\dstate)$, so $\nabla\hreg(\state) + \dstate \in \subd\hreg(\new\state)$.
The above is just \eqref{eq:Bregman-old2new}, so the first part of our proof is complete.

For \eqref{eq:Bregman-old2new-alt}, the bound \eqref{eq:upper-new2old} gives
\begin{flalign}
\breg(\base,\new\state)
	&\leq \breg(\base,\state)
	+ \braket{\dstate}{\state - \base}
	+ \braket{\dstate}{\new\state - \state}
	- \breg(\new\state,\state).
\end{flalign}
Therefore, by Young's inequality \citep{Roc70}, we get
\begin{equation}
\braket{\dstate}{\new\state - \state}
	\leq \frac{\strong}{2} \norm{\new\state - \state}^{2}
	+ \frac{1}{2\strong} \dnorm{\dstate}^{2},
\end{equation}
and hence
\begin{flalign}
\breg(\base,\new\state)
	&\leq \breg(\base,\state)
	+ \braket{\dstate}{\state - \base}
	+ \frac{1}{2\strong} \dnorm{\dstate}^{2}
	+ \frac{\strong}{2} \norm{\new\state - \state}^{2}
	- \breg(\new\state,\state)
	\notag\\
	&\leq \breg(\base,\state)
	+ \braket{\dstate}{\state - \base}
	+ \frac{1}{2\strong} \dnorm{\dstate}^{2},
\end{flalign}
with the last step following from \cref{lem:mirror} applied to $\state$ in place of $\base$.
\end{proof}

The first part of \cref{prop:Bregman} shows that $\act_{\run}$ converges to $\base$ if $\breg(\base,\act_{\run})\to0$.
However, as we mentioned in the main body of the paper, the converse may fail:
in particular, we could have $\liminf_{\run\to\infty}\breg(\base,\act_{\run}) > 0$ even if $\act_{\run}\to\base$.
To see this, let $\feas$ be the $L^{2}$ ball of $\R^{\vdim}$ and take $\hreg(\state) = -\sqrt{1 - \norm{\state}_{2}^{2}}$.
Then, a straightforward calculation gives
\begin{equation}
\breg(\base,\state)
	= \frac{1 - \braket{\base}{\state}}{\sqrt{1-\norm{\state}_{2}^{2}}}
\end{equation}
whenever $\norm{\base}_{2}=1$.
The corresponding level sets $L_{c}(\base) = \setdef{\state\in\R^{\vdim}}{\breg(\base,\state) = c}$ of $\breg(\base,\cdot)$ are given by the equation
\begin{equation}
1 - \braket{\base}{\state}
	= c \sqrt{1 - \norm{\state}_{2}^{2}},
\end{equation}
which admits $\base$ as a solution for all $c\geq0$ (so $\base$ belongs to the closure of $L_{c}(\base)$ even though $\breg(\base,\base) = 0$ by definition).
As a result, under this \acl{DGF}, it is possible to have $\act_{\run}\to\base$ even when $\liminf_{\run\to\infty} \breg(\base,\act_{\run}) > 0$ (simply take a sequence $\act_{\run}$ that converges to $\base$ while remaining on the same level set of $\breg$).
As we discussed in the main body of the paper, such pathologies are discarded by the Bregman reciprocity condition
\begin{equation}
\label{eq:reciprocity}
\breg(\base,\act_{\run})
	\to 0
	\quad
	\text{whenever}
	\quad
\act_{\run}
	\to \base.
\end{equation}
This condition comes into play at the very last part of the proofs of \cref{thm:MD,thm:OMD};
other than that, we will not need it in the rest of our analysis.

%

Finally, for the analysis of the \ac{OMD} algorithm, we will need to relate prox steps taken along different directions:

\begin{proposition}
\label{prop:extra}
Let $\hreg$ be a $\strong$-strongly convex \acl{DGF} on $\feas$
and
fix some $\base\in\feas$, $\state\in\dom\subd\hreg$.
Then:
\begin{enumerate}
[\hspace{2em}\itshape a\upshape)]
\item
For all $\dstate_{1},\dstate_{2}\in\dspace$, we have:
\begin{equation}
\label{eq:prox-Lipschitz}
\norm{\prox_{\state}(\dstate_{2}) - \prox_{\state}(\dstate_{1})}
	\leq \frac{1}{\strong}\dnorm{\dstate_{2} - \dstate_{1}},
\end{equation}
\ie $\prox_{\state}$ is $(1/\strong)$-Lipschitz.

\item
In addition, letting $\new\state_{1} = \prox_{\state}(\dstate_{1})$ and $\new\state_{2} = \prox_{\state}(\dstate_{2})$, we have:
\begin{subequations}
\begin{flalign}
\label{eq:2points-prebound}
\breg(\base,\new\state_{2})
	&\leq \breg(\base,\state)
	+ \braket{\dstate_{2}}{\new\state_{1} - \base}
	+ \bracks{ \braket{\dstate_{2}}{\new\state_{2} - \new\state_{1}} - \breg(\new\state_{2},\state)}
	\\
\label{eq:2points-bound}
	&\leq \breg(\base,\state)
	+ \braket{\dstate_{2}}{\new\state_{1} - \base}
	+ \frac{1}{2\strong} \dnorm{\dstate_{2} - \dstate_{1}}^{2}
	- \frac{\strong}{2} \norm{\new\state_{1} - \state}^{2}.
\end{flalign}
\end{subequations}
\end{enumerate}
\end{proposition}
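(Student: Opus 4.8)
The plan is to treat the two parts separately, using the Lipschitz estimate of part (a) only implicitly, since the heavy lifting in part (b) goes through the selection inequality \eqref{eq:selection} rather than through \eqref{eq:prox-Lipschitz} directly.

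\textbf{Part (a).} For the Lipschitz bound I would start from the prox characterization \eqref{eq:links-prox}, which gives $\nabla\hreg(\state) + \dstate_{1} \in \subd\hreg(\new\state_{1})$ and $\nabla\hreg(\state) + \dstate_{2} \in \subd\hreg(\new\state_{2})$. Since $\hreg$ is $\strong$-strongly convex, its subdifferential is strongly monotone, i.e. $\braket{p - p'}{x - x'} \geq \strong \norm{x - x'}^{2}$ whenever $p \in \subd\hreg(x)$ and $p' \in \subd\hreg(x')$ (this follows by adding the two strong-convexity subgradient inequalities). Applying it to the two subgradients above, the common term $\nabla\hreg(\state)$ cancels and leaves $\braket{\dstate_{2} - \dstate_{1}}{\new\state_{2} - \new\state_{1}} \geq \strong \norm{\new\state_{2} - \new\state_{1}}^{2}$. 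Bounding the left-hand side by the Cauchy\textendash Schwarz inequality for the dual pairing and dividing through by $\norm{\new\state_{2} - \new\state_{1}}$ (the case $\new\state_{1} = \new\state_{2}$ being trivial) yields \eqref{eq:prox-Lipschitz}.

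\textbf{Part (b), first inequality.} For \eqref{eq:2points-prebound} I would invoke the single-step bound \eqref{eq:Bregman-old2new} of \cref{prop:Bregman} with direction $\dstate_{2}$ and output $\new\state_{2} = \prox_{\state}(\dstate_{2})$, giving $\breg(\base,\new\state_{2}) \leq \breg(\base,\state) - \breg(\new\state_{2},\state) + \braket{\dstate_{2}}{\new\state_{2} - \base}$, and then split $\new\state_{2} - \base = (\new\state_{1} - \base) + (\new\state_{2} - \new\state_{1})$ inside the last pairing; this reproduces \eqref{eq:2points-prebound} verbatim.

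\textbf{Part (b), second inequality.} Here the task is to control the bracketed remainder $\braket{\dstate_{2}}{\new\state_{2} - \new\state_{1}} - \breg(\new\state_{2},\state)$. I would expand $\breg(\new\state_{2},\state)$ via the three-point identity \eqref{eq:3points} through the intermediate point $\new\state_{1}$, writing $\breg(\new\state_{2},\state) = \breg(\new\state_{2},\new\state_{1}) + \breg(\new\state_{1},\state) + \braket{\nabla\hreg(\state) - \nabla\hreg(\new\state_{1})}{\new\state_{1} - \new\state_{2}}$. Collecting inner products, the remainder becomes $\braket{\nabla\hreg(\new\state_{1}) - \nabla\hreg(\state) - \dstate_{2}}{\new\state_{1} - \new\state_{2}} - \breg(\new\state_{2},\new\state_{1}) - \breg(\new\state_{1},\state)$. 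The crucial step is to eliminate the continuous-selection term $\nabla\hreg(\new\state_{1})$ by applying \eqref{eq:selection} at $\new\state_{1}$ with the genuine subgradient $\nabla\hreg(\state) + \dstate_{1} \in \subd\hreg(\new\state_{1})$ and test point $\new\state_{2}$; this bounds $\braket{\nabla\hreg(\new\state_{1})}{\new\state_{1} - \new\state_{2}}$ by $\braket{\nabla\hreg(\state) + \dstate_{1}}{\new\state_{1} - \new\state_{2}}$, so the pairing is dominated by $\braket{\dstate_{1} - \dstate_{2}}{\new\state_{1} - \new\state_{2}}$. Young's inequality then splits this into $\frac{1}{2\strong}\dnorm{\dstate_{1} - \dstate_{2}}^{2} + \frac{\strong}{2}\norm{\new\state_{1} - \new\state_{2}}^{2}$; the second term cancels exactly against $-\breg(\new\state_{2},\new\state_{1}) \leq -\frac{\strong}{2}\norm{\new\state_{1} - \new\state_{2}}^{2}$ from \eqref{eq:Bregman-lower}, and the same lower bound applied to $\breg(\new\state_{1},\state)$ produces the surviving term $-\frac{\strong}{2}\norm{\new\state_{1} - \state}^{2}$, delivering \eqref{eq:2points-bound}.

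The one delicate point — and the step I expect to demand the most care — is precisely the use of \eqref{eq:selection} in place of the naive identification $\nabla\hreg(\new\state_{1}) = \nabla\hreg(\state) + \dstate_{1}$: because $\hreg$ need not be differentiable and its subdifferential need not be single-valued, the continuous selection $\nabla\hreg(\new\state_{1})$ may differ from the particular subgradient $\nabla\hreg(\state) + \dstate_{1}$, and \eqref{eq:selection} is exactly what lets me pass from the selection to the subgradient in the correct (inequality-preserving) direction. Everything else is bookkeeping: matching the three inner-product contributions and ensuring the $\frac{\strong}{2}\norm{\new\state_{1} - \new\state_{2}}^{2}$ terms cancel.
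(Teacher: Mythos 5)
Your proposal is correct and follows essentially the same route as the paper: part (a) is the standard strong-monotonicity argument on the first-order optimality conditions of the two prox problems (the paper phrases this via two applications of \eqref{eq:selection}, which is equivalent to your subdifferential monotonicity step), and part (b) combines \eqref{eq:Bregman-old2new} with the split $\new\state_{2}-\base = (\new\state_{1}-\base) + (\new\state_{2}-\new\state_{1})$, then controls the remainder by Young's inequality and the lower bound \eqref{eq:Bregman-lower}. The only cosmetic difference is that in the second inequality of part (b) you inline the three-point identity \eqref{eq:3points} together with \eqref{eq:selection} where the paper instead reapplies \eqref{eq:Bregman-old2new} with $\base\leftarrow\new\state_{2}$ — but since that bound is itself proved from exactly those two ingredients, the arguments coincide.
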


\begin{proof}
We begin with the proof of the Lipschitz property of $\prox_{\state}$.
Indeed, for all $\base\in\feas$, \eqref{eq:selection} gives
\begin{subequations}
\begin{flalign}
\label{eq:selection1}
\braket{\nabla\hreg(\new\state_{1}) - \nabla\hreg(\state) - \dstate_{1}}{\new\state_{1} - \base}
	&\leq 0,
\intertext{and}
\label{eq:selection2}
\braket{\nabla\hreg(\new\state_{2}) - \nabla\hreg(\state) - \dstate_{2}}{\new\state_{2} - \base}
	&\leq 0.
\end{flalign}
\end{subequations}
Therefore, setting $\base \leftarrow \new\state_{2}$ in \eqref{eq:selection1}, $\base \leftarrow \new\state_{1}$ in \eqref{eq:selection2} and rearranging, we obtain
\begin{equation}
\label{eq:dh-upper}
\braket{\nabla\hreg(\new\state_{2}) - \nabla\hreg(\new\state_{1})}{\new\state_{2} - \new\state_{1}}
	\leq \braket{\dstate_{2} - \dstate_{1}}{\new\state_{2} - \new\state_{1}}.
\end{equation}
By the  strong convexity of $\hreg$, we also have
\begin{equation}
\label{eq:dh-lower}
\strong \norm{\new\state_{2} - \new\state_{1}}^{2}
	\leq \braket{\nabla\hreg(\new\state_{2}) - \nabla\hreg(\new\state_{1})}{\new\state_{2} - \new\state_{1}}.
\end{equation}
Hence, combining \labelcref{eq:dh-lower,eq:dh-upper}, we get
\begin{equation}
\strong \norm{\new\state_{2} - \new\state_{1}}^{2}
	\leq \braket{\dstate_{2} - \dstate_{1}}{\new\state_{2} - \new\state_{1}}
	\leq \dnorm{\dstate_{2} - \dstate_{1}} \norm{\new\state_{2} - \new\state_{1}},
\end{equation}
and our assertion follows.

For the second part of our claim, the bound \eqref{eq:Bregman-old2new} of \cref{prop:Bregman} applied to $\new\state_{2} = \prox_{\state}(\dstate_{2})$ readily gives
\begin{flalign}
\label{eq:2points-temp}
\breg(\base,\new\state_{2})
	&\leq \breg(\base,\state)
	- \breg(\new\state_{2},\state)
	+ \braket{\dstate_{2}}{\new\state_{2} - \base}
	\notag\\
	&= \breg(\base,\state)
	+ \braket{\dstate_{2}}{\new\state_{1} - \base}
	+ \bracks{ \braket{\dstate_{2}}{\new\state_{2} - \new\state_{1}} - \breg(\new\state_{2},\state) }
\end{flalign}
thus proving \eqref{eq:2points-prebound}.
To complete our proof, note that \eqref{eq:Bregman-old2new} with $\base\leftarrow\new\state_{2}$ gives
\begin{equation}
\breg(\new\state_{2},\new\state_{1})
	\leq \breg(\new\state_{2},\state)
	+ \braket{\dstate_{1}}{\new\state_{1} - \new\state_{2}}
	- \breg(\new\state_{1},\state),
\end{equation}
or, after rearranging,
\begin{equation}
\breg(\new\state_{2},\state)
	\geq \breg(\new\state_{2},\new\state_{1})
	+ \breg(\new\state_{1},\state)
	+ \braket{\dstate_{1}}{\new\state_{2} - \new\state_{1}}.
\end{equation}
We thus obtain
\begin{flalign}
\label{eq:2points-temp2}
\braket{\dstate_{2}}{\new\state_{2} - \new\state_{1}} - \breg(\new\state_{2},\state)
	&\leq \braket{\dstate_{2} - \dstate_{1}}{\new\state_{2} - \new\state_{1}}
	- \breg(\new\state_{2},\new\state_{1})
	- \breg(\new\state_{1},\state)
	\notag\\
	&\leq \frac{\dnorm{\dstate_{2} - \dstate_{1}}^{2}}{2\strong}
	+ \frac{\strong}{2} \norm{\new\state_{2} - \new\state_{1}}^{2}
	- \frac{\strong}{2} \norm{\new\state_{2} - \new\state_{1}}^{2}
	- \frac{\strong}{2} \norm{\new\state_{1} - \state}^{2}
	\notag\\
	&\leq \frac{1}{2\strong} \dnorm{\dstate_{2} - \dstate_{1}}^{2}
	- \frac{\strong}{2} \norm{\new\state_{1} - \state}^{2},
\end{flalign}
where we used Young's inequality and \eqref{eq:Bregman-lower} in the second inequality.
The bound \eqref{eq:2points-bound} then follows by substituting \eqref{eq:2points-temp2} in \eqref{eq:2points-temp}.
\end{proof}

\section{Convergence analysis of \acl{MD}}
\label{app:MD}

We begin by recalling the definition of the \acl{MD} algorithm.
With notation as in the previous section, the algorithm is defined via the recursive scheme
\begin{equation}
\tag{MD}
\act_{\run+1}
	= \prox_{\act_{\run}}(-\step_{\run} \est\gvec_{\run}),
\end{equation}
where $\step_{\run}$ is a variable step-size sequence and $\hat\gvec_{\run}$ is the calculated value of the gradient vector $\gvec(\act_{\run})$ at the $\run$-th stage of the algorithm.
As we discussed in the main body of the paper, the gradient input sequence $\est\gvec_{\run}$ of \eqref{eq:MD} is assumed to satisfy the standard oracle assumptions
\begin{equation}
\notag
\begin{aligned}
&a)\;
	\textit{Unbiasedness:}
	&
	&\exof{\est\gvec_{\run} \given \filter_{\run}}
		= \gvec(\act_{\run}).
	\\[.5ex]
&b)\;
	\textit{Finite mean square:}
	&
	&\exof{\dnorm{\est\gvec_{\run}}^{2} \given \filter_{\run}}
		\leq \gbound^{2}
		\;\;
		\text{for some finite $\gbound\geq0$}.
		\hspace{3em}
\end{aligned}
\end{equation}
where $\filter_{\run}$ represents the history (natural filtration) of the generating sequence $\act_{\run}$ up to stage $\run$ (inclusive).

With this preliminaries at hand, our convergence proof for \eqref{eq:MD} under strict coherence will hinge on the following results:

\begin{proposition}
\label{prop:dichotomy}
Suppose that \eqref{eq:SP} is coherent and \eqref{eq:MD} is run with a gradient oracle satisfying \eqref{eq:oracle} and a variable step-size $\step_{\run}$ such that $\sum_{\run=\start}^{\infty} \step_{\run}^{2} < \infty$.
If $\sol\in\feas$ is a solution of \eqref{eq:SP}, the Bregman divergence $\breg(\sol,\act_{\run})$ converges \as to a random variable $\breg(\sol)$ with $\exof{\breg(\sol)} < \infty$.
\end{proposition}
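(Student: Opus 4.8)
The plan is to show that $\breg(\sol,\act_{\run})$ is a nonnegative almost supermartingale with summable upward drift, and then to invoke the Robbins--Siegmund convergence theorem for such processes. The only structural ingredient beyond the Bregman estimates of \cref{prop:Bregman} is coherence itself, which I would use to annihilate the first-order drift term.

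First I would apply the energy inequality \eqref{eq:Bregman-old2new-alt} of \cref{prop:Bregman} with $\base\leftarrow\sol$, $\state\leftarrow\act_{\run}$, and $\dstate\leftarrow-\step_{\run}\est\gvec_{\run}$ (so that $\new\state = \act_{\run+1}$), which yields the pointwise bound
\begin{equation}
\breg(\sol,\act_{\run+1})
	\leq \breg(\sol,\act_{\run})
	- \step_{\run}\braket{\est\gvec_{\run}}{\act_{\run} - \sol}
	+ \frac{\step_{\run}^{2}}{2\strong}\dnorm{\est\gvec_{\run}}^{2}.
\end{equation}
Conditioning on $\filter_{\run}$ and using that $\act_{\run}$ is $\filter_{\run}$-measurable, the unbiasedness part of \eqref{eq:oracle} turns the linear term into $\step_{\run}\braket{\gvec(\act_{\run})}{\act_{\run}-\sol}$, while the finite mean-square part bounds the quadratic term by $\step_{\run}^{2}\gbound^{2}/(2\strong)$. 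The crucial step is then coherence: since $\sol$ solves \eqref{eq:SP} and \eqref{eq:SP} is coherent, $\sol$ satisfies \eqref{eq:VI}, so $\braket{\gvec(\act_{\run})}{\act_{\run}-\sol}\geq0$. Discarding this nonnegative drift gives
\begin{equation}
\exof{\breg(\sol,\act_{\run+1})\given\filter_{\run}}
	\leq \breg(\sol,\act_{\run})
	+ \frac{\gbound^{2}}{2\strong}\step_{\run}^{2}.
\end{equation}

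Because $\breg(\sol,\act_{\run})\geq\tfrac{\strong}{2}\norm{\act_{\run}-\sol}^{2}\geq0$ by \eqref{eq:Bregman-lower} and $\sum_{\run}\step_{\run}^{2}<\infty$ makes the perturbation summable, this is exactly a nonnegative almost supermartingale. The Robbins--Siegmund theorem then guarantees that $\breg(\sol,\act_{\run})$ converges \as to a finite limit $\breg(\sol)$, and as a byproduct also delivers $\sum_{\run}\step_{\run}\braket{\gvec(\act_{\run})}{\act_{\run}-\sol}<\infty$ \as, which I expect to be the workhorse for the solution-convergence assertion of \cref{thm:MD} under strict coherence. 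For the moment bound, I would take total expectations of the supermartingale inequality and telescope to obtain $\exof{\breg(\sol,\act_{\run})}\leq\exof{\breg(\sol,\act_{\start})}+\tfrac{\gbound^{2}}{2\strong}\sum_{\runalt}\step_{\runalt}^{2}$, uniformly bounded in $\run$; Fatou's lemma then yields $\exof{\breg(\sol)}\leq\liminf_{\run}\exof{\breg(\sol,\act_{\run})}<\infty$.

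The argument is largely routine once \eqref{eq:Bregman-old2new-alt} is in hand, so I do not expect a genuine obstacle; the one point deserving care is recognizing that plain coherence (rather than strict coherence or global monotonicity) already suffices to force $\braket{\gvec(\act_{\run})}{\act_{\run}-\sol}\geq0$, so the supermartingale structure — and hence the almost sure convergence of the Bregman divergence — holds for \emph{every} coherent problem, with the strict/monotone hypotheses reserved for later identifying the limit.
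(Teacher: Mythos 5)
Your proposal is correct and follows essentially the same route as the paper: the same application of \eqref{eq:Bregman-old2new-alt}, the same use of coherence to discard the nonnegative drift $\step_{\run}\braket{\gvec(\act_{\run})}{\act_{\run}-\sol}$, and the same supermartingale convergence machinery. The only cosmetic difference is that you invoke the Robbins\textendash Siegmund theorem off the shelf, whereas the paper reproves its relevant special case by compensating $\breg(\sol,\act_{\run})$ with the tail sum $(2\strong)^{-1}\gbound^{2}\sum_{\runalt\geq\run}\step_{\runalt}^{2}$ and applying Doob's supermartingale convergence theorem directly.
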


\begin{proposition}
\label{prop:subsequence}
Suppose that \eqref{eq:SP} is strictly coherent and \eqref{eq:MD} is run with a gradient oracle satisfying \eqref{eq:oracle} and a step-size $\step_{\run}$ such that $\sum_{\run=\start}^{\infty} \step_{\run} = \infty$ and $\sum_{\run=\start}^{\infty} \step_{\run}^{2} < \infty$.
Then, with probability $1$, there exists a \textpar{possibly random} solution $\sol$ of \eqref{eq:SP} such that $\liminf_{\run\to\infty} \breg(\sol,\act_{\run}) = 0$.
\end{proposition}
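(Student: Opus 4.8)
The plan is to derive a stochastic energy inequality for the Bregman divergence to a \emph{fixed} reference solution and then feed it into an almost‑supermartingale convergence argument, exactly as in the proof underlying \cref{prop:dichotomy}. Fix any solution $\sol$ of \eqref{eq:SP}. Applying the estimate \eqref{eq:Bregman-old2new-alt} of \cref{prop:Bregman} to the \eqref{eq:MD} update $\act_{\run+1} = \prox_{\act_{\run}}(-\step_{\run}\est\gvec_{\run})$, with $\base \leftarrow \sol$ and $\dstate \leftarrow -\step_{\run}\est\gvec_{\run}$, gives
\begin{equation}
\breg(\sol,\act_{\run+1})
	\leq \breg(\sol,\act_{\run})
	- \step_{\run}\braket{\est\gvec_{\run}}{\act_{\run} - \sol}
	+ \frac{\step_{\run}^{2}}{2\strong}\dnorm{\est\gvec_{\run}}^{2}.
\end{equation}
Taking conditional expectations given $\filter_{\run}$ and invoking the oracle assumptions \eqref{eq:oracle}, namely $\exof{\est\gvec_{\run}\given\filter_{\run}} = \gvec(\act_{\run})$ and $\exof{\dnorm{\est\gvec_{\run}}^{2}\given\filter_{\run}} \leq \gbound^{2}$, yields
\begin{equation}
\exof{\breg(\sol,\act_{\run+1})\given\filter_{\run}}
	\leq \breg(\sol,\act_{\run})
	- \step_{\run}\braket{\gvec(\act_{\run})}{\act_{\run} - \sol}
	+ \frac{\gbound^{2}}{2\strong}\step_{\run}^{2}.
\end{equation}
Since \eqref{eq:SP} is coherent, $\sol$ solves \eqref{eq:VI}, so the drift term $\braket{\gvec(\act_{\run})}{\act_{\run} - \sol}$ is nonnegative, while $\sum_{\run}\step_{\run}^{2} < \infty$ by hypothesis.

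This is precisely the template of the Robbins--Siegmund almost‑supermartingale convergence theorem, with nonnegative state $\breg(\sol,\act_{\run})$, summable additive term $\tfrac{\gbound^{2}}{2\strong}\step_{\run}^{2}$, and nonnegative subtractive term $\step_{\run}\braket{\gvec(\act_{\run})}{\act_{\run}-\sol}$. It therefore gives two conclusions with probability $1$: first, that $\breg(\sol,\act_{\run})$ converges to a finite limit (this recovers \cref{prop:dichotomy}); and second, that the subtractive terms are summable,
\begin{equation}
\sum_{\run=\start}^{\infty} \step_{\run}\braket{\gvec(\act_{\run})}{\act_{\run} - \sol} < \infty .
\end{equation}
Combining this last bound with the divergence $\sum_{\run}\step_{\run} = \infty$ and the nonnegativity of each summand forces $\liminf_{\run\to\infty}\braket{\gvec(\act_{\run})}{\act_{\run} - \sol} = 0$ almost surely.

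The final step is to harvest an actual solution as a limit point. Working pathwise on the probability‑$1$ event above, compactness of $\feas$ lets me pass to a subsequence $\act_{\run_{\runalt}} \to \state_{\infty}\in\feas$ along which $\braket{\gvec(\act_{\run_{\runalt}})}{\act_{\run_{\runalt}} - \sol}\to 0$; by continuity of $\gvec$, the limit satisfies $\braket{\gvec(\state_{\infty})}{\state_{\infty} - \sol} = 0$. Here \emph{strict} coherence is decisive: since $\braket{\gvec(\state)}{\state - \sol} > 0$ for every $\state$ that is not a solution, the vanishing of this quantity forces $\state_{\infty}$ to be a solution of \eqref{eq:SP}. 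Taking the (sample‑path‑dependent, hence possibly random) point $\state_{\infty}$ as the solution in the statement, Bregman reciprocity \eqref{eq:reciprocity} applied to the convergent subsequence $\act_{\run_{\runalt}}\to\state_{\infty}$ gives $\breg(\state_{\infty},\act_{\run_{\runalt}})\to 0$, whence $\liminf_{\run\to\infty}\breg(\state_{\infty},\act_{\run}) = 0$, as required.

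The main obstacle — and the only place strict coherence is used — is the upgrade from $\braket{\gvec(\state_{\infty})}{\state_{\infty}-\sol}=0$ to ``$\state_{\infty}$ is a solution''. Under mere coherence, and in particular under null coherence where this inner product vanishes identically, this step collapses and the subsequence argument yields no information whatsoever; this is exactly the mechanism responsible for the non‑convergence half of the dichotomy in \cref{thm:MD}. The remaining work is routine bookkeeping: checking that the Robbins--Siegmund hypotheses hold on a full‑measure event, and carrying out the extraction of $\state_{\infty}$ pathwise so that the measurability of the resulting ``random solution'' is not at issue.
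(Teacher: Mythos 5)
Your proof is correct, but it follows a genuinely different route from the paper's. Both arguments start from the same one\textendash step energy inequality \eqref{eq:Bregman-old2new-alt} and both ultimately exploit $\sum_{\run}\step_{\run}=\infty$ together with the nonnegativity of the drift $\braket{\gvec(\act_{\run})}{\act_{\run}-\sol}$, but the stochastic machinery and the logical structure differ. The paper argues by contradiction: it assumes that with positive probability $\act_{\run}$ has no limit point in $\sols$, extracts a compact set $\cpt$ disjoint from $\sols$ eventually containing the trajectory, uses strict coherence and compactness to obtain a \emph{uniform} lower bound $\farbound>0$ on the drift over $\cpt$, and then controls the noise in the telescoped inequality via the strong law of large numbers for martingale difference sequences and Doob's submartingale convergence theorem, forcing $\breg(\sol,\act_{\run})\to-\infty$, a contradiction. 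You instead feed the conditional\textendash expectation form of the energy inequality directly into the Robbins\textendash Siegmund theorem, obtaining both the almost sure convergence of $\breg(\sol,\act_{\run})$ (which recovers \cref{prop:dichotomy} for free) and the almost sure summability $\sum_{\run}\step_{\run}\braket{\gvec(\act_{\run})}{\act_{\run}-\sol}<\infty$; non\textendash summability of $\step_{\run}$ then yields $\liminf_{\run\to\infty}\braket{\gvec(\act_{\run})}{\act_{\run}-\sol}=0$, and compactness, continuity of $\gvec$, strict coherence and Bregman reciprocity identify a subsequential limit as a solution with vanishing Bregman divergence along that subsequence. Your version is more direct \textendash\ no contradiction, no pathwise uniform bound $\farbound$, no separate LLN/Doob bookkeeping \textendash\ and it cleanly isolates the use of strict coherence in the single final step (correctly flagged as the step that collapses under null coherence); the only cost is invoking Robbins\textendash Siegmund as an external result where the paper makes do with the two theorems from Hall\textendash Heyde it already cites. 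Either route establishes the proposition.
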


\cref{prop:dichotomy} can be seen as a ``dichotomy'' result:
it shows that the Bregman divergence is an asymptotic constant of motion, so \eqref{eq:MD} either converges to a \acl{SP} $\sol$ (if $\breg(\sol) = 0$) or to some nonzero level set of the Bregman divergence (with respect to $\sol$).
In this way, \cref{prop:dichotomy} rules out more complicated chaotic or aperiodic behaviors that may arise in general \textendash\ for instance, as in the analysis of \cite{PPP17} for the long-run behavior of the \acl{MW} algorithm in two-player games.
However, unless this limit value can be somehow predicted (or estimated) in advance, this result cannot be easily applied.
This is the main role of \cref{prop:subsequence}:
it shows that \eqref{eq:MD} admits a subsequence converging to a solution of \eqref{eq:SP} so, by \eqref{eq:reciprocity}, the limit of $\breg(\sol,\act_{\run})$ must be zero.

With all this at hand, our first step is to prove \cref{prop:dichotomy}:

\begin{proof}[Proof of \cref{prop:dichotomy}]
Let $\breg_{\run} = \breg(\sol,\act_{\run})$ for some solution $\sol$ of \eqref{eq:SP}.
Then, by \cref{prop:Bregman}, we have
\begin{flalign}
\label{eq:Bregman-new-bound}
\breg_{\run+1}
	= \breg(\sol,\prox_{\act_{\run}}(-\step_{\run}\est\gvec_{\run}))
	&\leq \breg(\sol,\act_{\run})
	- \step_{\run} \braket{\est\gvec_{\run}}{\act_{\run} - \sol}
	+ \frac{\step_{\run}^{2}}{2\strong} \norm{\est\gvec_{\run}}^{2}
	\notag\\
	&= \breg_{\run}
	- \step_{\run} \braket{\gvec(\act_{\run})}{\act_{\run} - \sol}
	- \step_{\run} \braket{\noise_{\run+1}}{\act_{\run} - \sol}
	+ \frac{\step_{\run}^{2}}{2\strong} \dnorm{\est\gvec_{\run}}^{2}
	\notag\\
	&\leq \breg_{\run}
	+ \step_{\run} \snoise_{\run+1}
	+ \frac{\step_{\run}^{2}}{2\strong} \dnorm{\est\gvec_{\run}}^{2},
\end{flalign}
where, in the last line, we set $\snoise_{\run+1} = -\braket{\noise_{\run+1}}{\act_{\run} - \sol}$ and we invoked the assumption that \eqref{eq:SP} is coherent.
Thus, conditioning on $\filter_{\run}$ and taking expectations, we get
\begin{flalign}
\exof{\breg_{\run+1} \given \filter_{\run}}
	\leq \breg_{\run}
	+ \exof{\snoise_{\run+1} \given \filter_{\run}}
	+ \frac{\step_{\run}^{2}}{2\strong} \exof{\dnorm{\est\gvec_{\run}}^{2} \given \filter_{\run}}
	\leq \breg_{\run}
	+ \frac{\gbound^{2}}{2\strong} \step_{\run}^{2},
\end{flalign}
where we used the oracle assumptions \eqref{eq:oracle} and the fact that $\act_{\run}$ is $\filter_{\run}$-measurable (by definition).

Now, letting $R_{\run} = \breg_{\run} + (2\strong)^{-1} \gbound^{2} \sum_{\runalt=\run}^{\infty} \step_{\runalt}^{2}$, the estimate \eqref{eq:Bregman-new-bound} gives
\begin{equation}
\exof{R_{\run+1} \given \filter_{\run}}
	= \exof{\breg_{\run+1} \given \filter_{\run}}
	+ \frac{\gbound^{2}}{2\strong} \sum_{\runalt=\run+1}^{\infty} \step_{\runalt}^{2}
	\leq \breg_{\run}
	+ \frac{\gbound^{2}}{2\strong} \sum_{\runalt=\run}^{\infty} \step_{\runalt}^{2}
	= R_{\run},
\end{equation}
\ie $R_{\run}$ is an $\filter_{\run}$-adapted supermartingale.
Since $\sum_{\run=\start}^{\infty} \step_{\run}^{2} < \infty$, it follows that
\begin{equation}
\exof{R_{\run}}
	= \exof{\exof{R_{\run} \given \filter_{\run-1}}}
	\leq \exof{R_{\run-1}}
	\leq \dotsm
	\leq \exof{R_{\start}}
	\leq \exof{\breg_{\start}} + \frac{\gbound^{2}}{2\strong} \sum_{\run=\start}^{\infty} \step_{\run}^{2} < \infty,
\end{equation}
\ie $R_{\run}$ is uniformly bounded in $L^{1}$.
Thus, by Doob's convergence theorem for supermartingales \citep[Theorem~2.5]{HH80}, it follows that $R_{\run}$ converges \as to some finite random variable $R_{\infty}$ with $\exof{R_{\infty}} < \infty$.
In turn, by inverting the definition of $R_{\run}$, this shows that $\breg_{\run}$ converges \as to some random variable $\breg(\sol)$ with $\exof{\breg(\sol)} < \infty$, as claimed.
\end{proof}

We now turn to the proof of existence of a convergent subsequence of \eqref{eq:MD} under strict coherence (\cref{prop:subsequence}):

\begin{proof}[Proof of \cref{prop:subsequence}]
We begin with the technical observation that the solution set $\sols$ of \eqref{eq:SP} is closed \textendash\ and hence, compact (\cf \cref{lem:closed} in \cref{app:coherence}).
Clearly, if $\sols=\feas$, there is nothing to show;
hence, without loss of generality, we may assume in what follows that $\sols\neq\feas$.

Assume now ad absurdum that, with positive probability, the sequence $\act_{\run}$ generated by \eqref{eq:MD} admits no limit points in $\sols$.
Conditioning on this event, and given that $\sols$ is compact, there exists a (nonempty) compact set $\cpt\subset\feas$ such that $\cpt\cap\sols = \varnothing$ and $\act_{\run}\in\cpt$ for all sufficiently large $\run$.
Moreover, given that \eqref{eq:SP} is strictly coherent, we have $\braket{\gvec(\state)}{\state-\sol} > 0$ whenever $\state\in\cpt$ and $\sol\in\sols$.
Therefore, by the continuity of $\gvec$ and the compactness of $\sols$ and $\cpt$, there exists some $\farbound>0$ such that
\begin{equation}
\label{eq:farbound}
\braket{\gvec(\state)}{\state - \sol}	
	\geq \farbound
	\quad
	\text{for all $\state\in\cpt$, $\sol\in\feas$}.
\end{equation}

To proceed, fix some $\sol\in\sols$ and let $\breg_{\run} = \breg(\sol,\act_{\run})$.
Then, telescoping \eqref{eq:Bregman-new-bound} yields the estimate
\begin{flalign}
\label{eq:Bregman-run}
\breg_{\run+1}
	&\leq \breg_{\start}
	- \sum_{\runalt=\start}^{\run} \step_{\runalt} \braket{\gvec(\act_{\runalt})}{\act_{\runalt} - \sol}
	+ \sum_{\runalt=\start}^{\run} \step_{\runalt} \snoise_{\runalt+1}
	+ \sum_{\runalt=\start}^{\run} \frac{\step_{\runalt}^{2}}{2\strong} \dnorm{\est\gvec_{\runalt}}^{2},
\end{flalign}
where, as in the proof of \cref{prop:dichotomy}, we set $\snoise_{\run+1} = \braket{\noise_{\run+1}}{\act_{\run} - \sol}$.
Subsequently, letting $\tau_{\run} = \sum_{\runalt=\start}^{\run} \step_{\runalt}$ and using \eqref{eq:farbound}, we obtain
\begin{equation}
\breg_{\run+1}
	\leq \breg_{\start}
	- \tau_{\run} \bracks*{
		\farbound
		- \frac{\sum_{\runalt=\start}^{\run} \step_{\runalt} \snoise_{\runalt+1}}{\tau_{\run}}
		- \frac{(2\strong)^{-1} \sum_{\runalt=\start}^{\run} \step_{\runalt}^{2} \dnorm{\est\gvec_{\runalt}}^{2}}{\tau_{\run}}
	}.
\end{equation}

By the unbiasedness hypothesis of \eqref{eq:oracle} for $\noise_{\run}$, we have $\exof{\snoise_{\run+1} \given \filter_{\run}} = \braket{\exof{\noise_{\run+1}\given\filter_{\run}}}{\act_{\run} - \sol} = 0$ (recall that $\act_{\run}$ is $\filter_{\run}$-measurable by construction).
Moreover, since $\noise_{\run}$ is bounded in $L^{2}$ and $\step_{\run}$ is $\ell^{2}$ summable (by assumption), it follows that
\begin{flalign}
\sum_{\run=\start}^{\infty} \step_{\run}^{2} \exof{\snoise_{\run+1}^{2} \given \filter_{\run}}
	&\leq \sum_{\run=\start}^{\infty} \step_{\run}^{2} \norm{\act_{\run} - \sol}^{2} \exof{\dnorm{\noise_{\run+1}}^{2}\given\filter_{\run}}
	\notag\\
	&\leq \diam(\feas)^{2} \noisevar \sum_{\run=\start}^{\infty} \step_{\run}^{2}
	< \infty.
\end{flalign}
Therefore, by the law of large numbers for \aclp{MDS} \citep[Theorem~2.18]{HH80}, we conclude that $\tau_{\run}^{-1} \sum_{\runalt=\start}^{\run} \step_{\runalt} \snoise_{\runalt+1}$ converges to $0$ with probability $1$.

Finally, for the last term of \eqref{eq:Bregman-run}, let $S_{\run+1} = \sum_{\runalt=\start}^{\run} \step_{\runalt}^{2} \dnorm{\est\gvec_{\runalt}}^{2}$.
Since $\est\gvec_{\runalt}$ is $\filter_{\run}$-measurable for all $\runalt=\running,\run-1$, we have
\begin{equation}
\exof{S_{\run+1} \given \filter_{\run}}
	= \exof*{%
		\sum_{\runalt=\start}^{\run-1} \step_{\runalt}^{2} \dnorm{\est\gvec_{\runalt}}^{2}
		+ \step_{\run}^{2} \dnorm{\est\gvec_{\run}}^{2} \given \filter_{\run}
		}
	= S_{\run} + \step_{\run}^{2} \exof{\dnorm{\est\gvec_{\run}}^{2} \given \filter_{\run}}
	\geq S_{\run},
\end{equation}
\ie $S_{\run}$ is a submartingale with respect to $\filter_{\run}$.
Furthermore, by the law of total expectation, we also have
\begin{equation}
\exof{S_{\run+1}}
	= \exof{\exof{S_{\run+1} \given \filter_{\run}}}
	\leq \gbound^{2} \sum_{\runalt=\start}^{\run} \step_{\run}^{2}
	\leq \gbound^{2} \sum_{\runalt=\start}^{\infty} \step_{\run}^{2}
	< \infty,
\end{equation}
so $S_{\run}$ is bounded in $L^{1}$.
Hence, by Doob's submartingale convergence theorem \citep[Theorem~2.5]{HH80}, we conclude that $S_{\run}$ converges to some (almost surely finite) random variable $S_{\infty}$ with $\exof{S_{\infty}} < \infty$, implying in turn that $\lim_{\run\to\infty} S_{\run+1}/\tau_{\run} = 0$ \as.

Applying all of the above, the estimate \eqref{eq:Bregman-run} gives $\breg_{\run+1} \leq \breg_{\start} - \farbound \tau_{\run} / 2$ for sufficiently large $\run$, so $\breg(\sol,\act_{\run}) \to -\infty$, a contradiction.
Going back to our original assumption, this shows that, with probability $1$, at least one of the limit points of $\act_{\run}$ must lie in $\sols$, as claimed.
\end{proof}

With all this at hand, we are finally in a position to prove our main result for \eqref{eq:MD}:

\begin{proof}[Proof of \cref{thm:MD}\textpar{\ref{itm:strict}}]
\cref{prop:subsequence} shows that, with probability $1$, there exists a \textpar{possibly random} solution $\sol$ of \eqref{eq:SP} such that $\liminf_{\run\to\infty} \norm{\act_{\run} - \sol} = 0$ and, hence, $\liminf_{\run\to\infty} \breg(\sol,\act_{\run}) = 0$ (by Bregman reciprocity).
Since $\lim_{\run\to\infty} \breg(\sol,\act_{\run})$ exists with probability $1$ (by \cref{prop:dichotomy}), it follows that
\(
\lim_{\run\to\infty} \breg(\sol,\act_{\run})
	= \liminf_{\run\to\infty} \breg(\sol,\act_{\run})
	= 0,
\)
\ie $\act_{\run}$ converges to $\sol$.
\end{proof}

We proceed with the negative result hinted at in the main body of the paper, namely the failure of \eqref{eq:MD} to converge under null coherence:

\begin{proof}[Proof of \cref{thm:MD}\textpar{\ref{itm:null}}]
The evolution of the Bregman divergence under \eqref{eq:MD} satisfies the identity
\begin{flalign}
\breg(\sol,\act_{\run+1})
	&= \breg(\sol,\act_{\run})
	+ \breg(\act_{\run},\act_{\run+1})
	+ \step_{\run} \braket{\est\gvec_{\run}}{\act_{\run} - \sol}
	\notag\\
	&= \breg(\sol,\act_{\run}) + \breg(\act_{\run},\act_{\run+1})
	+ \braket{\noise_{\run+1}}{\act_{\run} - \sol}
\end{flalign}
where, in the last line, we used the null coherence assumption $\braket{\gvec(\state)}{\state-\sol} = 0$ for all $\state\in\feas$.
Since $\breg(\act_{\run},\act_{\run+1})\geq0$, taking expecations above shows that $\breg(\sol,\act_{\run})$ is nondecreasing, as claimed.
\end{proof}

With \cref{thm:MD} at hand, the proof of \cref{cor:MD-strict} is an immediate consequence of the fact that strictly convex-concave problems satisfy strict coherence (\cref{prop:convex}).
As for \cref{cor:MD-null}, we provide below a more general result for two-player, zero-sum finite games.

To state it, let $\pures_{\play} = \{1,\dotsc,\nPures_{\play}\}$, $\play=1,2$, be two finite sets of \emph{pure strategies}, and let $\feas_{\play} = \simplex(\pures_{\play})$ denote the set of \emph{mixed strategies} of player $\play$.
A \emph{finite, two-player zero-sum game} is then defined by a matrix $\mat\in\R^{\pures_{1}\times\pures_{2}}$ so that the loss of Player $1$ and the reward of Player $2$ in the mixed strategy profile $\state = (\state_{1},\state_{2})\in\feas$ are concurrently given by
\begin{equation}
\label{eq:finite}
\obj(\state_{1},\state_{2})
	= \state_{1}^{\top} \mat \state_{2}
\end{equation}
Then, writing $\fingame \equiv \fingame(\pures_{1},\pures_{2},\mat)$ for the resulting game, we have:

\begin{proposition}
\label{prop:finite}
Let $\fingame$ be a two-player zero-sum game with an interior \acl{NE} $\eq$.
If $\act_{\start} \neq \eq$ and \eqref{eq:MD} is run with exact gradient input \textpar{$\noisevar=0$},
we have $\lim_{\run\to\infty} \breg(\eq,\act_{\run})>0$.
If, in addition, $\sum_{\run=\start}^{\infty} \step_{\run}^{2} < \infty$, $\lim_{\run\to\infty} \breg(\sol,\act_{\run})$ is finite.
\end{proposition}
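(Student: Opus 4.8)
The plan is to recognize the game \eqref{eq:finite} as a null-coherent instance of \eqref{eq:SP} and then read off the conclusion from the estimates already established for \eqref{eq:MD}; the only genuinely new input is the observation that an \emph{interior} equilibrium saturates the Minty inequality. First I would record the gradient of the bilinear payoff, $\gvec(\state) = (\mat\state_{2}, -\mat^{\top}\state_{1})$, and note the universal bilinear identity $\braket{\gvec(\state)}{\state} = \state_{1}^{\top}\mat\state_{2} - \state_{1}^{\top}\mat\state_{2} = 0$, valid for every $\state \in \feas = \simplex(\pures_{1})\times\simplex(\pures_{2})$.

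The key step is to turn this into null coherence with respect to $\eq$. Since $\eq = (\eq_{1},\eq_{2})$ lies in the relative interior of $\feas$, the first-order (indifference) conditions for an interior equilibrium force the payoff vectors to be constant across pure strategies, \ie $\mat\eq_{2} = v\ones$ and $\mat^{\top}\eq_{1} = v\ones$, where $v = \eq_{1}^{\top}\mat\eq_{2}$ is the common value of the game. Substituting into $\braket{\gvec(\state)}{\eq} = (\mat^{\top}\eq_{1})^{\top}\state_{2} - (\mat\eq_{2})^{\top}\state_{1}$ and using $\ones^{\top}\state_{1} = \ones^{\top}\state_{2} = 1$ gives $\braket{\gvec(\state)}{\eq} = v - v = 0$; subtracting from the identity above yields $\braket{\gvec(\state)}{\state - \eq} = 0$ for all $\state \in \feas$. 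Hence the game is null-coherent (and a fortiori coherent, since $\eq$ solves both \eqref{eq:SP} and \eqref{eq:VI}), and \cref{prop:dichotomy} is applicable.

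With null coherence certified, I would specialize the recursion from the proof of \cref{thm:MD}\textpar{\ref{itm:null}} to exact feedback. The three-point identity \eqref{eq:3points}, combined with the prox characterization \eqref{eq:links-prox} (so that $\nabla\hreg(\act_{\run+1}) - \nabla\hreg(\act_{\run}) = -\step_{\run}\gvec(\act_{\run})$) and the vanishing of the linear term $\braket{\gvec(\act_{\run})}{\act_{\run} - \eq} = 0$, collapses to the clean recursion $\breg(\eq,\act_{\run+1}) = \breg(\eq,\act_{\run}) + \breg(\act_{\run},\act_{\run+1})$. Since $\breg(\act_{\run},\act_{\run+1}) \geq 0$ by \eqref{eq:Bregman-lower}, the sequence $\breg(\eq,\act_{\run})$ is (deterministically) non-decreasing, so its limit exists in $(0,\infty]$. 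Because $\act_{\start} \neq \eq$, the lower bound \eqref{eq:Bregman-lower} gives $\breg(\eq,\act_{\start}) \geq \tfrac{\strong}{2}\norm{\act_{\start} - \eq}^{2} > 0$, whence $\lim_{\run\to\infty}\breg(\eq,\act_{\run}) \geq \breg(\eq,\act_{\start}) > 0$, which is the first assertion.

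For the second assertion I would add $\sum_{\run}\step_{\run}^{2} < \infty$ and bound the increments from above: since $\gvec$ is continuous on the compact set $\feas$, we have $\dnorm{\gvec(\act_{\run})}^{2} \leq \gbound^{2}$ for some finite $\gbound$, so \eqref{eq:Bregman-old2new-alt} (with the linear term killed by coherence) gives $\breg(\sol,\act_{\run+1}) \leq \breg(\sol,\act_{\run}) + \tfrac{\gbound^{2}}{2\strong}\step_{\run}^{2}$ for any solution $\sol$; telescoping then bounds $\breg(\sol,\act_{\run})$ uniformly, and convergence to a finite limit is exactly the $\noisedev = 0$ case of \cref{prop:dichotomy}. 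The main obstacle is entirely the bookkeeping of the interior equilibrium conditions that certify null coherence; once these are in place, the dynamical content is inherited verbatim from the already-proved Bregman estimates, the only genuinely new feature being that, with perfect feedback, the drift away from $\eq$ is strictly positive at the first step and never reversed.
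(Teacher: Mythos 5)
Your proposal is correct and follows essentially the same route as the paper: you certify null coherence by using the interior-equilibrium indifference conditions ($\mat\eq_{2}$ and $\mat^{\top}\eq_{1}$ proportional to $\ones$) to show $\braket{\gvec(\state)}{\state-\eq}=0$, then invoke the monotone-increase of $\breg(\eq,\act_{\run})$ from the null-coherent case of \cref{thm:MD} for the first claim and the telescoped bound from \eqref{eq:Bregman-old2new-alt} for the second. The only cosmetic difference is that you re-derive the exact recursion $\breg(\eq,\act_{\run+1})=\breg(\eq,\act_{\run})+\breg(\act_{\run},\act_{\run+1})$ from the three-point identity rather than citing \cref{thm:MD}\textpar{\ref{itm:null}} directly, which is the same computation the paper performs there.
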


\begin{remark*}
Note that non-convergence does not require any summability assumptions on $\step_{\run}$.
\end{remark*}

In words, \cref{prop:finite} states that \eqref{eq:MD} does not converge in finite zero-sum games with a unique interior equilibrium and exact gradient input:
instead, $\act_{\run}$ cycles at positive Bregman distance from the game's \acl{NE}.
Heuristically, the reason for this behavior is that, for small $\step\to0$, the incremental step $\dynfield_{\step}(\state) = \prox_{\state}(-\step\gvec(\state)) - \state$ of \eqref{eq:MD} is essentially tangent to the level set of $\breg(\sol,\cdot)$ that passes through $\state$.%
\footnote{This observation was also the starting point of \cite{MPP18} who showed that \ac{FTRL} in \emph{continuous time} exhibits a similar cycling behavior in zero-sum games with an interior equilibrium.}
For finite $\step>0$, things are even worse because $\dynfield_{\step}(\state)$ points noticeably away from $\state$, 
\ie towards higher level sets of $\breg$.
As a result, the ``best-case scenario'' for \eqref{eq:MD} is to orbit $\sol$ (when $\step\to0$);
in practice, for finite $\step$, the algorithm takes small outward steps throughout its runtime, eventually converging to some limit cycle farther away from $\sol$.

\begin{figure}[tbp]
\centering
\includegraphics[height=.45\textwidth]{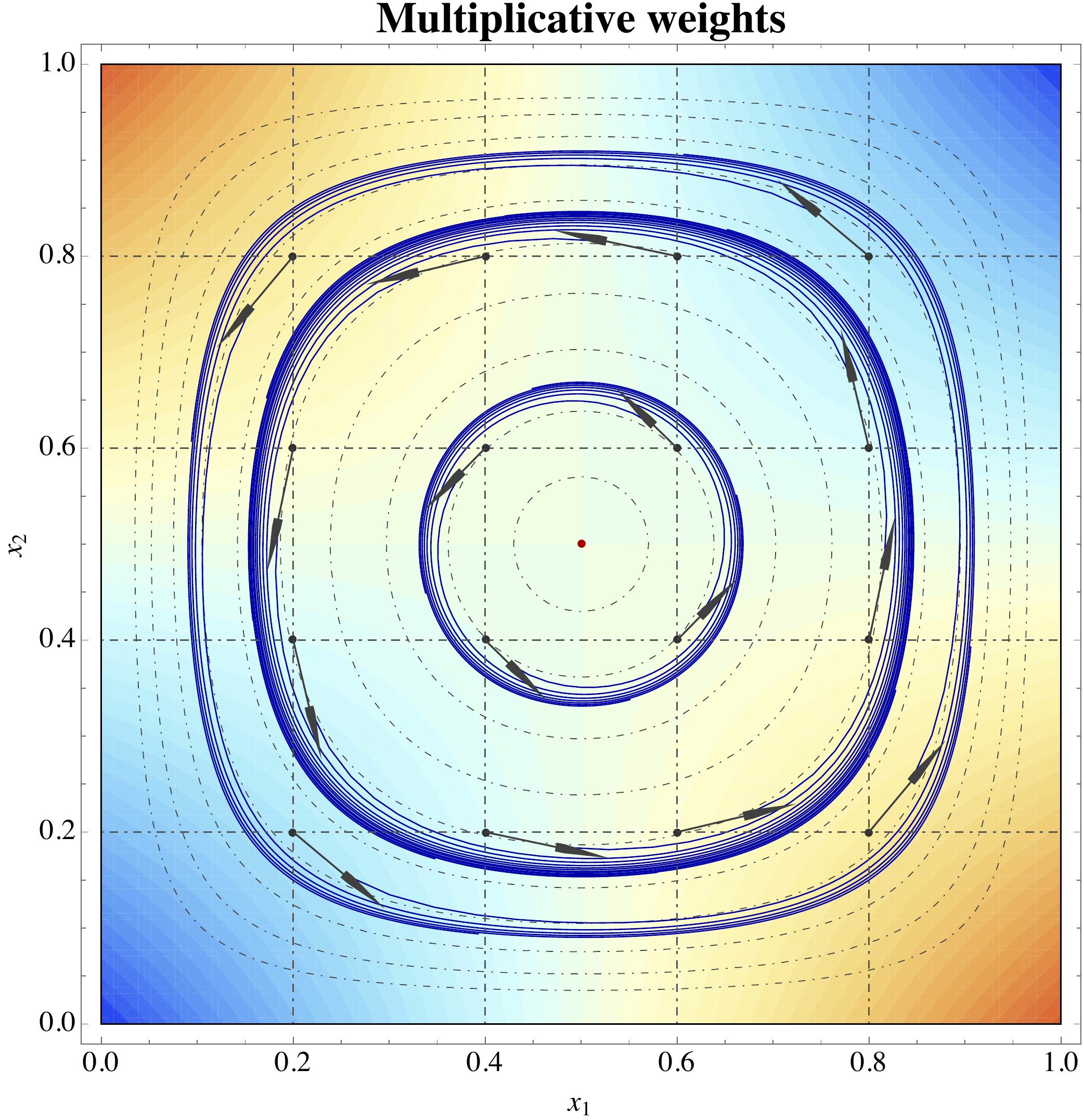}
\hfill
\includegraphics[height=.45\textwidth]{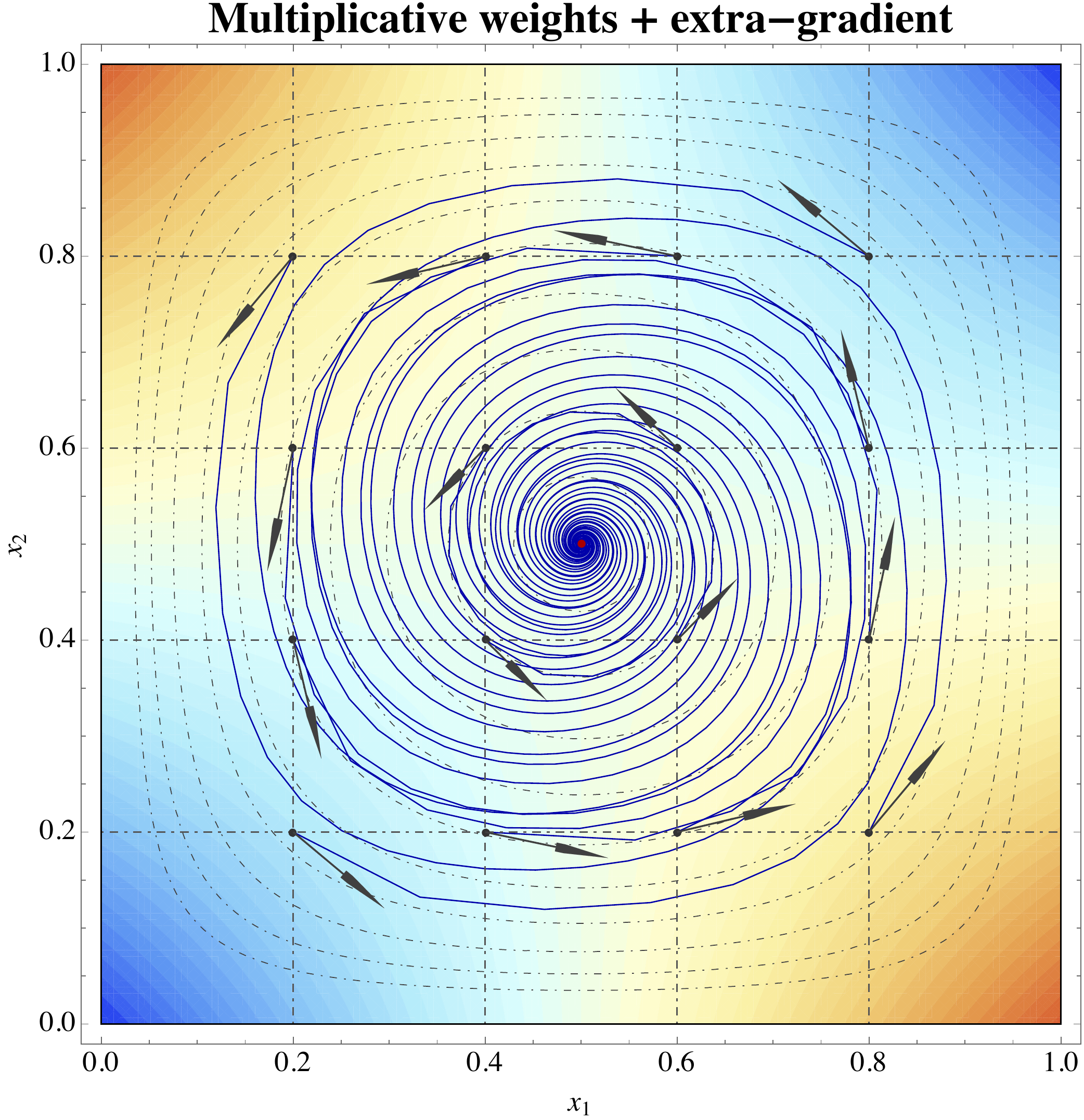}%
\caption{%
Trajectories of vanilla and optimistic \acl{MD} in a zero-sum game of Matching Pennies (left and right respectively).
Colors represent the contours of the objective, $\obj(\state_{1},\state_{2}) = (\state_{1} - 1/2) (\state_{2} - 1/2)$.
}%
\vspace{-2ex}
\label{fig:portraits-bilinear}
\end{figure}

We make this intuition precise below (for a schematic illustration, see also \cref{fig:portraits} above):

\begin{proof}[Proof of \cref{prop:finite}]
Write $\payv_{1}(\state) = -\mat \state_{2}$ and $\payv_{2}(\state) = \state_{1}^{\top}\mat$ for the players' payoff vectors under the mixed strategy profile $\state = (\state_{1},\state_{2})$.
By construction, we have $\gvec(\state) = -(\payv_{1}(\state),\payv_{2}(\state))$.
Furthermore, since $\sol$ is an interior equilibrium of $\obj$, elementary game-theoretic considerations show that $\payv_{1}(\sol)$ and $\payv_{2}(\sol)$ are both proportional to the constant vector of ones.
We thus get
\begin{flalign}
\braket{\gvec(\state)}{\state-\sol}
	&= \braket{\payv_{1}(\state)}{\state_{1}-\sol_{1}}
	+ \braket{\payv_{2}(\state)}{\state_{2} - \sol_{2}}
	\notag\\
	&= -\state_{1}^{\top} \mat \state_{2} + (\sol_{1})^{\top} \mat \state_{2}
	+ \state_{1}^{\top} \mat \state_{2} - \state_{1}^{\top}\mat \sol_{2}
	\notag\\
	&= 0,
\end{flalign}
where, in the last line, we used the fact that $\sol$ is interior.
This shows that $\obj$ satisfies null coherence, so our claim follows from \cref{thm:MD}\textpar{\ref{itm:null}}.

For our second claim, arguing as above and using \eqref{eq:Bregman-old2new-alt}, we get
\begin{flalign}
\breg(\sol,\act_{\run+1})
	&\leq \breg(\sol,\act_{\run})
	+ \step_{\run}\braket{\gvec(\act_{\run})}{\act_{\run} - \sol}
	+ \frac{\step_{\run}^{2}}{2\strong} \dnorm{\gvec(\act_{\run})}^{2}
	\notag\\
	&\leq \breg(\sol,\act_{\run}) + \frac{\step_{\run}^{2}\gbound^{2}}{2\strong}
\end{flalign}
with $\gbound = \max_{\state_{1}\in\feas_{1},\state_{2}\in\feas_{2}} \dnorm{(-\mat \state_{2},\state_{1}^{\top}\mat)}$.
Telescoping this last bound yields
\begin{equation}
\sup_{\run} \breg(\sol,\act_{\run})
	\leq \breg(\sol,\act_{\start}) + \sum_{\runalt=\start}^{\infty} \frac{\step_{\run}^{2}\gbound^{2}}{2\strong}
	< \infty,
\end{equation}
so $\breg(\sol,\act_{\run})$ is also bounded from above.
Therefore, with $\breg(\sol,\act_{\run})$ nondecreasing, bounded from above and $\breg(\sol,\act_{\start})>0$, it follows that $\lim_{\run\to\infty} \breg(\sol,\act_{\run}) > 0$, as claimed.
\end{proof}

\section{Convergence analysis of \acl{OMD}}
\label{app:OMD}

We now turn to the \acdef{OMD} algorithm, as defined by the recursion
\begin{equation}
\tag{OMD}
\begin{aligned}
\act_{\run+1/2}
	&= \prox_{\act_{\run}}(-\step_{\run} \est\gvec_{\run})
	\\
\act_{\run+1}
	&= \prox_{\act_{\run}}(-\step_{\run} \est\gvec_{\run+1/2})
\end{aligned}
\end{equation}
with
$\act_{\start}$ initialized arbitrarily in $\dom\subd\hreg$,
and
$\est\gvec_{\run}$, $\est\gvec_{\run+1/2}$ representing gradient oracle queries at the incumbent and intermediate states $\act_{\run}$ and $\act_{\run+1/2}$ respectively.

The heavy lifting for our analysis is provided by \cref{prop:extra}, which leads to the following crucial lemma:

\begin{lemma}
\label{lem:descent}
Suppose that \eqref{eq:SP} is coherent and $\gvec$ is $\Lip$-Lipschitz continuous.
With notation as above and exact gradient input \textpar{$\noisedev=0$}, we have
\begin{equation}
\label{eq:descent}
\breg(\sol,\act_{\run+1})
	\leq \breg(\sol,\act_{\run})
	- \frac{1}{2} \parens*{\strong - \frac{\step_{\run}^{2}\Lip^{2}}{\strong}} \norm{\act_{\run+1/2} - \act_{\run}}^{2},
\end{equation}
for every solution $\sol$ of \eqref{eq:SP}.
\end{lemma}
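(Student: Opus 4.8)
The plan is to apply the two-point bound \eqref{eq:2points-bound} of \cref{prop:extra} directly to a single iteration of \eqref{eq:OMD}, with the substitutions dictated by the algorithm's structure. Concretely, I would take $\state = \act_{\run}$, $\base = \sol$, and set the two dual vectors equal to the gradients that drive the half-step and the full step, namely $\dstate_{1} = -\step_{\run}\gvec(\act_{\run})$ and $\dstate_{2} = -\step_{\run}\gvec(\act_{\run+1/2})$. With exact gradient input, these choices give $\new\state_{1} = \prox_{\act_{\run}}(-\step_{\run}\gvec(\act_{\run})) = \act_{\run+1/2}$ and $\new\state_{2} = \prox_{\act_{\run}}(-\step_{\run}\gvec(\act_{\run+1/2})) = \act_{\run+1}$, so \eqref{eq:2points-bound} reads
\begin{equation*}
\breg(\sol,\act_{\run+1})
	\leq \breg(\sol,\act_{\run})
	- \step_{\run}\braket{\gvec(\act_{\run+1/2})}{\act_{\run+1/2} - \sol}
	+ \frac{\step_{\run}^{2}}{2\strong}\dnorm{\gvec(\act_{\run}) - \gvec(\act_{\run+1/2})}^{2}
	- \frac{\strong}{2}\norm{\act_{\run+1/2} - \act_{\run}}^{2}.
\end{equation*}

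The rest is a matter of controlling the two middle terms. For the linear term, the key observation is that the gradient is sampled at the look-ahead point $\act_{\run+1/2}$, so coherence applies \emph{there}: the inequality \eqref{eq:VI} evaluated at $\state = \act_{\run+1/2}$ gives $\braket{\gvec(\act_{\run+1/2})}{\act_{\run+1/2} - \sol} \geq 0$, whence $-\step_{\run}\braket{\gvec(\act_{\run+1/2})}{\act_{\run+1/2} - \sol}$ is nonpositive and may simply be discarded. For the quadratic error term, I would invoke the $\Lip$-Lipschitz continuity of $\gvec$ to estimate $\dnorm{\gvec(\act_{\run}) - \gvec(\act_{\run+1/2})} \leq \Lip\norm{\act_{\run} - \act_{\run+1/2}}$, bounding that term by $\frac{\step_{\run}^{2}\Lip^{2}}{2\strong}\norm{\act_{\run+1/2} - \act_{\run}}^{2}$.

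Substituting these two estimates and merging the two surviving $\norm{\act_{\run+1/2} - \act_{\run}}^{2}$ contributions then yields exactly
\begin{equation*}
\breg(\sol,\act_{\run+1})
	\leq \breg(\sol,\act_{\run})
	- \frac{1}{2}\parens*{\strong - \frac{\step_{\run}^{2}\Lip^{2}}{\strong}}\norm{\act_{\run+1/2} - \act_{\run}}^{2},
\end{equation*}
as claimed. There is no serious obstacle here beyond selecting the substitutions correctly; the one conceptual point worth flagging is that the discarding of the linear term hinges on coherence being invoked at the intermediate iterate $\act_{\run+1/2}$ rather than at $\act_{\run}$. This is precisely the feature that distinguishes \eqref{eq:OMD} from \eqref{eq:MD}, and it is what will later let the step-size condition $\step_{\run} < \strong/\Lip$ render the parenthetical factor strictly positive, turning \eqref{eq:descent} into a genuine descent inequality for $\breg(\sol,\cdot)$.
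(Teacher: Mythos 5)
Your proposal is correct and follows the paper's own proof essentially verbatim: the same substitutions into the two-point bound \eqref{eq:2points-bound} of \cref{prop:extra}, the same use of coherence (\ie the Minty inequality \eqref{eq:VI}) at the intermediate iterate $\act_{\run+1/2}$ to discard the linear term, and the same Lipschitz estimate to absorb the quadratic error into the $\norm{\act_{\run+1/2}-\act_{\run}}^{2}$ term. Nothing to add.
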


\begin{proof}
Substituting
$\state \leftarrow \act_{\run}$,
$\dstate_{1} \leftarrow -\step_{\run}\gvec(\act_{\run})$,
and
$\dstate_{2} \leftarrow -\step_{\run}\gvec(\act_{\run+1/2})$
in \cref{prop:extra}, we obtain the estimate:
\begin{flalign}
\breg(\sol,\act_{\run+1})
	&\leq \breg(\sol,\act_{\run})
	- \step_{\run} \braket{\gvec(\act_{\run+1/2})}{\act_{\run+1/2} - \sol}
	\notag\\
	&+ \frac{\step_{\run}^{2}}{2\strong} \dnorm{\gvec(\act_{\run+1/2}) - \gvec(\act_{\run})}^{2}
	- \frac{\strong}{2} \norm{\act_{\run+1/2} - \act_{\run}}^{2}
	\notag\\
	&\leq \breg(\sol,\act_{\run})
	+ \frac{\step_{\run}^{2}\Lip^{2}}{2\strong} \norm{\act_{\run+1/2} - \act_{\run}}^{2}
	- \frac{\strong}{2} \norm{\act_{\run+1/2} - \act_{\run}}^{2},
\end{flalign}
where, in the last line, we used the fact that $\sol$ is a solution of \eqref{eq:SP}/\eqref{eq:VI}, and that $\gvec$ is $\Lip$-Lipschitz.
\end{proof}

We are now finally in a position to prove \cref{thm:OMD} (reproduced below for convenience):

\begin{theorem*}
Suppose that \eqref{eq:SP} is coherent and $\gvec$ is $\Lip$-Lipschitz continuous.
If \eqref{eq:OMD} is run with exact gradient input and a step-size sequence $\step_{\run}$ such that
\begin{equation}
\label{eq:step}
\txs
0
	< \lim_{\run\to\infty}\step_{\run}
	\leq \sup_{\run} \step_{\run}
	< \strong/\Lip,
\end{equation}
the sequence $\act_{\run}$ converges monotonically to a solution $\sol$ of \eqref{eq:SP}, \ie $\breg(\sol,\act_{\run})$ is non-increasing and converges to $0$.
\end{theorem*}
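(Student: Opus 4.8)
The plan is to let the descent estimate of \cref{lem:descent} carry the entire argument. The first thing I would record is that the step-size hypothesis \eqref{eq:step} makes the bracketed coefficient in \eqref{eq:descent} uniformly positive: writing $\bar\step = \sup_{\run}\step_{\run} < \strong/\Lip$, one has $\strong - \step_{\run}^{2}\Lip^{2}/\strong \ge \strong - \bar\step^{2}\Lip^{2}/\strong =: c > 0$ for every $\run$. Hence, for each solution $\sol$ of \eqref{eq:SP}, \eqref{eq:descent} reads $\breg(\sol,\act_{\run+1}) \le \breg(\sol,\act_{\run}) - \tfrac{c}{2}\norm{\act_{\run+1/2}-\act_{\run}}^{2}$. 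Two consequences are immediate: first, $\breg(\sol,\act_{\run})$ is non-increasing, hence convergent, for every solution $\sol$; second, telescoping gives $\tfrac{c}{2}\sum_{\run}\norm{\act_{\run+1/2}-\act_{\run}}^{2} \le \breg(\sol,\act_{\start}) < \infty$, so that $\norm{\act_{\run+1/2}-\act_{\run}}\to 0$.

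Next I would invoke compactness of $\feas$ to extract a subsequence $\act_{\run_{k}}\to\acc\state\in\feas$; passing to a further subsequence if necessary, $\step_{\run_{k}}\to\step_{\infty}>0$ by \eqref{eq:step}. Since $\norm{\act_{\run+1/2}-\act_{\run}}\to0$, the intermediate states also satisfy $\act_{\run_{k}+1/2}\to\acc\state$. The crux is to show that $\acc\state$ is a solution of \eqref{eq:SP}. For this I would return to the defining relation $\act_{\run+1/2}=\prox_{\act_{\run}}(-\step_{\run}\gvec(\act_{\run}))$ and combine the prox characterization \eqref{eq:links-prox} with \eqref{eq:selection}: for every test point $\base\in\feas$,
\[
\step_{\run}\braket{\gvec(\act_{\run})}{\act_{\run+1/2}-\base}
\le \braket{\nabla\hreg(\act_{\run})-\nabla\hreg(\act_{\run+1/2})}{\act_{\run+1/2}-\base}.
\]
Dividing by $\step_{\run_{k}}$ and letting $k\to\infty$, the right-hand side vanishes (continuity of $\nabla\hreg$, boundedness of $\feas$, and $\step_{\run_{k}}$ bounded away from $0$), while the left-hand side converges by continuity of $\gvec$; this yields $\braket{\gvec(\acc\state)}{\base-\acc\state}\ge0$ for all $\base\in\feas$, \ie $\acc\state$ solves the associated variational inequality. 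Invoking coherence then identifies $\acc\state$ as a genuine \acl{SP} of $\obj$.

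With $\acc\state$ now known to be a solution, I would close the argument by re-running \eqref{eq:descent} with $\sol=\acc\state$: the Bregman divergence $\breg(\acc\state,\act_{\run})$ is non-increasing, hence convergent. Along the subsequence, Bregman reciprocity \eqref{eq:reciprocity} gives $\breg(\acc\state,\act_{\run_{k}})\to0$, and a non-increasing sequence admitting a subsequence that tends to $0$ must tend to $0$ in full; thus $\breg(\acc\state,\act_{\run})$ decreases monotonically to $0$. Finally, \eqref{eq:Bregman-lower} converts this into $\norm{\act_{\run}-\acc\state}\to0$, so $\act_{\run}\to\acc\state$ with $\breg(\acc\state,\act_{\run})$ non-increasing, exactly as claimed.

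I expect the main obstacle to be the identification of the subsequential limit $\acc\state$ as a bona fide solution of \eqref{eq:SP}. The prox step only delivers a first-order (Stampacchia-type) stationarity condition at $\acc\state$, and it is precisely the coherence hypothesis that does the essential work of upgrading this to membership in the solution set of \eqref{eq:SP}. The accompanying technical subtlety is to carry out the limit so that the regularizer residual $\nabla\hreg(\act_{\run_{k}})-\nabla\hreg(\act_{\run_{k}+1/2})$ is annihilated while the $\gvec$-term survives, which is exactly where $\step_{\run}$ bounded away from $0$ and the vanishing of $\norm{\act_{\run+1/2}-\act_{\run}}$ are both needed; everything else is routine bookkeeping off \cref{lem:descent}.
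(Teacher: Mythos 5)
Your argument is correct and follows essentially the same route as the paper's proof: \cref{lem:descent} gives monotonicity of $\breg(\sol,\act_{\run})$ and summability of $\norm{\act_{\run+1/2}-\act_{\run}}^{2}$, compactness yields an accumulation point, that point is identified as a solution of the variational inequality (hence of \eqref{eq:SP} by coherence), and Bregman reciprocity upgrades subsequential to full convergence of the non-increasing divergence. The only, immaterial, difference is in the identification step: the paper passes to the limit in the prox fixed-point relation via continuity of the prox-mapping, whereas you pass to the limit in the first-order optimality inequality of the prox step \textendash\ both land on the same Stampacchia-type condition and both rely on the same implicit regularity of $\nabla\hreg$ (equivalently, of $\prox$) at the accumulation point.
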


\begin{proof}
Let $\sol$ be a solution of \eqref{eq:SP}.
Then, by the stated assumptions for $\step_{\run}$, \cref{lem:descent} yields
\begin{flalign}
\breg(\sol,\act_{\run+1})
	\leq \breg(\sol,\act_{\run})
	- \frac{1}{2} \strong (1 - \alpha^{2}) \norm{\act_{\run+1/2} - \act_{\run}}^{2},
\end{flalign}
where $\alpha\in(0,1)$ is such that $\step_{\run}^{2} < \alpha \strong/\Lip$ for all $\run$ (that such an $\alpha$ exists is a consequence of the assumption that $\sup_{\run} \step_{\run} < \strong/\Lip$).
This shows that $\breg(\sol,\act_{\run})$ is non-decreasing for every solution $\sol$ of \eqref{eq:SP}.

Now, telescoping \eqref{eq:descent}, we obtain
\begin{equation}
\breg(\sol,\act_{\run+1})
	\leq \breg(\sol,\act_{\start})
	- \frac{1}{2} \sum_{\runalt=\start}^{\run} \parens*{\strong - \frac{\step_{\runalt}^{2}\Lip^{2}}{\strong}} \norm{\act_{\runalt+1/2} - \act_{\runalt}}^{2},
\end{equation}
and hence:
\begin{equation}
\sum_{\runalt=\start}^{\run} \parens*{1 - \frac{\step_{\runalt}^{2} \Lip^{2}}{\strong^{2}}} \norm{\act_{\runalt+1/2} - \act_{\runalt}}^{2}
	\leq \frac{2}{\strong} \breg(\sol,\act_{\start}).
\end{equation}
With $\sup_{\run} \step_{\run} < \strong/\Lip$, the above estimate readily yields $\sum_{\run=1}^{\infty} \norm{\act_{\run+1/2} - \act_{\run}}^{2} < \infty$, which in turn implies that $\norm{\act_{\run+1/2} - \act_{\run}} \to 0$ as $\run\to\infty$.

By the compactness of $\feas$, we further infer that $\act_{\run}$ admits an accumulation point $\acc$, \ie there exists a subsequence $\run_{\runalt}$ such that $\act_{\run_{\runalt}} \to \acc$ as $\runalt\to\infty$.
Since $\norm{\act_{\run_{\runalt}+1/2} - \act_{\run_{\runalt}}} \to 0$, this also implies that $\act_{\run_{\runalt}+1/2}$ converges to $\acc$ as $\runalt\to\infty$.
Further, by passing to a subsequence if necessary, we may also assume without loss of generality that $\step_{\run_{\runalt}}$ converges to some limit value $\step>0$.
Then, by the Lipschitz continuity of the prox-mapping (\cf \cref{prop:extra}), we readily obtain
\begin{equation}
\acc
	= \lim_{\runalt\to\infty} \act_{\run_{\runalt}+1/2}
	= \lim_{\runalt\to\infty} \prox_{\act_{\run_{\runalt}}}(\act_{\run_{\runalt}} - \step_{\run_{\runalt}} \gvec(\act_{\run_{\runalt}}))
	= \prox_{\acc}(\acc - \step \gvec(\acc)),
\end{equation}
\ie $\acc$ is a solution of \eqref{eq:VI} \textendash\ and, hence, \eqref{eq:SP}.
Since $\breg(\acc,\act_{\run})$ is nonincreasing and $\liminf_{\run\to\infty} \breg(\acc,\act_{\run}) = 0$ (by the Bregman reciprocity requirement), we conclude that $\liminf_{\run\to\infty} \breg(\acc,\act_{\run}) = 0$, \ie $\act_{\run}$ converges to $\acc$.
Since $\acc$ is a solution of \eqref{eq:SP}, our proof is complete.
\end{proof}

Our last result concerns the convergence of \eqref{eq:OMD} in strictly coherent problems with a stochastic gradient oracle:

\begin{proof}[Proof of \cref{thm:OMD-strict}]
Our argument hinges on the inequality
\begin{equation}
\breg(\sol,\act_{\run+1})
	\leq \breg(\sol,\act_{\run})
	- \step_{\run} \braket{\est\gvec_{\run+1/2}}{\act_{\run+1/2} - \sol}
	+ \step_{\run}^{2} / (2\strong) \, \dnorm{\est\gvec_{\run+1/2} - \est\gvec_{\run}}^{2}
\end{equation}
which is obtained from the two-point estimate \eqref{eq:2points-bound} by substituting
$\state \leftarrow \sol$,
$\state_{1} \leftarrow \act_{\run}$,
$\dstate_{1} \leftarrow \est\gvec_{\run}$,
$\new\state_{1}\leftarrow \act_{\run+1/2} = \prox_{\act_{\run}}(-\step_{\run}\est\gvec_{\run})$,
$\dstate_{2} \leftarrow \est\gvec_{\run+1/2}$,
and
$\new\state_{2} \leftarrow \act_{\run} = \prox_{\act_{\run}}(-\step_{\run} \est\gvec_{\run+1/2})$.
Then, working as in the proof of \cref{prop:dichotomy}, we obtain the following estimate for the sequence $\breg_{\run} = \breg(\sol,\act_{\run})$:
\begin{flalign}
\label{eq:OMD-basicbound}
\breg_{\run+1}
	&\leq \breg_{\run}
	- \step_{\run} \braket{\gvec(\act_{\run+1/2})}{\act_{\run+1/2} - \sol}
	- \step_{\run} \braket{\new\noise_{\run+1}}{\act_{\run} - \sol}
	+ \frac{\step_{\run}^{2}}{2\strong} \dnorm{\est\gvec_{\run+1/2} - \est\gvec_{\run}}^{2}
	\notag\\
	&\leq \breg_{\run}
	+ \step_{\run} \new\snoise_{\run+1}
	+ \frac{\step_{\run}^{2}}{\strong} \bracks*{\dnorm{\est\gvec_{\run}}^{2} + \dnorm{\est\gvec_{\run+1/2}^{2}}},
\end{flalign}
where $\new\noise_{\run+1} = \est\gvec_{\run+1/2} - \gvec(\act_{\run+1/2})$ denotes the martingale part of $\est\gvec_{\run+1/2}$ and we have set $\new\snoise_{\run+1} = \braket{\new\noise_{\run+1}}{\act_{\run+1/2} - \sol}$.
Since $\exof{\dnorm{\gvec_{\run}}^{2} \given \act_{\run},\dotsc,\act_{\start}}$ and $\exof{\dnorm{\gvec_{\run+1/2}}^{2} \given \act_{\run+1/2},\dotsc,\act_{\start}}$ are both bounded by $\gbound^{2}$, we get the bound
\begin{equation}
\exof{\breg_{\run+1} \given \filter_{\run}}
	\leq \breg_{\run} + \frac{\gbound}{\strong} \step_{\run}^{2}.
\end{equation}
Then, following the same steps as in the proof of \cref{prop:dichotomy}, it follows that $\breg_{\run}$ converges to some limit value $\breg_{\infty}$.

To proceed, telescoping \eqref{eq:OMD-basicbound} also yields
\begin{equation}
\breg_{\run+1}
	\leq \breg_{\start}
	- \sum_{\runalt=\start}^{\run} \step_{\runalt} \braket{\gvec(\act_{\runalt+1/2})}{\act_{\runalt+1/2} - \sol}
	+ \sum_{\runalt=\start}^{\run} \step_{\runalt} \new\snoise_{\runalt+1}
	+ \sum_{\runalt=\start}^{\run} \frac{\step_{\runalt}^{2}}{2\strong} \dnorm{\est\gvec_{\runalt+1/2} - \est\gvec_{\runalt}}^{2}.
\end{equation}
Each term in the above bound can be controlled in the same way as the corresponding terms in \eqref{eq:Bregman-run}.
Thus, repeating the steps in the proof of \cref{prop:subsequence}, it follows that there exists a subsequence of $\act_{\run+1/2}$ (and hence also of $\act_{\run}$) which converges to $\sol$.

Our claim then follows by combining the two intermediate results above in the same way as in the proof of \cref{thm:MD}\textpar{\ref{itm:strict}};
to avoid needless repetition, we omit the details.
\end{proof}

\section{Experimental results}
\label{app:experiments}

\subsection{Adam with extra-gradient step}

For most of our experiments, the method that seemed to generate the best results was Adam and its optimistic version \citep{DISZ18};
for a pseudocode iplementation, see \cref{alg:extra_Adam} below.
We also noticed empirically that it was more efficient to use two different sets of moment estimates  $(m_t,v_t)$ and $(m_t',v_t')$ for the first and the second gradient steps.
We used this algorithm for our experiments with both \acp{GMM} and the CelebA/CIFAR-10 datasets.

\begin{algorithm}
\flushleft
Compute stochastic gradient: $\nabla_{\theta,t}$\;\\
Update biased estimate of 1st momentum: $m_t=\beta_1m_{t-1}+(1-\beta_1) \nabla_{\theta,t}$\;\\
Update biased estimate of 2nd momentum: $v_t=\beta_2v_{t-1}+(1-\beta_2) \nabla_{\theta,t}^2$ \; \\
Compute bias corrected 1st moment: $\hat{m}_t = \frac{m_t}{1-\beta_1^t}$\;\\
Compute bias corrected 2nd moment: $\hat{v}_t = \frac{v_t}{1-\beta_2^t}$\;\\
Perform: $\theta^{'}_t=\theta_{t-1}-\eta \frac{\hat{m}_t}{\sqrt{\hat{v}_t}+\epsilon}$\;\\
Compute stochastic gradient: $\nabla_{\theta',t}$\;\\
Update biased estimate of 1st momentum: $m_t'=\beta_1m_{t-1}'+(1-\beta_1) \nabla_{\theta',t}$\;\\
Update biased estimate of 2nd momentum: $v_t'=\beta_2v_{t-1}'+(1-\beta_1) \nabla_{\theta',t}^2$ \; \\
Compute bias corrected 1st moment: $\hat{m}_t' = \frac{m_t'}{1-\beta_1^t}$\;\\
Compute bias corrected 2nd moment: $\hat{v}_t' = \frac{v_t'}{1-\beta_1^t}$\;\\
Perform: $\theta_t=\theta_{t-1}-\eta' \frac{\hat{m}'_t}{\sqrt{\hat{v}'_t}+\epsilon}$\;\\
Return $\theta_t$
    \caption{Adam with extra-gradient add-on (optimistic Adam)}
\label{alg:extra_Adam}
\end{algorithm}

\subsection{Experiments with standards datasets}


In this section we present the results of our image experiments using \ac{OMD} training techniques.
Inception and FID scores obtained by our model during training were reported in \cref{fig:scores}:
as can be seen there, the extra-gradient add-on improves the performance of \ac{GAN} training and efficiently stabilizes the model;
without the extra-gradient step, performance tends to drop noticeably after approximately $100k$ steps.

For ease of comparison, we provide below a collection of samples generated by Adam and optimistic Adam in the CelebA and CIFAR-10 datasets.
Especially in the case of CelebA, the generated samples are consistently more representative and faithful to the target data distribution.


\begin{figure}[t]
\centering
\begin{subfigure}{\textwidth}
\centering
\includegraphics[width=.46\textwidth]{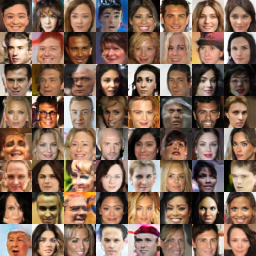}
\hfill
\includegraphics[width=.46\textwidth]{Figures/faces.png}
\caption{Vanilla versus optimistic Adam training in the CelebA dataset (left and right respectively).}
\medskip
\label{fig:CelebA}
\end{subfigure}
\begin{subfigure}{\textwidth}
\centering
\includegraphics[width=.46\textwidth]{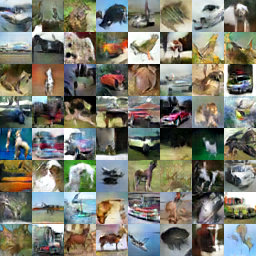}
\hfill
\includegraphics[width=.46\textwidth]{Figures/cifar.png}
\caption{Vanilla versus optimistic Adam training in the CIFAR-10 dataset (left and right respectively).}
\label{fig:test-trajectories}
\label{fig:CIFAR}
\end{subfigure}
\caption{\ac{GAN} training with and without an extra-gradient step in the CelebA and CIFAR-10 datasets.
}
\label{fig:comparison}
\end{figure}


%
%
%
%
%
%
%

\subsubsection{Network Architecture and hyperparameters}

For the reproducibility of our experiments, we provide \cref{archi2} and \cref{table:hyper} the network architectures and the hyperparameters of the \acp{GAN} that we used.
The architecture employed is a standard DCGAN architecture with a $5$-layer generator with batchnorm, and an $8$-layer discriminator.
The generated samples were 32$\times$32$\times$3 RGB images.

\begin{table}[tbp]
\centering
\caption{Generator and discriminator architectures for our images experiments}
\label{archi2}
\begin{tabular}{c}
\hline
Generator                 \\ \hline
latent space 100 (gaussian noise)                            \\
dense 4 $\times$ 4 $\times$ 512 batchnorm ReLU  \\
4$\times$4 conv.T stride=2 256 batchnorm ReLU \\
4$\times$4 conv.T stride=2 128 batchnorm ReLU \\
4$\times$4 conv.T stride=2 64 batchnorm ReLU \\
4$\times$4 conv.T stride=1 3 weightnorm tanh \\ 
\\ \hline
Discriminator                 \\ \hline
Input Image 32$\times$32$\times$3 \\ 
3$\times$3 conv. stride=1 64 lReLU \\
3$\times$3 conv. stride=2 128 lReLU\\
3$\times$3 conv. stride=1 128 lReLU\\
3$\times$3 conv. stride=2 256 lReLU\\
3$\times$3 conv. stride=1 256 lReLU\\
3$\times$3 conv. stride=2 512 lReLU\\
3$\times$3 conv. stride=1 512 lReLU\\
dense 1
\end{tabular}
\end{table}

\begin{table}[tbp]
\caption{Image experiments settings}
\label{table:hyper}
\begin{center}
\begin{tabular}{ l } 
 \toprule
    batch size = 64
    Adam learning rate = 0.0001 \\
    Adam $\beta_{1}=0.0$ \\
    Adam $\beta_{2}=0.9$ \\
    max iterations = 200000 \\
    WGAN-GP $\lambda=1.0$ \\
    WGAN-GP $n_{dis}=1$ \\
    GAN objective = 'WGAN-GP' \\
    Optimizer = 'extra-Adam' or 'Adam' \\
 \bottomrule
\end{tabular}
\end{center}
\end{table}

\bibliographystyle{ormsv080}
\bibliography{IEEEabrv,../bibtex/Bibliography-Saddle,../bibtex/refer,../bibtex/gradient,../bibtex/sample}

\end{document}